\documentclass{article}

% if you need to pass options to natbib, use, e.g.:
%     \PassOptionsToPackage{numbers, compress}{natbib}
% before loading neurips_2022

% ready for submission
%\usepackage{neurips_2022}

% to compile a preprint version, e.g., for submission to arXiv, add add the
% [preprint] option:
%\usepackage[preprint]{neurips_2022}

% to compile a camera-ready version, add the [final] option, e.g.:
\usepackage[final,nonatbib]{neurips_2022}

% to avoid loading the natbib package, add option nonatbib:
%\usepackage[nonatbib]{neurips_2022}

\usepackage{titletoc}
\usepackage[utf8]{inputenc} % allow utf-8 input
\usepackage[T1]{fontenc}    % use 8-bit T1 fonts
\usepackage{hyperref}       % hyperlinks
\usepackage{url}            % simple URL typesetting
\usepackage{booktabs}       % professional-quality tables
\usepackage{amsfonts}       % blackboard math symbols
\usepackage{nicefrac}       % compact symbols for 1/2, etc.
\usepackage{microtype}      % microtypography
\usepackage{xcolor}         % colors
\usepackage{graphicx}
\usepackage[normalem]{ulem}
\usepackage{amsthm}
\usepackage{amsmath}
\usepackage{xcolor}
\usepackage{comment}
\usepackage{yfonts}
\usepackage{amssymb}
\usepackage{bm}

\usepackage[toc,page,header]{appendix}

\newcommand{\R}[0] {\mathbb R}

\newtheorem{theorem}{Theorem}[section]

\newtheorem{lemma}[theorem]{Lemma}
\newtheorem{definition}[theorem]{Definition}

\title{Why neural networks find simple solutions:\\
the many regularizers of geometric complexity}

\author{%
  Benoit Dherin\thanks{equal contribution}\\
  Google\\
  \texttt{dherin@google.com} \\
  \And
  Michael Munn\footnotemark[1]\\
  Google\\
  \texttt{munn@google.com} \\
  \And
  Mihaela Rosca \\
  DeepMind, London \\ University College London \\ \texttt{mihaelacr@deepmind.com}\\  
  \And
  David G.T. Barrett \\
  DeepMind, London \\ \texttt{barrettdavid@deepmind.com}\\
}

\begin{document}

\maketitle

\begin{abstract}%
In many contexts, simpler models are preferable to more complex models and the control of this model complexity is the goal for many methods in machine learning such as regularization, hyperparameter tuning and architecture design. In deep learning, it has been difficult to understand the underlying mechanisms of complexity control, since many traditional measures are not naturally suitable for deep neural networks. Here we develop the notion of geometric complexity, which is a measure of the variability of the model function, computed using a discrete Dirichlet energy. Using a combination of theoretical arguments and empirical results, we show that many common training heuristics such as parameter norm regularization, spectral norm regularization, flatness regularization, implicit gradient regularization, noise regularization and the choice of parameter initialization all act to control geometric complexity, providing a unifying framework in which to characterize the behavior of deep learning models.
\end{abstract}

%%%%%%%%%%%%%%%%%%%%%%%%%%%%%%%%%%%%%%%%%%%%%%%%%
\section{Introduction}
%%%%%%%%%%%%%%%%%%%%%%%%%%%%%%%%%%%%%%%%%%%%%%%%%

Regularization is an essential ingredient in the deep learning recipe and understanding its impact on the properties of the learned solution is a very active area of research  \cite{GoodBengCour16, hoffman2020robust, qin20198efficient, Sokolic2017RobustLM}. Regularization can assume a multitude of forms, either added explicitly as a penalty term in a loss function \cite{GoodBengCour16} or implicitly through our choice of hyperparameters \cite{barrett2021implicit, rosca2021discretization, SeongLKHK18, smith2021on,  SmithQuoc2018}, model architecture \cite{NEURIPS2018_a41b3bb3, lin2017does, ma2020quenching} or initialization  \cite{understanding2010glorot, gunasekar2018characterizing, he2015delving,  Li2018learning, nagarajan2019generalization,  zhang2020type, zou2020gradient}. These forms are generally not designed to be analytically tractable, but in practice, regularization is often invoked in the control of model complexity, putting a pressure on a model to discover simple solutions more so than complex solutions. 

To understand regularization in deep learning, we  need to precisely define model ‘complexity’ for deep neural networks. Complexity theory provides many techniques for measuring the complexity of a model, such as a simple parameter count, or a parameter norm measurement \cite{arora2018stronger, dziugaite2017computing, maddox2020rethinking, neyshabur2017implicit} but many of these measures can be problematic for neural networks \cite{Jiang2020Fantastic, generalization, neyshabur2018role}. The recently observed phenomena of ‘double-descent’ \cite{belkin2021fear, Belkin15849,  deep_double_descent} illustrates this clearly: neural networks with high model complexity, as measured by a parameter count, can fit training data closely (sometimes interpolating the data exactly), while simultaneously having low test error \cite{deep_double_descent, generalization}. Classically, we expect that interpolation of training data is evidence of overfitting, but yet, neural networks seem to be capable of interpolation while also having low test error. It is often suggested that some form of implicit regularization or explicit regularization is responsible for this, but how should we  account for this in theory, and what complexity measure is most appropriate? 

In this work, we develop a measure of model complexity, called Geometric Complexity (GC), that has properties that are suitable for the analysis of deep neural networks. We use theoretical and empirical techniques to demonstrate that many different forms of regularization and other training heuristics can act to control geometric complexity through different mechanisms. We argue that the geometric complexity provides a convenient proxy for neural network performance.

Our primary contributions are:

\begin{itemize}

    \item We develop a computationally tractable notion of complexity (Section \ref{section:complexity}), which we call \emph{Geometric Complexity} (GC), that has many close relationships with many areas in deep learning and mathematics including harmonic function theory, Lipschitz smoothness (Section \ref{section:complexity}),  and regularization theory (Section \ref{section:explicit_reg}).
    
    \item{We provide evidence that common training heuristics keep the geometric complexity low, including: (i) common initialization schemes (Section \ref{section:initialization}) (ii) the use of overparametrized models with a large number of layers (Fig.~\ref{fig:initialization}) (iii) large learning rates, small batch sizes, and implicit gradient regularization (Section \ref{section:implicit_regularization} and Fig. \ref{figure:implicit_regularization}) (iv) explicit parameter norm regularization, spectral norm regularization, flatness regularization, and label noise regularization (Section \ref{section:explicit_reg} and Fig.~\ref{fig:explicit_regularization})}
    
    \item{We show that the geometric complexity captures the double-descent behaviour observed in the test loss as model parameter count increases (Section \ref{section:double_descent} and Fig. \ref{fig:double_descent}).}
    
\end{itemize}
The aim of this paper is to introduce geometric complexity, explore its properties and highlight its connections with existing implicit and explicit regularizers. To disentangle the effects studied here from optimization choices, we use stochastic gradient descent without momentum to train all models. We also study the impact of a given training heuristic on geometric complexity in isolation of other techniques to avoid masking effects. For this reason we do not use data augmentation or learning rate schedules in the main part of the paper. 
In the Supplementary Material (SM) we redo most experiments using SGD with momentum (Section \ref{appendix:additional_experiments_momentum}) and Adam (Section \ref{appendix:additional_experiments_adam}) with very similar conclusion. We also observe the same behavior of the geometric complexity in a setting using learning rate schedule, data augmentation, and explicit regularization in conjunction to improve model performance (Section \ref{appendix:real_life_gc_regularization}).
The exact details of all experiments in the main paper are listed in SM Section \ref{appendix:experiments}. All  additional experiment results and details can be found in SM Section \ref{appendix:additional_experiments}.

%%%%%%%%%%%%%%%%%%%%%%%%%%%%%%%%%%%%%%%%%%%%%%%%%
\section{Geometric complexity and  Dirichlet energy} \label{section:complexity}
%%%%%%%%%%%%%%%%%%%%%%%%%%%%%%%%%%%%%%%%%%%%%%%%%

Although many different forms of complexity measures have been proposed and investigated (e.g., \cite{dziugaite2020search, Jiang2020Fantastic,neyshabur2017implicit}), it is not altogether clear what properties they should have, especially for deep learning. For instance, a number of them like the Rademacher complexity \cite{Koltchinskii99rademacherprocesses}, the VC dimension \cite{vapnik1971on}, or the simple model parameter count focus  on measuring the entire hypothesis space, rather than a specific function, which can be problematic in deep learning \cite{Jiang2020Fantastic, generalization}. Other measures like the number of linear pieces for ReLU networks \cite{arora2018understanding, serra2018bounding} or various versions of the weight matrix norms \cite{neyshabur2017implicit} measure the complexity of the model function independently from the task at hand, which is not desirable \cite{pmlr-v137-rosca20a}. An alternative approach is to learn a complexity measure directly from data \cite{lee2020neural} or to take the whole training procedure over a dataset into account \cite{deep_double_descent}. Recently, other measures focusing on the model function complexity over a dataset have been proposed in \cite{gao2016degree} and \cite{maddox2020rethinking} to help explain the surprising generalization power of deep neural networks.
Following that last approach and motivated by frameworks well established in the field of geometric analysis \cite{jost2008riemannian}, we propose a definition of complexity related to the theory of harmonic functions and minimal surfaces. Our definition has the advantage of being computationally tractable and implicitly regularized by many training heuristics in the case of neural networks. It focuses on measuring the complexity of individual functions rather than that of  the whole function space, which makes it different from the Radamacher or VC complexity.
\begin{definition}\label{definition:geometric_complexity}
Let $g_{\theta}: \mathbb{R}^d \to \mathbb{R}^k$ be a neural network parameterized by $\theta$. We can write $g_\theta(x) = a(f_\theta(x))$ where $a$ denotes the last layer activation, and $f_\theta$ its logit network.  
The GC of the network over a dataset $D$ is defined to be the discrete Dirichlet energy of its logit network:
\begin{equation}\label{eqn:geometric_complexity}
    \langle f_\theta,\, D\rangle_G = \frac{1}{|D|}\sum_{x\in D} \|\nabla_x f_\theta(x)\|_F^2,
\end{equation}
where $\|\nabla_x f_\theta(x)\|_F$ is the Frobenius norm of the network Jacobian. 
\end{definition}
Note that this definition is well-defined for any differentiable model, not only a neural network, and incorporates both the model function and the dataset over which the task is determined. 

Next, we discuss how GC relates to familiar concepts in deep learning.

\paragraph{Geometric complexity and linear models:} Consider a linear transformation $f(x) = Ax + b$ from $\R^d$ to $\R^k$ and a dataset $D = \{x_i\}_{i=1}^N$ where $x_i\in \mathbb{R}^d$. At each point $x\in D$, we have that $\|\nabla_x f(x)\|^2_F = \|A\|_F^2$, hence the GC for a linear transformation is
$
\langle f_\theta, D\rangle_G = \|A\|_F^2.
$
Note, this implies that GC for linear transformations (and more generally, affine maps), is independent of the dataset $D$ and zero for constant functions. 
Furthermore, note that the GC in this setting coincides precisely with the L2 norm of the model weight matrix. Thus, enforcing an L2 norm penalty is equivalent to regularizing the GC for linear models (see Section \ref{section:explicit_reg} for more on that point).

\paragraph{Geometric complexity and ReLU networks:} For a ReLU network $g_\theta:\R^d\rightarrow\R^k$  as defined in Definition \ref{definition:geometric_complexity}, the GC over a dataset $D$ has a very intuitive form. Since a ReLU network parameterizes piece-wise linear functions \cite{arora2018understanding}, the domain can be broken into a partition of subsets $X_i\subset \mathbb{R}^d $ where $f_\theta$ is an affine map $A_i x + b_i$. Now denote by $D_i$ the points in the dataset $D$ that fall in the  linear piece defined on $X_i$. For every point $x$ in $X_i$, we have that $\|\nabla_x f_\theta(x)\|_F^2 = \|A_i\|^2_F$. Since the $D_i$'s partition the dataset $D$, we obtain 
\begin{equation}\label{eq:relu_inference_bias}
    \langle f_\theta, D\rangle_G = \sum_i \left(\frac{n_i}{|D|}\right) \|A_i\|^2_F,
\end{equation}
where $n_i$ is the number of points in the dataset $D$ falling in $X_i$.
We see from Eqn.~\eqref{eq:relu_inference_bias} that for ReLU networks the GC over the whole dataset coincides exactly with the GC on a batch $B\subset D$, provided that the proportion of points in the batch falling into each of the the linear pieces are preserved.  This makes the evaluation of the GC on large enough batches a very good proxy to the overall GC over the dataset, and computationally tractable during training. 

\paragraph{Geometric complexity and Lipschitz smoothness:}
One way to measure the smoothness of a  function $f:\R^d\rightarrow \R^k$ on a subset $X\subset \R^d$ is by its Lipschitz constant; i.e., the smallest $f_L \geq 0$ such that
$
\|f(x_1) -  f(x_2)\| \leq f_L \|x_1 - x_2\|,
$
for all $x_1,\,x_2\in X$. Intuitively, the constant $f_L$ measures the maximal amount of variation allowed by $f$ when the inputs change by a given amount. 
Using the Lipschitz constant, one can define a complexity measure of a function $f$ as the Lipschitz constant $f_L$ of the function over the input domain $\R^d$.
Since $\|\nabla_x f(x)\|^2_F \leq \min(k, d)\|\nabla_x f(x)\|^2_{op} \leq \min(k,d) f_L^2$ where $\|\cdot\|_{op}$ is the operator norm, we obtain a general bound on the GC by the Lipschitz complexity:
\begin{equation}\label{equation:lipschitz_bounds_geometric}
\langle f, D\rangle_G ~=~
\frac{1}{|D|}\sum_{x\in D} \|\nabla_x f(x)\|_F^2 
~\leq ~\min(k, d) f_L^2.
\end{equation}
While the Lipschitz smoothness of a function provides an upper bound on GC, there are a few fundamental differences between the two quantities. Firstly, GC is data dependent: a model can have low  GC while having high Lipschitz constant due to the model not being smooth in parts of the space where there is no training data. Secondly, the GC can be computed exactly given a model and dataset, while for neural networks only loose upper bounds are available to estimate the Lipschitz constant.

\paragraph{Geometric complexity, arc length and harmonic maps:}
Let us start with a motivating example: Consider a dataset consisting of 10 points lying on a parabola in the plane and a large ReLU deep neural network trained via gradient descent to learn a function $f:\mathbb{R} \to \mathbb{R}$ that fits this dataset (Fig. \ref{fig:training_sequence}).
\begin{figure}[h]
  \centering
  \includegraphics[width=0.99
  \linewidth]{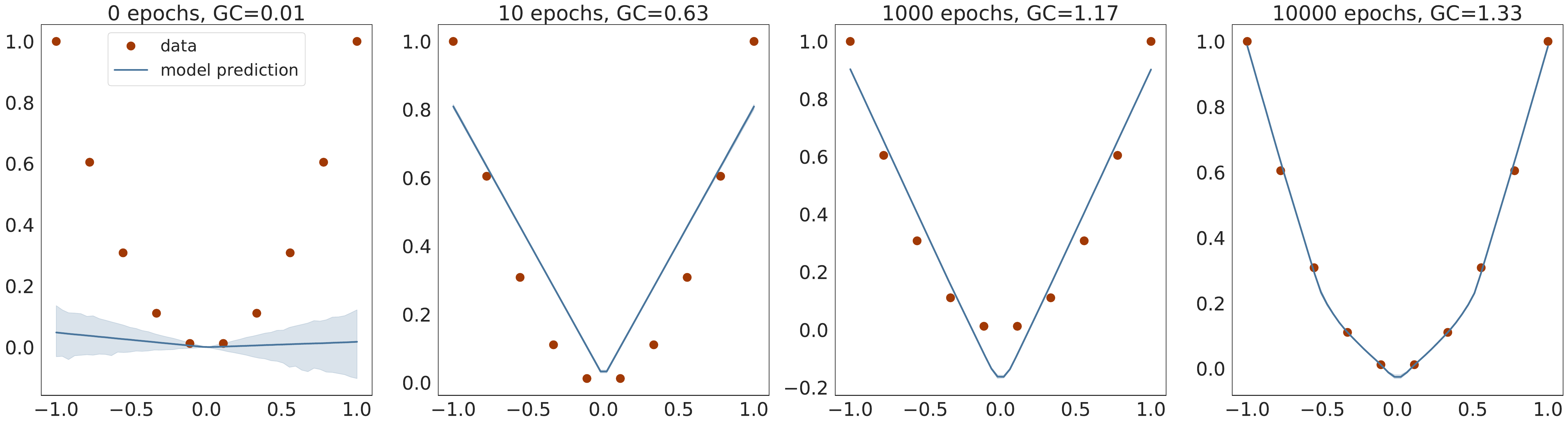}
  \caption{For a large MLP fitting 10 points, the complexity of the function being learned gradually grows in training, while avoiding unnecessary complexity by keeping the function arc length minimal.}
  \label{fig:training_sequence}
\end{figure}
Throughout training the model function seems to attain minimal arc length for a given level of training error. Recalling the formula for arc length of $f$, which is the integral of over $[-1, 1]$ of  $\sqrt{1+ f'(x)^2}$, and using the Taylor approximation $\sqrt{1+ x^2} \approx 1 + \frac{x^2}{2}$, it follows that minimizing the arc length is equivalent to minimizing the classic Dirichlet energy:
\begin{equation}\label{eqn:classicDE}
    E(f) = \dfrac{1}{2}\int_{\Omega} \|\nabla_x f(x)\|^2 dx.
\end{equation}
where $\Omega = [-1, 1]$. The Dirichlet energy can be thought intuitively of as a measure of the variability of the function $f$ on $\Omega$. Its minimizers, subject to a boundary condition $f_{|\partial \Omega} = h$, are called \emph{harmonic maps}, which are maps causing the ``least intrinsic stretching'' of the domain $\Omega$ \cite{solomon2013dirichlet}. The geometric complexity in Definition \ref{definition:geometric_complexity} is an unbiased estimator of a very related quantity
\begin{eqnarray} \label{eq:inference_bias}
\mathbb E_X(\|\nabla_x f_\theta(x)\|_F^2) = \int_{\R^d}\|\nabla_x f_\theta(x)\|_F^2\, p_X(x) dx,
\end{eqnarray}
where the domain $\Omega$ is replaced by the probability distribution of the features $p_X$ and the boundary condition is replaced by the dataset $D$. We could call the quantity defined in Eqn. \eqref{eq:inference_bias} the \emph{theoretical geometric complexity} as opposed to the \emph{empirical geometric complexity} in Definition \ref{definition:geometric_complexity}. The theoretical geometric complexity is very close to a complexity measure investigated in \cite{novak2018sensitivity}, where the Jacobian of the full network is considered rather than just the logit network as we do here. In their work, the expectation is also evaluated on the test distribution. They observe a correlation between this complexity measure and generalization empirically in a set of extensive experiments. In our work, we use the logit network (rather than the full network) and we evaluate the empirical geometric complexity on the train set (rather than on the test set) in order to derive theoretically that the implicit gradient regularization mechanism from \cite{barrett2021implicit} creates a regularizing pressure on GC (see Section \ref{section:implicit_regularization}).

In the remaining sections we provide evidence
that common training heuristics do indeed keep the GC low, encouraging neural networks to find intrinsically simple solutions.

%%%%%%%%%%%%%%%%%%%%%%%%%%%%%%%%%%%%%%%%%%%%%%%%%
\section{Impact of initialization on geometric complexity}
\label{section:initialization}
%%%%%%%%%%%%%%%%%%%%%%%%%%%%%%%%%%%%%%%%%%%%%%%%%
Parameter initialization choice is an important factor in deep learning. Although we are free to specify exact parameter initialization values, in practice, a small number of default initialization schemes have emerged to work well across a wide range of tasks \cite{understanding2010glorot,he2015delving}. Here, we explore the relationship between some of these initialization schemes and GC.

To begin, consider the one dimensional regression example that we introduced in Figure~\ref{fig:training_sequence}. In this experiment, we employed a standard initialization scheme to initialise the parameters: we sample them from a truncated normal distribution with variance inversely proportional to the number of input units and the bias terms were set to zero. We observe that the initialised function on the interval $[-1,1]$ is very close to the zero function (Fig.~\ref{fig:training_sequence}), and the zero function has zero GC. This observation suggests that initialization schemes that have low initial GC are useful for deep learning.
\begin{figure}[h]
  \centering
  \includegraphics[width=1\linewidth]{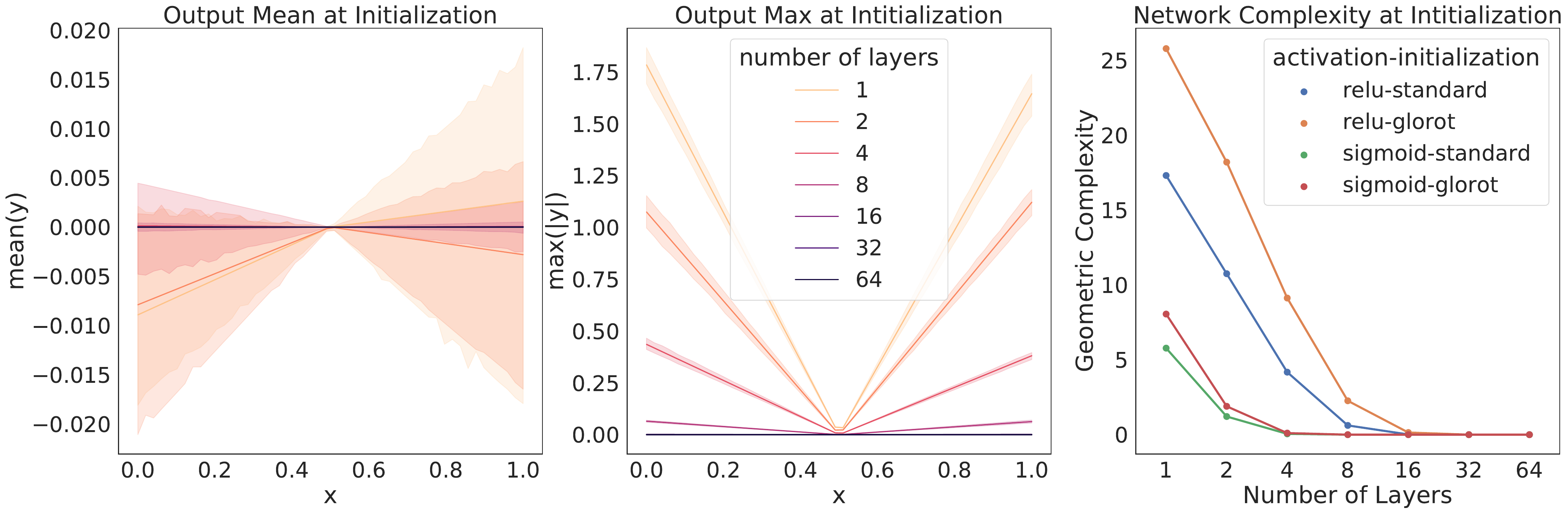}
  \caption{MLP's given by $y=f_{\theta_0}(x)$ initialize closer to the zero function and closer to zero GC, as the number of layers increases. \textbf{Left and Middle:} The ReLU MLP's initialized with the standard scheme are evaluated using input values along the line $P_1 + (P_2 - P_1)x$ with $x\in[0,1]$ between two diagonal points $P_1$ and $P_2$ of the hyper-cube $[-1,1]^d$. \textbf{Right:} GC is computed on a dataset $D$ of 100 normalized data points. All MLP's have 500 neurons per layer. }
  \label{fig:initialization}
\end{figure}

To explore this further, we consider deep ReLU networks with larger input and output spaces initialized using the same scheme as above, and measure the GC of the resulting model.
Specifically, consider the initialized ReLU network given by $f_{\theta_0}: \mathbb{R}^d \to \mathbb{R}^k$ with $d=150528$ and $k=1000$, with parameter initialization $\theta_0$, and varying network depth. We measure the ReLU network output size by recording the mean and maximum output values, evaluated using input values along the line $P_1 + (P_2 - P_1)x$ with $x\in[0,1]$ between two diagonal points $P_1$ and $P_2$ of the hyper-cube $[-1,1]^d$. We observe that these ReLU networks initialize to functions close to the zero function, and become progressively closer to a zero valued function as the number of layer increases (Fig.~\ref{fig:initialization}). For ReLU networks, this is not entirely surprising. With biases initialised to zero, we can express a ReLU network in a small neighborhood of a given point $x\in\R^d$ as a product of matrices
$
f_{\theta_0}(x) =  W_1 P_2 W_2 P_3 W_3\cdots P_l W_l x,
$
where the $W_i$'s are the weight matrices and the $P_i$'s are diagonal matrices with 0 and 1 on their diagonals. This representation makes it clear that at initialization the ReLU network passes through the origin; i.e., $f_{\theta_0}(0) = 0$. Furthermore, with weight matrices initialised around zero, using a scaling that can reduce the spread of the distribution as the matrices grow, we can expect that deeper ReLU networks generated by multiplying a large number of small-valued weights, can produce output values close to zero (for input values taken from a hyper-cube $[-1, 1]^d$). 
We extend these results further to include additional initialization schemes and experimentally confirm that the GC can be brought close to zero with a sufficient number of layers. In fact, this is true not only for ReLU networks with the standard initialization scheme, but for a number of other common activation functions and initialization setups \cite{understanding2010glorot, he2015delving}, and even on domains much larger than the normalized hyper-cube (Fig.~\ref{fig:initialization} and SM Section \ref{appendix:initialization_on_large_domains}). Theoretically, it has been shown very recently in \cite{avelin2022deep} (their Theorem 5), that under certain technical conditions, a neural network at random initialization will converge to a constant function (which has GC equal to zero) as the number of layers increases.

%%%%%%%%%%%%%%%%%%%%%%%%%%%%%%%%%%%%%%%%%%%%%%%%%
\section{Impact of explicit regularization on geometric complexity}
\label{section:explicit_reg}
%%%%%%%%%%%%%%%%%%%%%%%%%%%%%%%%%%%%%%%%%%%%%%%%%
Next, we explore the relationship between GC and various forms of explicit regularization using a combination of theoretical and empirical results.

\begin{figure}[h]
  \centering
  \includegraphics[width=.3\linewidth]{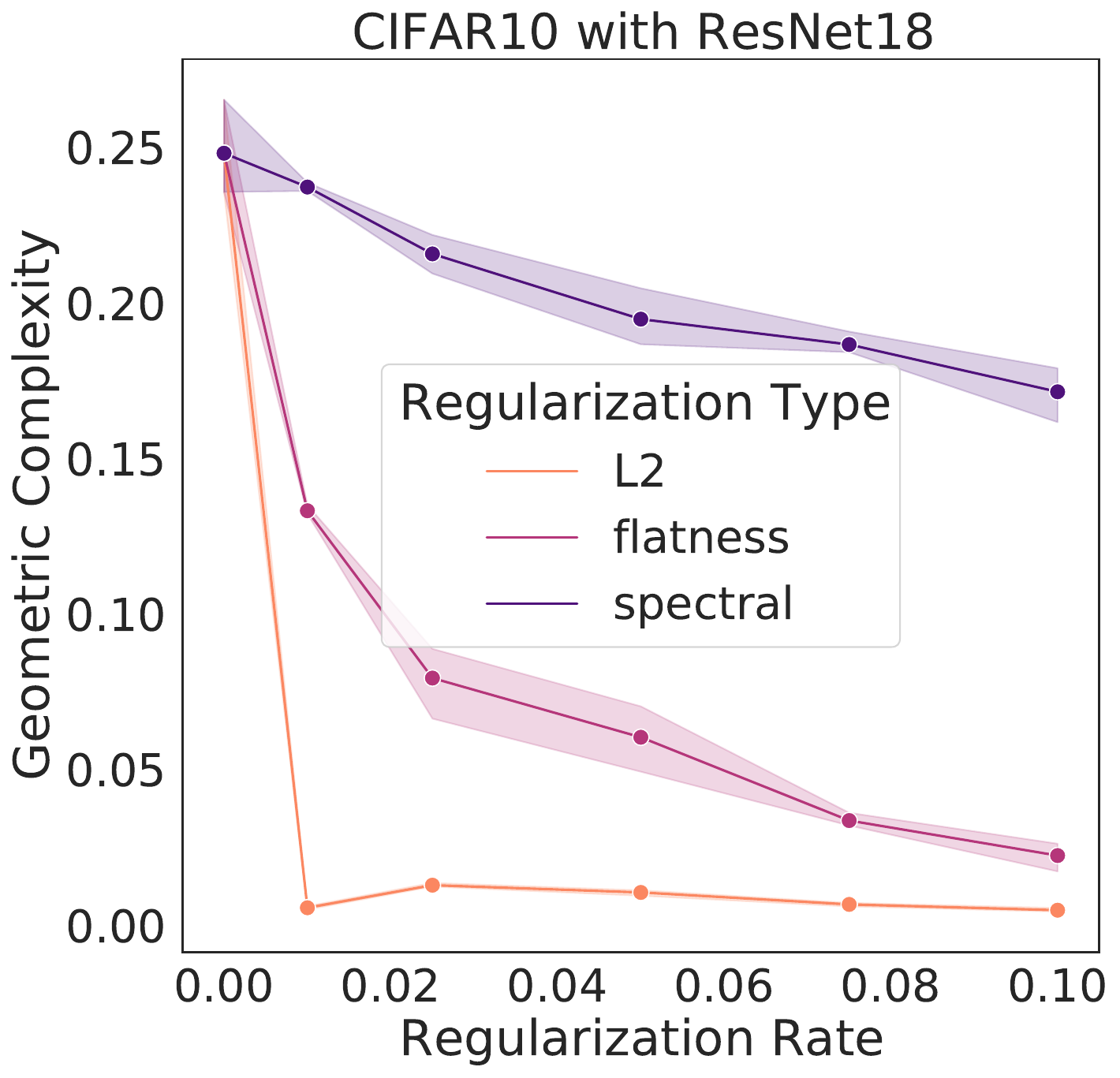}
  \includegraphics[width=.3\linewidth]{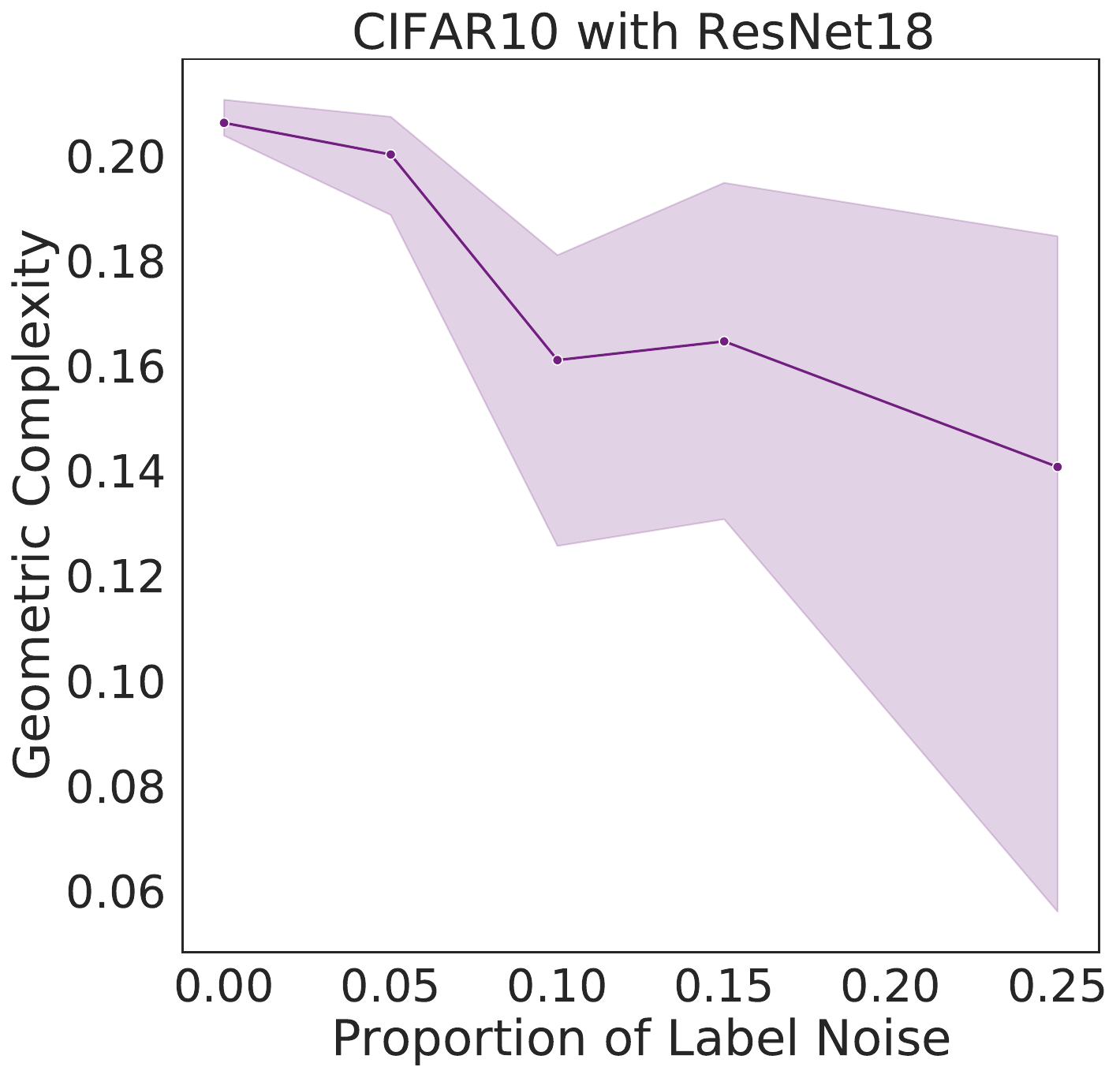}
  \includegraphics[width=.3\linewidth]{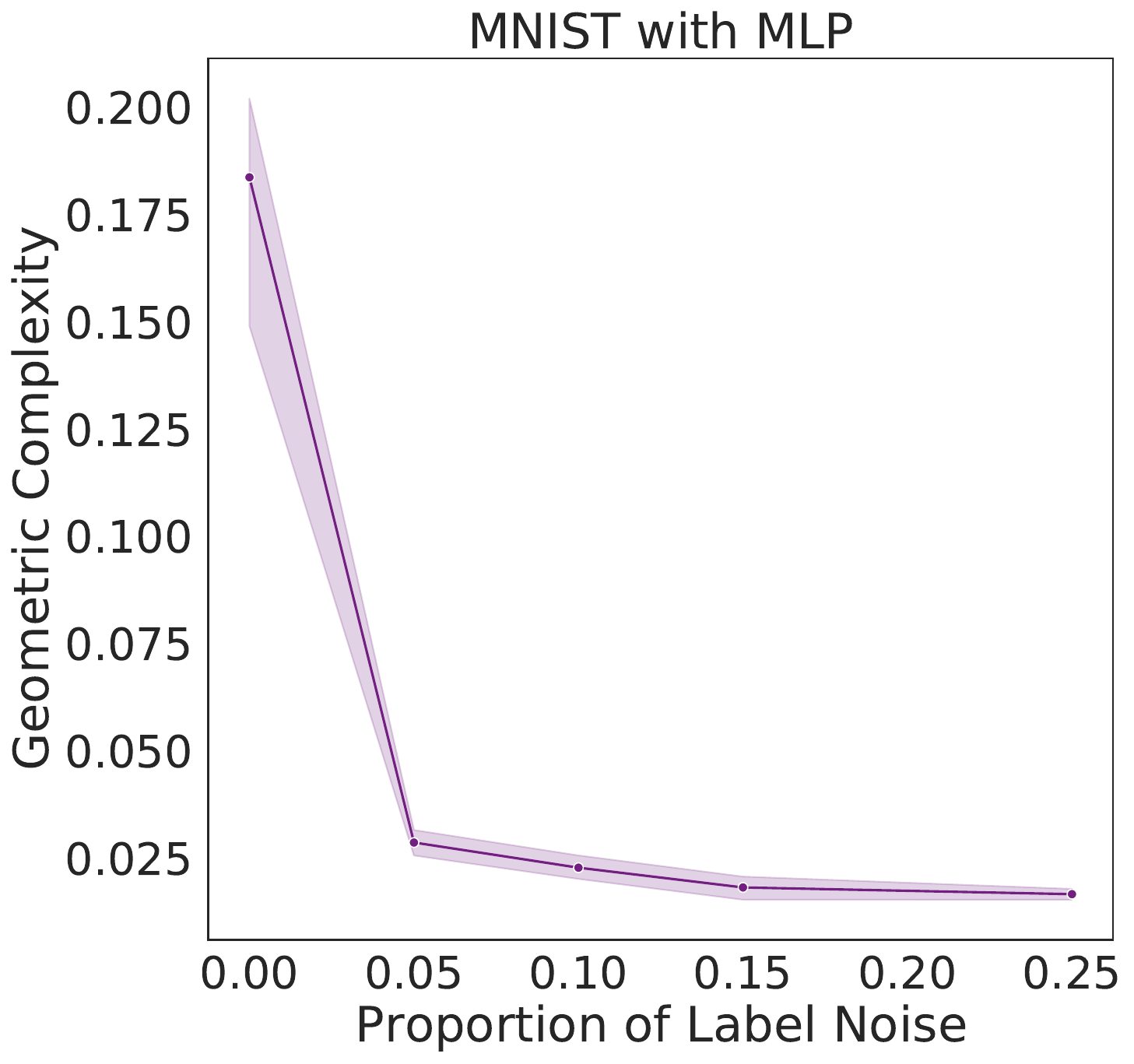}
  \caption{Explicit regularization and GC. \textbf{Left:} As the L2, flatness, and spectral regularization increase, GC decreases. (See SM Section \ref{appendix:additional_experiments_for_explicit_regularization} for additional experiments on explicit regularization with ranges targeted to each regularization type.)
 \textbf{Middle and Right:} As label noise regularization increases, GC decreases.}
  \label{fig:explicit_regularization}
\end{figure}

\paragraph{L2 regularization:} In L2 regularization, the Euclidean norm  of the parameter vector is explicitly added to a loss function, so as to penalize solutions that have excessively large parameter values. This is one of the simplest, and most widely used forms of regularization. For linear models ($f_\theta(x) = Ax + b$), we saw in Section \ref{section:complexity} that the GC coincides with the Frobenius norm of the matrix $A$. This means that standard L2 norm penalties on the weight matrix coincide in this case with a direct explicit regularization of the GC of the linear model.  
For non-linear deep neural networks, we cannot directly identify the L2 norm penalty with the model GC. However, in the case of ReLU networks, for each input point $x$, the network output $y$ can be written in a neighborhood of $x$ as $y = P_l W_l \dots P_1 W_1 x + c$,
where $c$ is a constant, the $P_i$'s are diagonal matrices with 0 and 1 on the diagonal, and the $W_i$'s are the network weight matrices. This means that the derivative at $x$ of the network coincides with the matrix $P_l W_l \dots P_1 W_1$. Therefore, the GC is just the Frobenius norm of the product of matrices.
Now, an L2 penalty  $\|W_l\|_F^2 + \cdots + \|W_1\|_F^2$
encourages small numbers in the values of the weight matrices, which in turn is likely to encourage small numbers for the values in the product $P_l W_l \dots P_1 W_1$, resulting in a lower GC.

We can demonstrate this relationship empirically, by training a selection of neural networks with L2 regularization, each with a different  regularization strength. We measure the GC for each network at the time of maximum test accuracy. We observe empirically that strengthening L2 regularization coincides with a decrease in GC values (Fig.~\ref{fig:explicit_regularization}). 

\paragraph{Lipschitz regularization via spectral norm regularization:}

A number of explicit regularization schemes have been used to tame the Lipschitz smoothness of the model function and produce smoother models with reduced test error \cite{arbel2018gradient, arora2018understanding, elsayed2018large, fedus2017many,  gulrajani2017improved, kodali2018on, miyato2018spectral, pmlr-v137-rosca20a,  Yoshida2017SpectralNR}. Smoothness regularization has also been shown to be beneficial outside the supervised learning context, in GANs~\cite{brock2018large, miyato2018spectral,zhang2019self} and reinforcement learning~\cite{bjorck2021towards, gogianu2021spectral}.
One successful approach to regularising the Lipschitz constant of a neural network with 1-Lipschitz activation functions (e.g. ReLU, ELU) is to constrain the spectral norm of each layer of the network (i.e., the maximal singular values $\sigma_{\max}(W_i)$ of the weight matrices $W_i$), since the product of the spectral norms of the networks weight matrices is an upper bound to the Lipschitz constant of the model: $f_L \le \sigma_{\max}(W_1)\cdots \sigma_{\max}(W_l)$.  
Using inequality~(\ref{equation:lipschitz_bounds_geometric}), we see that any approach that constrains the Lipschitz constant of the model constrains GC. To confirm this theoretical prediction, we train a ResNet18 model on CIFAR10 \cite{krizhevsky2009learning} and regularize using spectral regularization~\cite{Yoshida2017SpectralNR}, an approach which adds the regularizer $\frac{\lambda}{2}\sum_l (\sigma_{\max}(W_l))^2$ to the model loss function.
We observe that GC decreases as the strength of spectral regularization $\lambda$ increases (Fig. \ref{fig:explicit_regularization}).

\paragraph{Noise regularization:} The addition of noise during training is known to be an effective form of regularization. For instance, it has been demonstrated  \cite{blanc2020} that the addition of noise to training labels during the optimization of a least-square loss using SGD exerts a regularising pressure on $\sum_{(x, y)\in D} \|\nabla_\theta f_\theta(x)\|^2/|D|$.  
Here, we demonstrate empirically that the GC of a ResNet18 trained on CIFAR10 reduces as the proportion of label noise increases (Fig.~\ref{fig:explicit_regularization}, middle). The same is true of an MLP trained on MNIST \cite{deng2012mnist} (Fig.~\ref{fig:explicit_regularization}, right). In Section \ref{section:implicit_regularization}, we provide a theoretical argument which justifies these experiments, showing that a regularizing pressure on $\|\nabla_\theta f_\theta(x)\|^2$ transfers to a regularizing pressure on $\|\nabla_x f_\theta(x)\|^2$. Thus, label noise in SGD in turn translates into a regularizing pressure on the GC in the case of neural networks.

\paragraph{Flatness regularization:} 
In flatness regularization, an explicit gradient penalty term, $\|\nabla_\theta L_B\|^2$ is added to the loss (where $L_B$ is the loss evaluated across a batch $B$). It has been observed in practice that flatness regularization can be effective in many deep learning settings, from supervised learning \cite{ geiping2022stochastic, smith2021on} to GAN training \cite{balduzzi2018mechanics, mescheder2017numerics, nagarajan2017gradient,odegan, rosca2021discretization}. Flatness regularization penalizes learning trajectories that follow steep slopes across the loss surface, thereby encouraging learning to follow shallower slopes toward flatter regions of the loss surface. 
We demonstrate empirically that GC decreases as the strength of flatness regularization increases (Fig. \ref{fig:explicit_regularization}). In the next section, we will provide a theoretical argument that flatness regularization can control GC.

\paragraph{Explicit GC regularization and Jacobian regularization:}
All the forms of regularization above have known benefits for improving the test accuracy in deep learning. As we saw, they all also implicitly regularize GC. This raises the question as whether regularizing for GC directly and independently of any other mechanism is sufficient to improve model performance. Namely, we can add GC computed on the batch to the loss 
$
L_{\textrm{reg}}(\theta) = L_B(\theta) + \alpha/B \sum_{x \in B}\|\nabla_x f_\theta(x)\|^2_F.
$
This is actually a known form of explicit regularization, called {\bf Jacobian regularization}, which has been correlated with increased generalization but also robustness to input shift \cite{hoffman2020robust, Sokolic2017RobustLM, varga2018gradient, Yoshida2017SpectralNR}. \cite{Sokolic2017RobustLM} specifically shows that adding Jacobian regularization to the loss function can lead to an increase in test set accuracy (their Tables III, IV, and V). In SM Section \ref{appendix:gc_regularization_additional_experiments} we train a MLP on MNIST and a ResNet18 on CIFAR10 regularized explicitly with the geometric complexity. We observe an increase of test accuracy and a decrease of GC with more regularization. Related regularizers include \emph{gradient penalties} of the form  $\sum_{x} (||\nabla_{x} f_{\theta}(x)||- K)^2$ which have been used for GAN training~\cite{fedus2017many,gulrajani2017improved, kodali2017convergence}. 
We leave the full investigation of the importance of GC outside supervised learning for future work.

%%%%%%%%%%%%%%%%%%%%%%%%%%%%%%%%%%%%%%%%%%%%%%%%%%
\section{Impact of implicit regularization on geometric complexity} \label{section:implicit_regularization}
%%%%%%%%%%%%%%%%%%%%%%%%%%%%%%%%%%%%%%%%%%%%%%%%%%

Implicit regularization is a hidden form of regularization that emerges as a bi-product of model training. Unlike explicit regularization, it is not explicitly added to a loss function. In deep learning settings where no explicit regularization is used, it is the only form of regularization. Here, we use a combination of theoretical and empirical results to argue that some recently identified implicit regularization mechanisms in gradient descent \cite{barrett2021implicit, chao2021sobolev, smith2021on} exert a regularization pressure on geometric complexity. Our  argument proceeds as follows: 1) we identify a mathematical term (the implicit gradient regularization term) that characterizes implicit regularization in gradient descent, 2) we demonstrate that this term depends on model gradients, 3) we identify the conditions where model gradient terms apply a regularization pressure on geometric complexity.      

\paragraph{Step 1:} The implicit regularization that we consider emerges as a bi-product of the discrete nature of gradient descent updates. In particular it has been shown that a discrete gradient update $\theta' = \theta - h \nabla_\theta L_B(\theta)$ over a batch of data implicitly minimizes a modified loss, 
$ \widetilde L_B = L_B + \frac h4 \|\nabla_\theta L_B\|^2$,
where the second term is called the Implicit Gradient Regularizer (IGR) \cite{barrett2021implicit}. Gradient descent optimization is better characterized as a continuous flow along the gradient of the modified loss, rather than the original unmodified loss. By inspection, the IGR term implicitly regularizes training toward trajectories with smaller loss gradients toward flatter region on the loss surface \cite{barrett2021implicit} . 

\paragraph{Step 2:} Next, we develop this implicit regularizer term for a multi-class classification cross-entropy loss term. We can write (see SM Section \ref{appendix:modified_loss_expansion} for details):
\begin{equation}\label{eq:expanded_modified_loss}
    \widetilde L_B  =  L_B + \frac h{4B}\left(
    \frac 1B
    \sum_{x, i}
    \epsilon_x^i(\theta)^2
    \|\nabla_\theta f^i_\theta(x)\|^2 \right)+ \frac h{4} A_B(\theta) + \frac h4 C_B(\theta), 
\end{equation}
where $C_B(\theta)$ measures the \emph{batch gradient alignment}
\begin{equation}
C_B(\theta)  = 
\frac 1{B^2}\sum_{(x, y)\neq (x', y')}\left\langle \nabla_\theta L(x,y, \theta), \nabla_\theta L(x',y', \theta) \right\rangle.
\end{equation}
and $A_B(\theta)$ measures the  \emph{label gradient alignment}:
\begin{equation}
A_B(\theta) = \frac 1{B^2} \sum_{x\in B} \sum_{i\neq j} 
\langle
\epsilon_x^i \nabla_\theta f^i_\theta(x), 
\epsilon_x^j \nabla_\theta f^j_\theta(x)
\rangle.
\end{equation}
where $\epsilon_x^i(x)=a(f_\theta(x))^i - y^i$ is the signed residual and $a$ denotes the activation function of the last layer. Note that this residual term arises in our calculation using $\nabla_\theta L(x, y, \theta)  =  \epsilon^1_x(\theta) \nabla_\theta f_\theta^1(x) + \cdots + \epsilon_x^k(\theta) \nabla_\theta f_\theta^k(x)$ (see SM Section \ref{appendix:gradients}, Eqn.~\eqref{eq:loss_theta_gradient}). This development also extends to other widely used loss functions such as the cross entropy and least-square loss.

\paragraph{Step 3:} Now, from the modified loss in Eqn.~\eqref{eq:expanded_modified_loss}, we can observe the conditions under which SGD puts an implicit pressure on the gradient norms  $\|\nabla_\theta f^i_\theta(x)\|^2$ to be small: the batch gradient alignment and label gradient alignment terms must be small relative to the gradient norms, or positive valued. 

We also derive conditions under which this implicit pressure on $\|\nabla_\theta f_\theta(x)\|_F^2$ transfers to a regularizing pressure on $\|\nabla_x f_\theta(x)\|_F^2$ (see SM Section \ref{appendix:transfer} for the proof):.
\begin{theorem} \label{thm:transfer} Consider a logit network $f_\theta:\R^d\rightarrow \R^k$ with $l$ layers, then we have the following inequality
\begin{equation}\label{thm:transfer_inequality}
\|\nabla_x f_\theta(x)\|^2_F  \leq  \frac{\|\nabla_\theta f_\theta(x)\|^2_F}{T_1^2(x)+\cdots+T_l^2(x)},
\end{equation}
where $T_i$ is the \it{transfer function} for layer $i$ given by 
\begin{equation}
T_i(x, \theta) 
 = 
\frac{1}{\sqrt{\min(d,k)}}
\frac{\sqrt{1 + \|h_i(x)\|^2_2}}{\sigma_{\max}(W_i) \sigma_{\max}(h_i'(x))},
\end{equation}
where $h_i$ is the subnetwork to layer $i$ and $\sigma_{\max}(A)$ is the maximal singular value of matrix $A$ (i.e., its spectral norm).
\end{theorem}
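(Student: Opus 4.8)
The plan is to exploit the layered structure of $f_\theta$ together with the chain rule to express both the input Jacobian $\nabla_x f_\theta(x)$ and the parameter Jacobian $\nabla_\theta f_\theta(x)$ in terms of one common object: the Jacobian $G_i := \partial f_\theta/\partial z_i$ of the logits with respect to the pre-activation $z_i = W_i h_i(x)\,(+\,b_i)$ fed into layer $i$, where $h_i$ is the subnetwork up to layer $i$. Writing $f_\theta$ as an alternating composition of affine maps $W_i(\cdot)+b_i$ and activations, backpropagation supplies two facts for every layer $i$: the input-side identity $\nabla_x f_\theta(x) = G_i\,W_i\,h_i'(x)$, and, because $z_i$ depends on $(W_i,b_i)$ through an outer-product structure, the parameter-side identity $\|\partial f_\theta/\partial(W_i,b_i)\|_F^2 = \|G_i\|_F^2\,(1+\|h_i(x)\|_2^2)$. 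The summand $1$ is exactly the contribution of the bias $b_i$, which is the source of the $\sqrt{1+\|h_i\|_2^2}$ in the transfer function.

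First I would assemble the exact decomposition $\|\nabla_\theta f_\theta(x)\|_F^2 = \sum_{i=1}^l \|G_i\|_F^2\,(1+\|h_i(x)\|_2^2)$ by summing the per-layer parameter blocks above over all trainable weights and biases. This step is an equality, so it introduces no slack and is where the sum over layers in the denominator ultimately comes from.

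Next I would lower-bound each $\|G_i\|_F$ in terms of $\nabla_x f_\theta(x)$. Starting from $\nabla_x f_\theta(x) = G_i\,W_i\,h_i'(x)$, submultiplicativity of the operator norm together with $\|G_i\|_{op}\le\|G_i\|_F$ gives $\|\nabla_x f_\theta(x)\|_{op} \le \|G_i\|_F\,\sigma_{\max}(W_i)\,\sigma_{\max}(h_i'(x))$, hence a lower bound on $\|G_i\|_F^2$. To convert the operator norm on the left into the Frobenius norm of the statement, I would invoke the standard inequality $\|\nabla_x f_\theta(x)\|_F^2 \le \min(d,k)\,\|\nabla_x f_\theta(x)\|_{op}^2$ already used for Eqn.~\eqref{equation:lipschitz_bounds_geometric}; this is precisely where the factor $1/\min(d,k)$ in $T_i$ is born. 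Combining these yields $\|G_i\|_F^2 \ge \|\nabla_x f_\theta(x)\|_F^2\, T_i^2(x,\theta)/(1+\|h_i(x)\|_2^2)$.

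Finally, substituting this bound into the exact decomposition from the first step, the factors $(1+\|h_i(x)\|_2^2)$ cancel term by term and the common factor $\|\nabla_x f_\theta(x)\|_F^2$ pulls out of the sum, giving $\|\nabla_\theta f_\theta(x)\|_F^2 \ge \|\nabla_x f_\theta(x)\|_F^2\sum_{i=1}^l T_i^2(x,\theta)$, which rearranges to the claimed inequality. The main obstacle here is bookkeeping rather than a deep idea: one must orient every norm inequality so that all slack is introduced in the direction that bounds the denominator from below (the operator-to-Frobenius conversion and $\|G_i\|_{op}\le\|G_i\|_F$), and keep the layer indexing and the bias-induced $+1$ aligned so that the numerator $1+\|h_i\|_2^2$ matches $T_i$ exactly. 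A secondary point to treat with care is the non-differentiability of ReLU activations, which I would handle on each linear region, where the affine representation of $f_\theta$ from Section~\ref{section:complexity} holds and $G_i$, $h_i'$ are constant.
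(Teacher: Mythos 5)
Your proof is correct and arrives at the paper's bound through the same overall architecture---decompose $\|\nabla_\theta f_\theta(x)\|_F^2$ into per-layer blocks, lower-bound each block by $\|\nabla_x f_\theta(x)\|_F^2\,T_i^2$, and sum---but the central per-layer estimate is established by a genuinely different argument. The paper proves its two lemmas via a perturbation construction: for each small input perturbation $\delta x$ it exhibits a weight (resp.\ bias) perturbation $u(\delta x)$ with $f_{w_i+u(\delta x)}(x)=f_{w_i}(x+\delta x)$, differentiates this identity at $\delta x=0$, and then applies operator-norm bounds to obtain $\|\nabla_{w_i}f_\theta(x)\|_2^2 \ge \|\nabla_x f_\theta(x)\|_2^2\,\|h_{i-1}(x)\|_2^2/(\|w_i\|_2^2\,\|\nabla_x h_{i-1}(x)\|_2^2)$ and its bias analogue. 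You instead differentiate directly through the pre-activation, obtaining the exact identity $\|\partial f_\theta/\partial(W_i,b_i)\|_F^2=\|G_i\|_F^2\,(1+\|h_{i-1}(x)\|_2^2)$ with $G_i=\partial f_\theta/\partial z_i$, so that the only slack enters through the submultiplicative bound $\|\nabla_x f_\theta(x)\|_{op}\le\|G_i\|_{op}\,\sigma_{\max}(W_i)\,\sigma_{\max}(\nabla_x h_{i-1}(x))$ and the operator-to-Frobenius conversions. Your route makes the provenance of every factor in $T_i$ transparent (the $+1$ is literally the bias block, the $1/\min(d,k)$ is the single norm conversion applied to $\nabla_x f_\theta$) and replaces the paper's chain of per-layer inequalities by an exact decomposition; the paper's route, in exchange, packages the key step as a reusable ``input noise is equivalent to weight noise'' principle that it links to flatness-style arguments. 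One bookkeeping point you rightly flag: the theorem as stated writes $h_i(x)$ and $h_i'(x)$ inside $T_i$, whereas the chain rule (and the paper's own appendix restatement) requires the input $h_{i-1}$ \emph{to} layer $i$; with that indexing convention fixed, your argument is complete.
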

Here, we can see that for any settings where the sum $T_1^2(x)+\cdots+T_l^2(x)$ of squared transfer functions  diminishes slower than the gradient norm $\|\nabla_\theta f_\theta(x)\|_F^2$ during training, we expect that implicit gradient regularization will apply a regularization pressure on GC. 

An immediate prediction arising under these conditions is that the size of the regularization pressure on geometric complexity will depend on the implicit regularization rate $h/B$ in Eqn.~\eqref{eq:expanded_modified_loss} (Note that the ratio $h/B$ has been linked to implicit regularization strength in many instances \cite{goyal2017accurate, smith2018dont, mccandlish1812empirical, NEURIPS2020_6e17a5fd, chao2021sobolev}). Specifically, under these conditions, networks trained with larger learning rates or smaller batch sizes, or both, will apply a stronger regularization pressure on geometric complexity,

We test this prediction by performing experiments on ResNet18 trained on CIFAR10 and show results in Figure~\ref{figure:implicit_regularization}. The results show that while all models achieve a zero train loss, consistently the higher the learning rate, the higher the test accuracy and the lower the GC. Similarly, we observe that the lower the batch size, the lower the GC and the higher the test accuracy. 

\begin{figure}[t]
  \centering
    \includegraphics[width=1\linewidth]{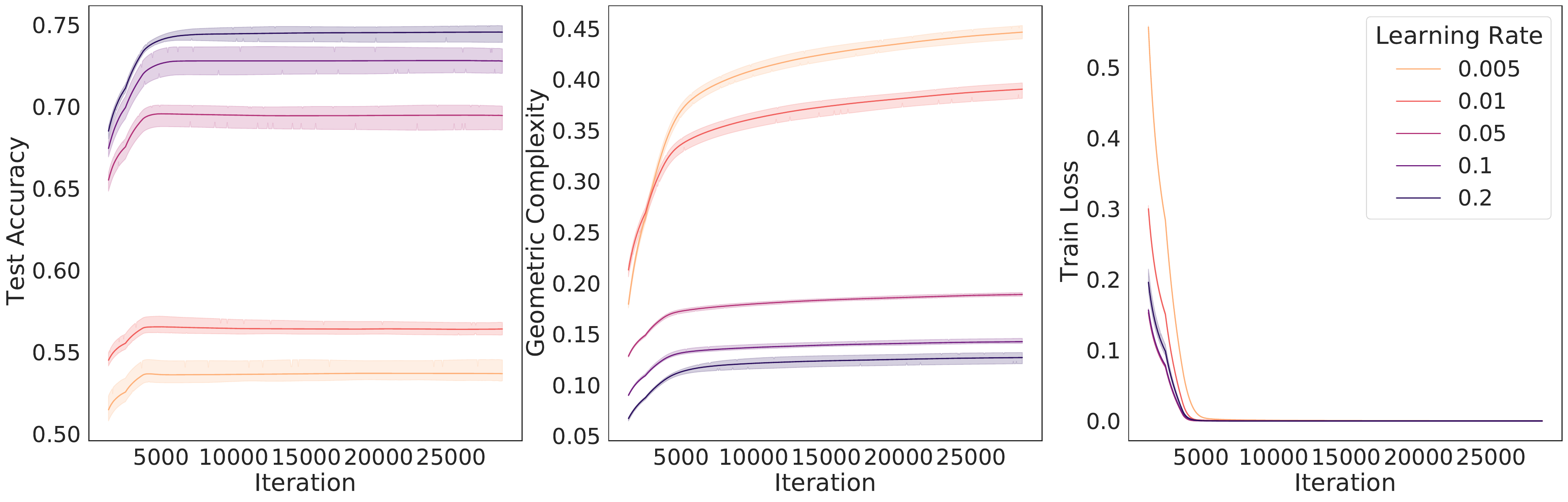}
    \includegraphics[width=1\linewidth]{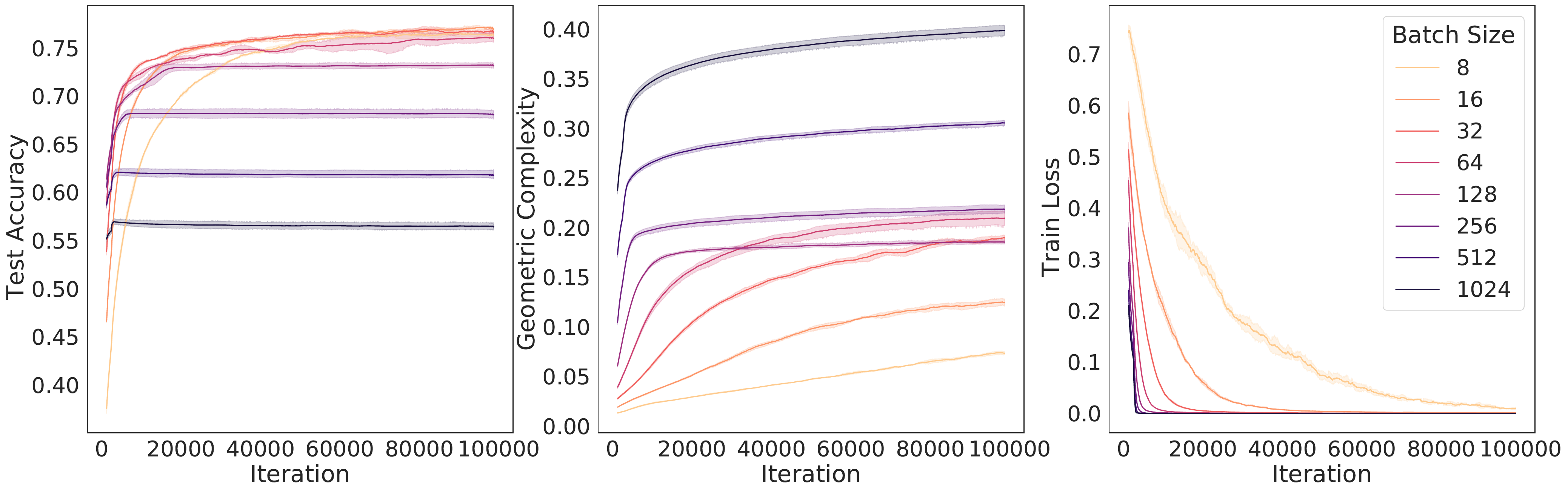}
  \caption{Impact of IGR when training ResNet18 on CIFAR10. \textbf{Top row:} As IGR increases through higher learning rates, GC decreases. \textbf{Bottom row:} Similarly, lower batch size leads to decreased GC.}
  \label{figure:implicit_regularization}
\end{figure}

The increased performance of lower batch sizes has been long studied in deep learning, under the name `the generalisation gap'~\cite{hoffer2017train,keskar2016large}. Crucially however, this gap was recently bridged~\cite{geiping2021stochastic}, showing that full batch training can achieve the same test set performance as mini-batch gradient descent. To obtain these results, they use an explicit regularizer similar to the implicit regularizer in Eqn.~\eqref{eq:expanded_modified_loss}, introduced to compensate for the diminished implicit regularization in the full batch case.  Their results further strengthen our hypothesis that implicit regularization via GC results in improved generalisation.

%%%%%%%%%%%%%%%%%%%%%%%%%%%%%%%%%%%%%%%%%%%%
\section{Geometric complexity and double descent}
\label{section:double_descent}
%%%%%%%%%%%%%%%%%%%%%%%%%%%%%%%%%%%%%%%%%%%%

 When complexity is measured using a simple parameter count, a double descent phenomena has been consistently observed: as the number of parameters increases, the test accuracy decreases at first, before increasing again, followed by a second descent toward a low test error value~\cite{belkin2021fear,Belkin15849,deep_double_descent}.  An excellent overview of the double descent  phenomena in deep learning can be found in ~\cite{belkin2021fear}, together with connections to smoothness as an inductive bias of deep networks and the role of optimization.
 
 To explore the double descent phenomena using GC we follow the set up introduced in~\cite{deep_double_descent}: we train multiple ResNet18 networks on CIFAR10 with increasing layer width, and show results in Fig.~\ref{fig:double_descent}. We make two observations: first, like the test loss, GC follows a double descent curve as width increases; second, when plotting GC against the test loss, we observe a U-shape curve, recovering the traditional expected behaviour of complexity measures. Importantly, we observe the connection between the generalisation properties of overparametrized models and GC: after the critical region, increase in model size leads to a decrease in GC. This provides further evidence that GC is able to capture model capacity in a meaningful way, suggestive of a reconciliation of traditional complexity theory and deep learning.

\begin{figure}[h]
\centering
\includegraphics[width=0.43\columnwidth]{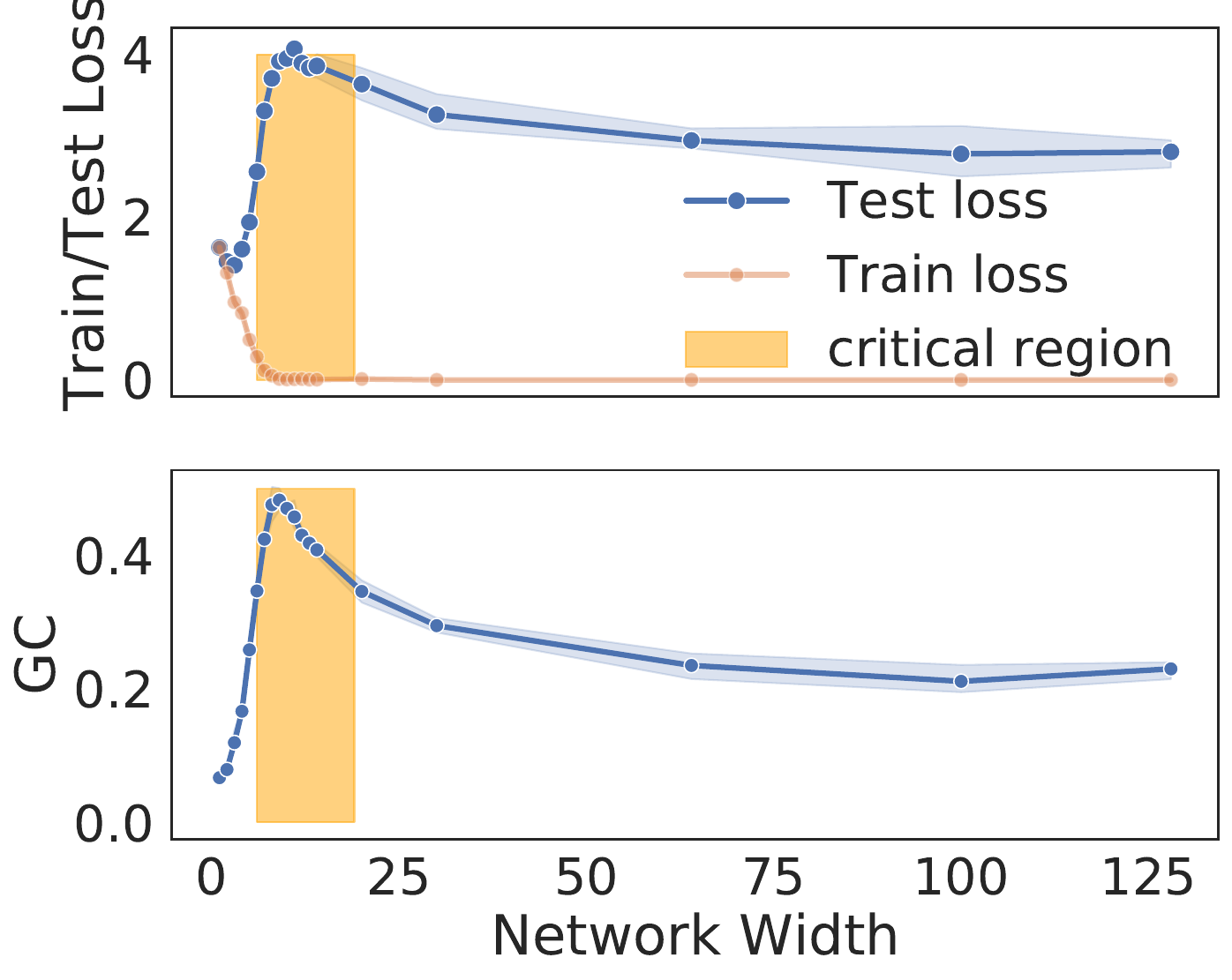}
\includegraphics[width=0.43\columnwidth]{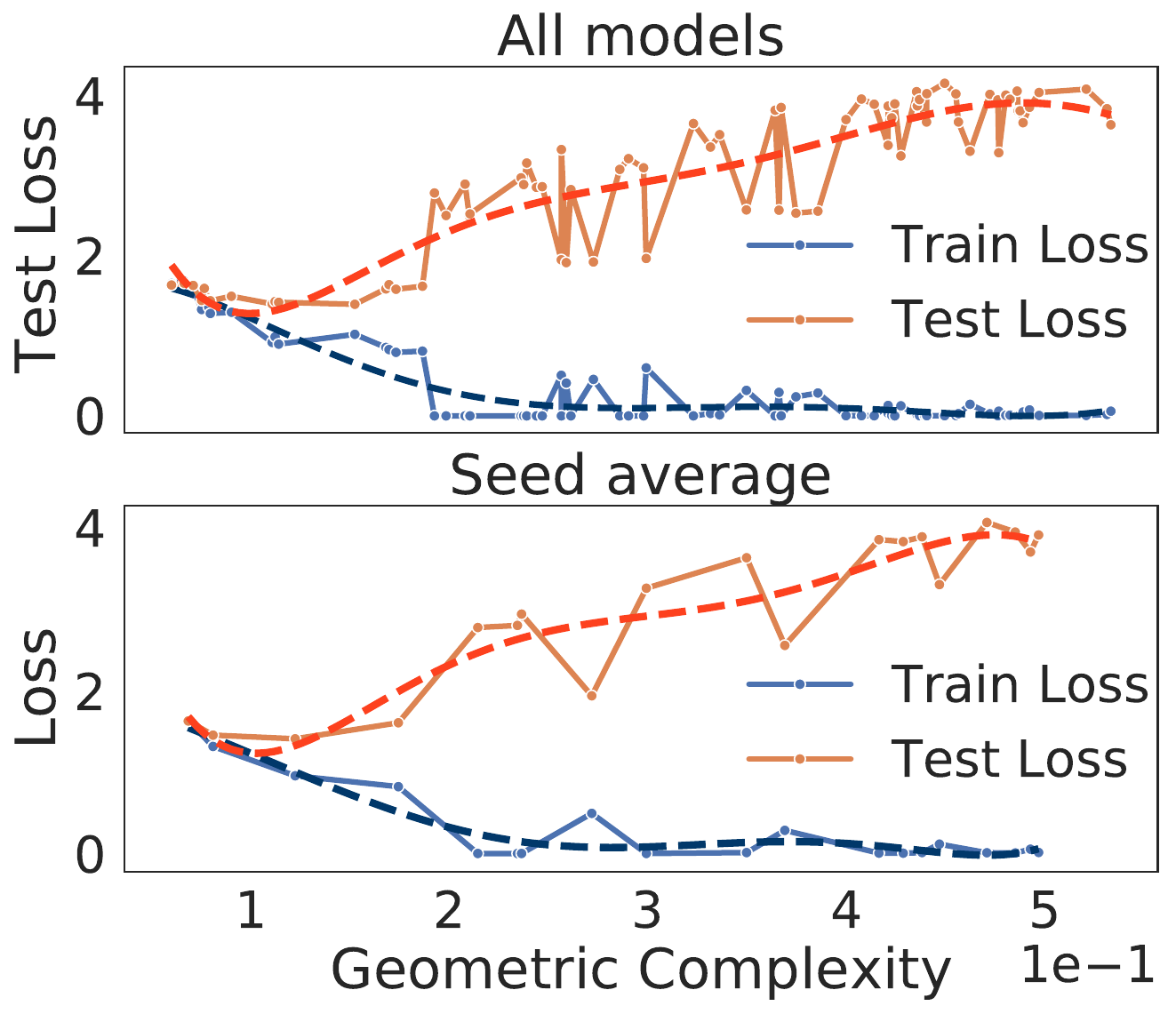}
\caption{Double descent and GC. \textbf{Left:} GC captures the double descent phenomenon. \textbf{Right:} GC captures the traditional U-shape curve, albeit with some noise, showing (top) GC vs Test Loss for all models and (bottom) GC vs Test Loss averaged across different seeds. We fit a 6 degree polynomial to the curves to showcase the trend.}
\label{fig:double_descent}
\end{figure}

\section{Related and Future Work }\label{section:related_work}

The aim of this work is to introduce a measure which captures the complexity of neural networks. There are many other approaches aiming at doing so, ranging from naive parameter count and more data driven approaches \cite{jiang2018predicting, lee2020neural, maddox2020rethinking} to more traditional measures such as the VC dimension and Rademacher complexity \cite{bartlett2002rademacher, Jiang2020Fantastic, koltchinskii2002empirical, sontag1998vc} which focus on the entire hypothesis space. \cite{neyshabur2018role} analyzes existing measures of complexity and shows that they increase as the size of the hidden units increases, and thus cannot explain behaviours observed in over-parameterized neural networks (their Figure 5).~\cite{Jiang2020Fantastic} performs an extensive study of existing complexity measures and their correlation with generalization. They find that flatness has the strongest positive connection with generalization. \cite{bartlett2017spectrally} provides a generalization bound depending on classification margins and the product of spectral norms of the model's weights and shows how empirically the product of spectral norms correlates with empirical risk. \cite{bubeck2021universal} connects Lipschitz smoothness with overparametrization, by providing a probabilistic lower bound on the model's Lipschitz constant based on its training performance and the inverse of its number of parameters.
In concurrent work \cite{gamba2022deep} discusses the connection between the Jacobian and the Hessian of the \textit{loss} with respect to inputs, namely the empirical average of $\|\nabla_x L\|_2$ and $\|\nabla_x^2 L\|_2$ over the training set and shows that they follow a double-descent curve.  \cite{novak2018sensitivity} investigates empirically a complexity measure similar to GC using the Jacobian norm of the full network (rather than the logit network) and evaluating it on test distribution (rather than the train distribution). They show a correlation between this measure and generalization in an extensive set of experiments.

As we saw in Fig.~1, the interpolating ReLU network with minimal GC is also the piecewise linear function with minimal volume or length \cite{dherin2021geometric}. In 1D this minimal function can be described only with the information given by the data points. This description with minimal information is reminiscent of the Kolmogorov complexity \cite{Schmidhuber1997Discovering} and the minimum description length \cite{NIPS1993_9e3cfc48}, and we believe that the exact relationship between these notions, GC, and minimal volume is worth investigating.
Similarly, \cite{achille2018emergence} argues that flat solutions have low information content, and for neural networks these flat regions are also the regions of low loss gradient and thus of low GC, as explained by the Transfer Theorem in Section \ref{section:implicit_regularization}. Another recent measure of complexity is the \emph{effective dimension} which is defined in relation to the training data, but is computed using the spectral decomposition of the loss Hessian ~\cite{maddox2020rethinking}, making flat regions in the loss surface also regions of low complexity w.r.t. this measure. This hints toward the effective dimension being related to GC. Note that similarly to GC, the effective dimension can also capture the double descent phenomena. The effective dimension is an efficient mechanism for model selection ~\cite{maddox2020rethinking}, which our experiments seem also to indicate may be the case for GC. Note that the generalized degrees of freedom (which considers the sensitivity of a classifier to the labels, rather than to the features as GC does) explored in \cite{grant2022predicting} in the context of deep learning also captures the double-descent phenomena.

GC is close to considerations about smoothness  \cite{pmlr-v137-rosca20a} and the Sobolev norm implicit regularization \cite{chao2021sobolev}. While GC and Lipchitz smoothness are connected, here we focused on a tractable quantity for neural networks and its connections with  existing regularizers.  Smoothness regularization has been particularly successful in the GAN literature~\cite{biggan,gulrajani2017improved,kodali2017convergence,miyato2018spectral,zhang2019self}, and the connection between Lipschitz smoothness and GC begs the question of whether their success is due to their implicit regularization of GC, but we leave the application of GC outside supervised learning for future work.

The  Dirichlet energy in  Eqn.~\eqref{eqn:classicDE} is a well-known quantity in harmonic function theory \cite{evans2010partial} and a fundamental concept throughout mathematics, physics and, more recently, 3D modeling \cite{solomon2013dirichlet}, manifold learning \cite{bronstein2017geometric}, and image processing \cite{getreuer2012rudin, rudin1992nonlinear}. Minimizers of the Dirichlet energy are harmonic and thus enjoy certain guarantees in regularity; i.e., any such solution is a smooth function. It has been demonstrated that for mean squared error regression on wide-shallow ReLU networks, gradient descent is biased towards smooth functions at interpolation \cite{jin2020implicit}. Similarly, our work suggests that neural networks, through a mechanism of implicit regularization of GC, are biased towards minimal Dirichlet energy and thus encourage smooth interpolation. It may be interesting to understand how the relationship between GC, Dirichlet energy, and harmonic theory can help futher improve the learning process, in particular in transfer learning, or in out-of-distribution generalization.

\section{Discussion}\label{section:Discussion}

In terms of \emph{limitations}, the theoretical arguments presented in this work have a focus on ReLU activations and DNN architectures, with log-likelihood losses coming from the exponential family, such as multi-dimensional regression with least-square losses, or multi-class classification with cross-entropy loss.
The experimental results are obtained on the image datasets MNIST and CIFAR using DNN and ResNet architectures.

In terms of \emph{societal impact}, we are not introducing new training methods, but focus on providing a better understanding of the impact of common existing training methods. We hope that this understanding will ultimately lead to more efficient training techniques. While existing training methods may have their own risk, we do not foresee any potential negative societal impact of this work.

In conclusion, altogether, geometric complexity provides a useful lens for understanding deep learning and sheds light into why neural networks are able to achieve low test error with highly expressive models. We hope this work will encourage further research around this new connection, and help to better understand current best practices in model training as well as discover new ones.

\begin{ack}
We would like to thank Chongli Qin, Samuel Smith, Soham De, Yan Wu, and the reviewers for helpful discussions and feedback as well as Patrick Cole, Xavi Gonzalvo, and Shakir Mohamed for their support.
\end{ack}

\bibliographystyle{plain}

\newpage

\appendix

\appendixpage
\startcontents[sections]
\printcontents[sections]{l}{1}{\setcounter{tocdepth}{2}}

\section{Proofs for Section 5}

In Section \ref{appendix:gradients} we derive that for a large class of models comprising regression models with the least-square loss and classification models with the cross-entropy loss, the norm square of the loss gradients has a very particular form (Eqn. \ref{eq:grad_square_theta}) that we need to develop the modified loss in Eqn. 6 in Step 2 of Section 5. The detail of this development is given in Section \ref{appendix:modified_loss_expansion}. In Section \ref{appendix:transfer}, we give the proof of Thm. 5.1 also needed in Step 2 of Section 5, which bounds the Frobenius norm of the network Jacobian w.r.t. the input with the network Jacobian w.r.t. the parameters. 

\subsection{Gradient structure in the exponential family} \label{appendix:gradients}

In order to be able to treat both regression and classification on the same footing, we need to introduce the notion of a transformed target $y = \phi(z)$. In regression $\phi$ is typically the identity, while for classification $\phi$ maps the labels $z\in \{1,\dots,k\}$ onto their one-hot-encoded version $\phi(z)\in \R^k$. We will also need to distinguish between the \emph{logit neural network} $f_\theta: \R^d \rightarrow \R^k$ from the \emph{response function} $g_\theta(x) = a(f_\theta(x))$ that models the transformed target. Typically, $a$ is the activation function of the last layer. For instance, for regression both the neural network and its response function coincide, $a$ being the identity, while for classification $f_\theta(x)$ are the logits and $a$ is typically a sigmoid or a softmax function. 
Now both regression and classification losses at a data point $(x,y)$ can be obtained as the negative log-likelihood of a conditional probability model
\begin{equation}
L(x, y, \theta)  =  -\log P (y | x, \theta),
\end{equation}
The conditional probability model for both regression and classification has the same structure. It is obtained by using the neural network to estimate the natural parameter vector $\eta = f_\theta(x)$ of an exponential family distribution:
\begin{equation}
P (y | x, \theta)  =  h(y) \exp\left(\langle y, f_\theta(x) \rangle - S(f_\theta(x)\right),
\end{equation}
where $S(\eta)$ is the log-partition function. For models in the exponential family, the distribution mean coincides with the gradient of the log-partition function: $E(Y | \eta) = \nabla_\eta S(\eta)$. The response function (which is the mean of the conditional distribution) is of the form $g_\theta(x) = \nabla_\eta S(f_\theta(x))$, and the last layer is thus given by $a(\eta) = \nabla_\eta S(\eta)$.
For these models, the log-likelihood loss at a data point has the simple form
\begin{equation}\label{eq:general_dl_loss}
L(x, y, \theta)  =  S(f_\theta(x)) - \langle y, f_\theta(x)  \rangle
 + \textrm{constant}.
\end{equation}
We then obtain immediately that the loss derivative w.r.t. to the parameters and w.r.t. to the input can be written as a sum of the the corresponding network derivatives weighted by the signed residuals:
\begin{eqnarray}
\nabla_\theta L(x, y, \theta) 
& = & \epsilon^1_x(\theta) \nabla_\theta f_\theta^1(x) + \cdots + \epsilon_x^k(\theta) \nabla_\theta f_\theta^k(x), \label{eq:loss_theta_gradient} \\
\nabla_x L(x, y, \theta) 
& = & \epsilon^1_x(\theta) \nabla_x f_\theta^1(x) + \cdots + \epsilon_x^k(\theta) \nabla_x f_\theta^k(x), \label{eq:loss_x_gradient}
\end{eqnarray}
where the $\epsilon^i$'s are the signed residual, that is, the $i^{th}$ components of the signed error between the response function and the transformed target:
\begin{equation}
\epsilon_x^i(\theta)  =  a^i(f_\theta(x)) - y^i.
\end{equation}
This means that the square norm of these gradients can be written as
\begin{eqnarray}
\|\nabla_\theta L(x, y, \theta) \|^2 & = & \sum_i \epsilon_x^i(\theta)^2 \|\nabla_\theta f^i_\theta(x)\|^2 + A_\theta(x, y, \theta) \label{eq:grad_square_theta}\\
\|\nabla_x L(x, y, \theta) \|^2 & = & \sum_i \epsilon_x^i(\theta)^2 \|\nabla_x f^i_\theta(x)\|^2 + A_x(x, y, \theta),
\end{eqnarray}
where $A_\theta$ and $A_x$ are the gradient alignment terms:
\begin{eqnarray}
A_\theta(x, y, \theta) & = & \sum_{i\neq j} 
\langle
\epsilon_x^i \nabla_\theta f^i_\theta(x), 
\epsilon_x^j \nabla_\theta f^j_\theta(x)
\rangle \label{eqn:A_theta}\\
A_x(x, y, \theta) & = & \sum_{i\neq j} 
\langle
\epsilon_x^i \nabla_x f^i_\theta(x), 
\epsilon_x^j \nabla_x f^j_\theta(x)
\rangle.
\end{eqnarray}

\subsubsection{Multi-class classification}

Consider the multinoulli distribution where a random variable $Z$ can take values in $k$ classes, say $z \in \{1,\dots, k\}$ with probabilities $p_1, \dots, p_k$ for each class respectively. Let the transformed target map $y=\phi(z)$ associate to a class $i$ its one-hot-encoded vector $y$ with the $i^{th}$ component equal to $1$ and all other components zero. The multinoulli density can then be written as
$
P(y) = p_1^{y_1}\cdots p_k^{y_k}
$
which can be re-parameterized, showing that the multinoulli is a member of the exponential family distribution:
\begin{eqnarray*}
P(y) 
& = & \exp(\log(p_1^{y_1}\cdots p_l^{y_l})) \\
& = & \exp\left( y_1 \log p_1 + \cdots +  y_k \log p_k \right) \\
& = & \exp\left( y_1 \log p_1 + \cdots 
+ (1 - \sum_{l=1}^{k-1} y_l) \log p_k \right) \\
& = & \exp\Bigg( 
y_1 \log \frac{p_1}{p_k} + \cdots +  y_{k-1} \log \frac{p_{k-1}}{p_k} + \cdots +\log p_k\Bigg)  %\\ 
% &   & \dots + \log p_k\Bigg)
\end{eqnarray*}
We can now express the natural parameter vector $\eta$ in terms of the class probabilities:
\begin{equation}
    \eta_i : = \log \frac{p_i}{p_k} \quad \textrm{ for }  i = 1, 2, \dots, k.
\end{equation}
Note, $\eta_k = 0$. Taking the exponential of that last equation, and summing up, we obtain that $1/p_k = \sum_{i=1}^k e^{\eta_i}$, which we use to express the class probabilities in terms of the canonical parameters: 
\begin{equation}
    p_i = \frac{e^{\eta_i}}{\sum_l e^{\eta_l}}.
\end{equation}
Since $1/p_k = \sum_{i=1}^k e^{\eta_i}$, we obtain that the log-partition function is 
\begin{equation}
    S(\eta) = \log \sum_{i=1}^k e^{\eta_i},
\end{equation}
whose derivative is the softmax function:
\begin{equation}
    a(\eta) = \nabla_\eta S(\eta) = \frac{e^\eta}{\sum_{i=1}^k e^{\eta_i}}.
\end{equation}
Now, if we estimate the natural parameter $\eta$ with a neural network $\eta = f_\theta(x)$, we obtain the response 
\begin{equation}
    E(Y \, | \, x, \theta) = \nabla_\eta S(f_\theta(x)) = a(f_\theta(x)),
\end{equation}

\subsection{Computing the expanded modified loss from Section \ref{section:implicit_regularization}}\label{appendix:modified_loss_expansion}
In this section we verify that expanding the modified loss in Step 1.~of Section \ref{section:implicit_regularization}; i.e., $\widetilde L_B = L_B + \dfrac{h}{4}\|\nabla L_B\|^2$, yields Eqn.(\ref{eq:expanded_modified_loss}). Namely, we show that
\begin{equation}
    \widetilde L_B = L_B + \dfrac{h}{4B}\left(\dfrac{1}{B}\sum_{x,i}\epsilon^i_x(\theta)^2 \|\nabla_{\theta}f^i_{\theta}(x)\|^2 \right)  + \frac h{4} A_B(\theta) + \frac h4 C_B(\theta), 
\end{equation}
where
\begin{eqnarray}
A_B(\theta) & = &\frac 1{B^2} \sum_{x\in B} \sum_{i\neq j} 
\langle
\epsilon_x^i \nabla_\theta f^i_\theta(x), 
\epsilon_x^j \nabla_\theta f^j_\theta(x)
\rangle \\
C_B(\theta) & = &
\frac 1{B^2}\sum_{(x, y)\neq (x', y')}\left\langle \nabla_\theta L(x,y, \theta), \nabla_\theta L(x',y', \theta) \right\rangle.
\end{eqnarray}

It suffices to show
\begin{eqnarray}
    \|\nabla_{\theta} L_B(\theta)\|^2 &=& \dfrac{1}{B^2}\sum_{x,i}\epsilon^i_x(\theta)^2 \|\nabla_{\theta}f^i_{\theta}(x)\|^2 \\ 
    && + ~~ \frac 1{B^2} \sum_{x\in B} \sum_{i\neq j} 
\langle
\epsilon_x^i \nabla_\theta f^i_\theta(x), 
\epsilon_x^j \nabla_\theta f^j_\theta(x)
\rangle \\
    && + ~~ \frac 1{B^2}\sum_{(x, y)\neq (x', y')}\left\langle \nabla_\theta L(x,y, \theta), \nabla_\theta L(x',y', \theta) \right\rangle.
\end{eqnarray}

Indeed this follows directly by computation. Firstly, note that 
\begin{eqnarray}
\|\nabla_{\theta} L_B(\theta)\|^2 
&=& 
\left\langle 
\nabla_{\theta} \dfrac{1}{B}\sum_{x\in B} L(x,y,\theta) ,
\nabla_{\theta} \dfrac{1}{B}\sum_{x'\in B} L(x',y',\theta)
\right\rangle \\
&=& 
\dfrac{1}{B^2}\sum_{x\in B} \left\langle \nabla_{\theta} L(x,y,\theta), \nabla_{\theta} L(x,y,\theta)\right\rangle \\
&&~+~ 
\dfrac{1}{B^2}\sum_{(x, y)\neq (x', y')} \left\langle \nabla_{\theta} L(x,y,\theta), \nabla_{\theta} L(x',y',\theta) \right\rangle \\
&=& 
\dfrac{1}{B^2}\sum_{x\in B} \|\nabla_{\theta} L(x,y,\theta)\|^2 \\
&&~+~ 
\dfrac{1}{B^2}\sum_{(x, y)\neq (x', y')} \left\langle \nabla_{\theta} L(x,y,\theta), \nabla_{\theta} L(x',y',\theta) \right\rangle.
\end{eqnarray}

Now, by (\ref{eq:grad_square_theta}) and (\ref{eqn:A_theta}), the last equality can be simplified so that
\begin{eqnarray}
\|\nabla_{\theta} L_B(\theta)\|^2 
&=&
\dfrac{1}{B^2} \sum_{x\in B} \sum_i \epsilon_x^i(\theta)^2 \|\nabla_\theta f^i_\theta(x)\|^2 \\
&&~+~ \dfrac{1}{B^2} \sum_{x\in B} \sum_{i\neq j} 
\langle
\epsilon_x^i \nabla_\theta f^i_\theta(x), 
\epsilon_x^j \nabla_\theta f^j_\theta(x)
\rangle \\
&&~+~ 
\dfrac{1}{B^2}\sum_{(x,y) \neq (x',y')} \left\langle \nabla_{\theta} L(x,y,\theta), \nabla_{\theta} L(x',y',\theta) \right\rangle.
\end{eqnarray}
This completes the computation. 

\subsection{The Transfer Theorem} \label{appendix:transfer}

To frame the statement and proof of the Transfer Theorem, let's begin by setting up and defining some notation. Consider a deep neural network $f_{\theta} : \mathbb{R}^d \to \mathbb{R}^k$ parameterized by $\theta$ and consisting of $l$ layers stacked consecutively. We can express $f_{\theta}$ as 
\begin{equation}
    f_{\theta}(x) = f_l \circ f_{l-1} \circ \cdots \circ f_1(x),
\end{equation}
where $f_i(z) = a_i(w_i z + b_i)$ denotes the $i$-th layer of the network defined by the weight matrix $w_i$, the bias $b_i$ and the layer activation function $a_i$ which acts on the output $z$ of the previous layer. In this way, we can write $\theta = (w_1, b_1, w_2, b_2, \dots, w_l, b_l)$ to represent all the learnable parameters of the network. 

Next, let $h_i(x)$ denote the sub-network from the input $x \in \mathbb{R}^d$ up to and including the output of layer $i$; that is, 
\begin{equation}
    h_i(x) = a_i(w_i h_{i-1}(x) + b_i), \text{ for } i = 1, 2, \dots, l
\end{equation}
where we understand $h_0(x)$ to be $x$.
Note that, by this notation, $h_l(x)$ represents the full network; i.e., $h_l(x) = f_{\theta}(x)$. 

At times it will be convenient to consider $f_{\theta}(x)$ as dependent only on a particular layer's parameters, for example $w_i$ and $b_i$, and independent of all other parameter values in $\theta$. In this case, we will use the notation $f_{w_i}$ or $f_{b_i}$ to represent $f_{\theta}$ as dependent only on $w_i$ or $b_i$ (resp.). Using the notation above, for each weight matrix $w_1, w_2, \dots, w_l$, we can rewrite $f_{w_i}(x)$ as (similarly, for $f_{b_i}(x)$)
\begin{equation}\label{equation:f_in_terms_of_g}
    f_{w_i}(x) = g_i(w_i h_{i-1}(x) + b_i), \text{ for } i = 1, 2, \dots, l
\end{equation}
where each $g_i(z)$ denotes the remainder of the full network function $f_{\theta}(x)$ following the $i$-th layer. That is to say, $g_i$ represents the part of the network deeper than the $i$-th layer (i.e., from layer $i$ to the output $f_{\theta}(x)$) and $h_i$ represents the part of the network shallower than the $i$-th layer (i.e., from the input $x$ up to layer $i$). 

We are now ready to state our main Theorem:

\begin{theorem} [Transfer Theorem]
\label{theorem:transfer}
Consider a network $f_\theta:\R^d\rightarrow \R^k$ with $l$ layers parameterized by $\theta = (w_1, b_1, \dots, w_l, b_l)$, then we have the following inequality
\begin{equation}\label{thm:transfer_inequality}
\|\nabla_x f_\theta(x)\|^2_F  \leq  \frac{\|\nabla_\theta f_\theta(x)\|^2_F}{T_1^2(x, \theta)+\cdots+T_l^2(x, \theta)},
\end{equation}
where $T_i(x, \theta)$ is the \it{transfer function} for layer $i$ given by 
\begin{equation}
T_i(x, \theta) 
 = 
\frac{1}{\sqrt{\min(d,k)}}
\frac{\sqrt{1 + \|h_{i-1}(x)\|^2_2}}{\sigma_{\max}(w_i) \sigma_{\max}(\nabla_xh_{i-1}(x))},
\end{equation}
where $h_i$ is the subnetwork to layer $i$ and $\sigma_{\max}(A)$ is the maximal singular value of matrix $A$ (i.e., its spectral norm).

\end{theorem}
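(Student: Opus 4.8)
The plan is to exploit a single structural identity: for each layer $i$ the whole network factors as $f_\theta(x) = g_i(w_i h_{i-1}(x) + b_i)$, so that both the input Jacobian and the layer-$i$ parameter gradient can be written through the \emph{same} matrix $M_i := \nabla_z g_i(w_i h_{i-1}(x)+b_i)$, the Jacobian of the ``deeper'' part of the network. Comparing the two expressions yields, for each $i$, a lower bound on $\|M_i\|_F$ in terms of $\|\nabla_x f_\theta\|_F$, and summing these bounds reconstructs $\|\nabla_\theta f_\theta\|_F^2$ with exactly the squared transfer functions $T_i^2$ as coefficients.

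First I would compute the parameter-gradient contribution of a single layer. Differentiating $f_\theta(x) = g_i(w_i h_{i-1}(x)+b_i)$ with respect to the entries of $w_i$ and $b_i$ (holding all other layers fixed) gives $\partial f_m/\partial (w_i)_{pq} = (M_i)_{mp}\,(h_{i-1})_q$ and $\partial f_m/\partial (b_i)_q = (M_i)_{mq}$. Summing squares, the three-index bookkeeping collapses into a product of norms, $\|\nabla_{w_i} f_\theta\|_F^2 = \|M_i\|_F^2\,\|h_{i-1}(x)\|_2^2$ and $\|\nabla_{b_i} f_\theta\|_F^2 = \|M_i\|_F^2$. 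Since the parameters split disjointly over layers, summing over $i$ produces the exact identity
\[
\|\nabla_\theta f_\theta(x)\|_F^2 \;=\; \sum_{i=1}^l \bigl(1 + \|h_{i-1}(x)\|_2^2\bigr)\,\|M_i\|_F^2 ,
\]
which is the step responsible for the factor $\sqrt{1+\|h_{i-1}\|_2^2}$ appearing in $T_i$.

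Next I would bound the input Jacobian. The same factorization gives, by the chain rule, $\nabla_x f_\theta(x) = M_i\, w_i\, \nabla_x h_{i-1}(x)$ for every $i$. Applying the Frobenius--operator bound $\|A\|_F^2 \le \min(d,k)\,\|A\|_{op}^2$ (the inequality underlying~(\ref{equation:lipschitz_bounds_geometric})), then submultiplicativity of the operator norm together with $\|M_i\|_{op}\le\|M_i\|_F$, yields $\|\nabla_x f_\theta\|_F^2 \le \min(d,k)\,\|M_i\|_F^2\,\sigma_{\max}(w_i)^2\,\sigma_{\max}(\nabla_x h_{i-1})^2$. Rearranging isolates the lower bound $\|M_i\|_F^2 \ge \|\nabla_x f_\theta\|_F^2\,T_i^2/(1+\|h_{i-1}\|_2^2)$, since the coefficient $(1+\|h_{i-1}\|_2^2)\big/\bigl(\min(d,k)\,\sigma_{\max}(w_i)^2\sigma_{\max}(\nabla_x h_{i-1})^2\bigr)$ is precisely $T_i^2$. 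Substituting into the identity of the previous paragraph gives $\|\nabla_\theta f_\theta\|_F^2 \ge \|\nabla_x f_\theta\|_F^2 \sum_i T_i^2$, which is the claimed inequality after dividing by $\sum_i T_i^2$.

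I expect the main obstacle to be the bookkeeping of the second step: verifying that the three-index object $\partial f_m/\partial(w_i)_{pq}$ has Frobenius norm factoring cleanly as $\|M_i\|_F\,\|h_{i-1}\|_2$, and, crucially, confirming that the identical matrix $M_i$ governs both the parameter gradient and the input gradient, so that the two can be linked layer by layer. A secondary point requiring care is the choice of matrix norms in the third step: the stated constant $\min(d,k)$ in $T_i$ arises specifically from routing the input-gradient estimate through the operator norm; a direct mixed Frobenius--operator submultiplicativity, $\|M_i w_i \nabla_x h_{i-1}\|_F \le \|M_i\|_F\,\sigma_{\max}(w_i)\,\sigma_{\max}(\nabla_x h_{i-1})$, would remove this factor and give a strictly tighter bound, so one must follow the operator-norm route to match the theorem exactly. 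Finally, for ReLU or other piecewise-smooth networks the chain rule and these identities hold only almost everywhere, which is harmless here since all gradients are evaluated pointwise.
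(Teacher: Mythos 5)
Your proof is correct, and it follows the same overall skeleton as the paper's: decompose $\|\nabla_\theta f_\theta\|_F^2$ as the sum over layers of the $w_i$- and $b_i$-contributions, relate each layer's contribution to $\|\nabla_x f_\theta\|_F^2$ through the factorization $f_\theta(x)=g_i(w_i h_{i-1}(x)+b_i)$, and pay the $\min(d,k)$ factor when passing from the operator norm to the Frobenius norm of the input Jacobian. Where you genuinely diverge is in how the per-layer bounds are obtained. The paper proves two lemmas by a perturbation argument (borrowed from Seong et al.): it constructs a weight perturbation $u(\delta x)$ with $f_{w_i}(x+\delta x)=f_{w_i+u(\delta x)}(x)$, differentiates that identity to get $\nabla_x f_{\theta}=\nabla_{w_i}f_\theta\cdot\bigl(w_i\nabla_x h_{i-1}(x)\bigr)h_{i-1}^T(x)/\|h_{i-1}(x)\|_2^2$, and then applies norm submultiplicativity, which only yields inequalities and requires some care about what ``operator norm'' means for the three-index object $\nabla_{w_i}f_\theta$. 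You instead differentiate directly through the deeper-subnetwork Jacobian $M_i$ and obtain the \emph{exact} identities $\|\nabla_{w_i}f_\theta\|_F^2=\|M_i\|_F^2\|h_{i-1}(x)\|_2^2$ and $\|\nabla_{b_i}f_\theta\|_F^2=\|M_i\|_F^2$, hence the clean equality $\|\nabla_\theta f_\theta\|_F^2=\sum_i\bigl(1+\|h_{i-1}(x)\|_2^2\bigr)\|M_i\|_F^2$, and then a single chain $\|\nabla_x f_\theta\|_F^2\le\min(d,k)\,\|M_i\|_F^2\,\sigma_{\max}(w_i)^2\sigma_{\max}(\nabla_x h_{i-1}(x))^2$ from $\nabla_x f_\theta=M_i w_i\nabla_x h_{i-1}(x)$. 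This is more elementary, sidesteps the tensor-norm ambiguity entirely, and makes transparent exactly where slack enters; your side remark that using $\|AB\|_F\le\|A\|_F\|B\|_{op}$ directly would eliminate the $\min(d,k)$ factor and strictly sharpen the theorem is also correct.
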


The proof of Theorem \ref{theorem:transfer}, inspired by the perturbation argument in  \cite{SeongLKHK18}, follows from the two following Lemmas. The idea is to examine the layer-wise structure of the neural network to compare the gradients $\nabla_{w_i} f_\theta$ with the gradients $\nabla_x f_\theta$. Due to the nature of  $f_{\theta}(x)$ and its dependence on the inputs $x$ and parameters $\theta$, we show that at each layer $i$, a small perturbation of the inputs $x$ of the model function $f_{\theta}(x)$ transfers to a small perturbation of the weights $w_i$. 

\begin{lemma}\label{lemma:grad_w} Let $f_{\theta}: \mathbb{R}^d \to \mathbb{R}^k$ represent a deep neural network consisting of $l$ consecutive dense layers. Using the notation above, for $i = 1, 2,\dots, l$, we have
\begin{equation}\label{equation:weight_derivative}
    \|\nabla_x f_\theta(x)\|_2^2 \left(\frac{\|h_{i-1}(x)\|_2}{\|w_i\|_2\|\nabla_x h_{i-1}(x)\|_2}\right)^2 \leq \|\nabla_{w_i}f_{\theta}(x)\|_2^2,
\end{equation}
where $\|\cdot\|_2$ denotes the $L^2$ operator norm when applied to matrices and the $L^2$ norm when applied to vectors.
\end{lemma}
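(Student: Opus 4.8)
The plan is to exploit the factorization $f_\theta(x) = g_i(w_i h_{i-1}(x) + b_i)$ from Eqn.~\eqref{equation:f_in_terms_of_g}, writing $z = w_i h_{i-1}(x) + b_i$ for the pre-activation fed into the deeper subnetwork $g_i$. The key idea is to bound both $\|\nabla_x f_\theta(x)\|_2$ and $\|\nabla_{w_i} f_\theta(x)\|_2$ in terms of the same intermediate quantity, namely the operator norm $\|\nabla g_i(z)\|_2$ of the Jacobian of $g_i$ at $z$, and then eliminate it.

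First I would differentiate with respect to the input by the chain rule, obtaining
\[
\nabla_x f_\theta(x) = \nabla g_i(z)\, w_i\, \nabla_x h_{i-1}(x).
\]
Submultiplicativity of the $L^2$ operator norm then gives $\|\nabla_x f_\theta(x)\|_2 \le \|\nabla g_i(z)\|_2 \,\|w_i\|_2\, \|\nabla_x h_{i-1}(x)\|_2$, which rearranges to the lower bound
\[
\|\nabla g_i(z)\|_2 \ \geq\ \frac{\|\nabla_x f_\theta(x)\|_2}{\|w_i\|_2\, \|\nabla_x h_{i-1}(x)\|_2}.
\]
Next I would lower bound $\|\nabla_{w_i} f_\theta(x)\|_2$. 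A perturbation $\delta w_i$ shifts the pre-activation by $(\delta w_i) h_{i-1}(x)$ and hence the output by $\delta f = \nabla g_i(z)\,(\delta w_i)\, h_{i-1}(x)$, so $\nabla_{w_i} f_\theta(x)$ is precisely the linear operator $\delta w_i \mapsto \nabla g_i(z)(\delta w_i) h_{i-1}(x)$. To bound its operator norm from below it suffices to test it on one unit-norm perturbation. Taking the rank-one matrix $\delta w_i = v\, h_{i-1}(x)^{\top}/\|h_{i-1}(x)\|_2$, where $v$ is a unit vector realizing $\|\nabla g_i(z)\,v\|_2 = \|\nabla g_i(z)\|_2$, one checks that $\delta w_i$ has both Frobenius and operator norm equal to $1$ and that $(\delta w_i) h_{i-1}(x) = v\,\|h_{i-1}(x)\|_2$. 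Hence
\[
\|\nabla_{w_i} f_\theta(x)\|_2 \ \geq\ \|\nabla g_i(z)\,v\|_2\, \|h_{i-1}(x)\|_2 \ =\ \|\nabla g_i(z)\|_2\, \|h_{i-1}(x)\|_2 .
\]
Finally I would substitute the lower bound on $\|\nabla g_i(z)\|_2$ into this inequality and square both sides, which yields exactly Eqn.~\eqref{equation:weight_derivative}.

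The main obstacle is the second step: correctly interpreting $\nabla_{w_i} f_\theta(x)$ as a linear operator acting on matrix-valued weight perturbations, and selecting a single test direction that simultaneously aligns with the top right singular vector of $\nabla g_i(z)$ and with $h_{i-1}(x)$. The rank-one choice is what makes the operator and Frobenius norms of the test perturbation coincide, so that the bound is valid for the stated operator-norm convention, and it is also what cleanly extracts the factor $\|h_{i-1}(x)\|_2$. Getting this alignment and normalization right is the crux; the chain-rule identity and the concluding algebra are routine. This mirrors the perturbation argument of \cite{SeongLKHK18} referenced before the lemma.
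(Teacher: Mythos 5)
Your proof is correct, and it reaches the inequality by a genuinely different route than the paper. The paper's argument is a perturbation-transfer construction: it exhibits, for each infinitesimal input perturbation $\delta x$, a weight perturbation $u(\delta x) = w_i(\nabla_x h_{i-1}(x)\delta x)h_{i-1}(x)^{\top}/\|h_{i-1}(x)\|_2^2$ with $f_{w_i}(x+\delta x) = f_{w_i+u(\delta x)}(x)$ to first order, differentiates this identity to obtain $\nabla_x f_{\theta}(x) = \nabla_{w_i}f_{\theta}(x)\circ M$ where $M:\delta x\mapsto u(\delta x)$, and then applies submultiplicativity with $\|M\|_2 \leq \|w_i\|_2\|\nabla_x h_{i-1}(x)\|_2/\|h_{i-1}(x)\|_2$. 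You instead factor both Jacobians through the common bottleneck $\nabla g_i(z)$: an upper bound on $\|\nabla_x f_\theta(x)\|_2$ by the chain rule and submultiplicativity, and a lower bound on $\|\nabla_{w_i}f_\theta(x)\|_2$ by evaluating the operator $\delta w_i \mapsto \nabla g_i(z)(\delta w_i)h_{i-1}(x)$ on the rank-one test matrix $v\,h_{i-1}(x)^{\top}/\|h_{i-1}(x)\|_2$, whose Frobenius norm is $1$ (matching the vectorized-Jacobian convention used later in the Transfer Theorem, where $\|\cdot\|_F \geq \|\cdot\|_2$ is invoked). The same rank-one structure $(\cdot)\,h_{i-1}^{\top}/\|h_{i-1}\|$ appears in both arguments, so the two proofs are close cousins, but yours has the advantage of working only with exact differentials of exact identities, whereas the paper must first identify $h_{i-1}(x+\delta x)$ with its linearization before differentiating the approximate equality $f_{w_i}(x+\delta x)=f_{w_i+u(\delta x)}(x)$; what the paper's version buys in exchange is the explicit ``input perturbations transfer to weight perturbations'' picture that motivates the name of the Transfer Theorem. (In the degenerate case $h_{i-1}(x)=0$ your test direction is undefined, but the left-hand side of the claimed inequality vanishes there, so the statement is trivially true; the paper's construction has the same caveat.)
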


\begin{proof}
For $i=1, 2,\dots, l$ and following the notation above, the model function as it depends on the weight matrix $w_i$ of layer $i$ can be written as  $f_{w_i}(x) = g_i(w_ih_{i-1}(x) + b_i)$. Now consider a small perturbation $x+\delta x$ of the input $x$.
There exists a corresponding perturbation $w_i + u(\delta x)$ of the weight matrix in layer $i$ such that
\begin{equation}\label{equation:equivalence}
f_{w_i}(x + \delta x) = f_{w_i + u(\delta x)}(x).
\end{equation}
To see this, note that for sufficiently small $\delta x$ we can identify $h_i(x+\delta x)$ with its linear approximation around $x$  so that $h_{i}(x + \delta x) = h_{i}(x) + \nabla_x h_{i}(x)\delta x$, where $\nabla_x h_{i}:\mathbb{R}^d \to \mathbb{R}^{|h_i|}$ is the total derivative of $h_i: \mathbb{R}^d \to \mathbb{R}^{|h_i|}$. Thus, by representing $f_{w_i}$ as in (\ref{equation:f_in_terms_of_g}), we have
\begin{eqnarray*}
    f_{w_i}(x + \delta x) &=& g_i(w_ih_{i-1}(x + \delta x) + b_i) \\
    &=& g_i(w_ih_{i-1}(x) + w_i\nabla_xh_{i-1}(x)\delta x + b_i).
\end{eqnarray*}
Similarly for $f_{w_i + u(\delta x)}(x)$, we have
\begin{eqnarray*}
    f_{w_i + u(\delta x)}(x) &=& g_i((w_i + u(\delta x))h_{i-1}(x) + b_i) \\
    &=& g_i(w_ih_{i-1}(x) + u(\delta x)h_{i-1}(x) + b_i).
\end{eqnarray*}

Thus Eqn.~(\ref{equation:equivalence}) is satisfied provided $u(\delta x)h_{i-1}(x) = w_i\nabla_x h_{i-1}(\delta x)$. Using the fact that $h_i(x)h_i(x)^T = \|h_i(x)\|_2^2$ and rearranging terms, we get

\begin{equation}\label{equation:perturbation}
    u(\delta x) = \frac{w_i (\nabla_xh_{i-1}(x)\delta x) h_{i-1}(x)^T}{\|h_{i-1}(x)\|_2^2}.
\end{equation}

Defining $u(\delta x)$ as in Eqn.~(\ref{equation:perturbation}) and taking the derivative of both sides of Eqn.~(\ref{equation:equivalence}) with respect to $\delta x$ at $\delta x = 0$ via the chain rule, we get (since by Eqn.~(\ref{equation:perturbation}) we have $u(\delta x)$ is linear in $\delta x$)

\begin{equation}
    \nabla_x f_{w_i}(x) = \nabla_{w_i} f_{\theta}(x) \frac{(w_i \nabla_xh_{i-1}(x))h_{i-1}^T(x)}{\|h_{i-1}(x)\|_2^2}.
\end{equation}

Finally, taking the square of the $L^2$ operator norm $\|\cdot\|_2$ on both sides, we have
\begin{eqnarray*}
    \|\nabla_x f_{w_i}(x) \|_2^2 &=& \left\|\nabla_{w_i} f_{\theta}(x) \dfrac{w_i \nabla_xh_{i-1}(x)h_{i-1}^T(x)}{\|h_{i-1}(x)\|_2^2}\right\|_2^2 \\
    &\leq& \|\nabla_{w_i} f_{\theta}(x)\|_2^2 \dfrac{\|w_i\|_2^2 \|\nabla_xh_{i-1}(x)\|_2^2}{\|h_{i-1}(x)\|_2^2}. 
\end{eqnarray*}
Rearranging the terms and since $\nabla_x f_{w_i} = \nabla_x f_{\theta}$ we get \eqref{equation:weight_derivative} which completes the proof.
\end{proof}

Following the same argument, we can also prove the corresponding lemma with respect to the derivative of the biases $b_i$ at each layer:

\begin{lemma}\label{lemma:grad_wb}
Let $f_{\theta}: \mathbb{R}^d \to \mathbb{R}^k$ represent a deep neural network consisting of $l$ consecutive dense layers. Using the notation above, for $i = 1, 2,\dots, l$, we have
\begin{equation}\label{equation:bias_derivative}
    \|\nabla_x f_\theta(x)\|_2^2 \left(\frac{1}{\|w_i\|_2\|\nabla_x h_{i-1}(x)\|_2}\right)^2 \leq \|\nabla_{b_i}f_{\theta}(x)\|_2^2,
\end{equation}
where $\|\cdot\|_2$ denotes the $L^2$ operator norm when applied to matrices and the $L^2$ norm when applied to vectors.
\end{lemma}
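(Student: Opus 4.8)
The plan is to mirror the perturbation argument used for Lemma~\ref{lemma:grad_w}, but now absorbing a small input perturbation into the bias $b_i$ rather than the weight matrix $w_i$. Representing $f_{b_i}$ as in Eqn.~\eqref{equation:f_in_terms_of_g}, i.e.\ $f_{b_i}(x) = g_i(w_i h_{i-1}(x) + b_i)$, I would first look for a bias perturbation $b_i + v(\delta x)$ matching the input perturbation $x + \delta x$, in the sense that
\begin{equation}
f_{b_i}(x + \delta x) = f_{b_i + v(\delta x)}(x).
\end{equation}
Using the same first-order approximation $h_{i-1}(x + \delta x) = h_{i-1}(x) + \nabla_x h_{i-1}(x)\delta x$ as in the weight case, the left-hand side becomes $g_i\bigl(w_i h_{i-1}(x) + w_i \nabla_x h_{i-1}(x)\delta x + b_i\bigr)$ and the right-hand side becomes $g_i\bigl(w_i h_{i-1}(x) + b_i + v(\delta x)\bigr)$.

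The key simplification---and the reason this lemma is in fact easier than Lemma~\ref{lemma:grad_w}---is that the bias enters the pre-activation \emph{additively}, so matching the two arguments requires merely
\begin{equation}
v(\delta x) = w_i \nabla_x h_{i-1}(x)\,\delta x.
\end{equation}
No rank-one matrix equation of the form $u(\delta x) h_{i-1}(x) = w_i \nabla_x h_{i-1}(x)\delta x$ needs to be solved, and hence the factor $h_{i-1}(x)^T / \|h_{i-1}(x)\|_2^2$ that produced the numerator $\|h_{i-1}(x)\|_2$ in Eqn.~\eqref{equation:weight_derivative} is absent. This is precisely why that numerator is replaced by $1$ in Eqn.~\eqref{equation:bias_derivative}.

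To conclude, I would differentiate the equivalence with respect to $\delta x$ at $\delta x = 0$; since $v(\delta x)$ is linear in $\delta x$, the chain rule gives
\begin{equation}
\nabla_x f_{b_i}(x) = \nabla_{b_i} f_\theta(x)\,\bigl(w_i \nabla_x h_{i-1}(x)\bigr).
\end{equation}
Taking the squared $L^2$ operator norm on both sides, applying submultiplicativity $\|AB\|_2 \le \|A\|_2\|B\|_2$, and using $\nabla_x f_{b_i} = \nabla_x f_\theta$, yields $\|\nabla_x f_\theta(x)\|_2^2 \le \|\nabla_{b_i} f_\theta(x)\|_2^2\,\|w_i\|_2^2 \|\nabla_x h_{i-1}(x)\|_2^2$, which rearranges directly into Eqn.~\eqref{equation:bias_derivative}. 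Because the derivation is a strict simplification of Lemma~\ref{lemma:grad_w}, I do not expect a genuine obstacle; the only point that warrants care is the linearization step, which is valid only for $\delta x$ small enough that $h_{i-1}$ is well approximated by its differential at $x$---the same regularity caveat already implicit in the weight-matrix case.
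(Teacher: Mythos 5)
Your proposal is correct and follows essentially the same route as the paper's own proof: the same perturbation argument with $u(\delta x) = w_i \nabla_x h_{i-1}(x)\,\delta x$, the same differentiation at $\delta x = 0$ giving $\nabla_x f_{b_i}(x) = \nabla_{b_i} f_\theta(x)\, w_i \nabla_x h_{i-1}(x)$, and the same submultiplicativity step to conclude. Your observation that the additive role of the bias removes the rank-one matching equation (and hence the $\|h_{i-1}(x)\|_2$ numerator) is exactly the simplification the paper implicitly relies on.
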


\begin{proof}
The proof is similar in spirit to the proof of Lemma \ref{lemma:grad_w}. Namely, for small perturbations $x+ \delta x$ of the input $x$, we verify that we can find a corresponding small perturbation $b_i + u(\delta x)$ of the bias $b_i$ so that $f_{b_i}(x+ \delta x) = f_{b_i + u(\delta x)}(x)$. From Eqn.~(\ref{equation:f_in_terms_of_g}), this time in relation to the bias term, we can write $f_{b_i}(x) = g_i(w_ih_{i-1}(x) + b_i)$. Then, taking a small perturbation $x+\delta x$ of the input $x$ and simplifying as before, we get
\begin{equation}
    u(\delta x) = w_i \nabla_xh_{i-1}(x)\delta x.
\end{equation}
As before, taking this as our definition of $u(\delta x)$, and taking the derivative w.r.t.~$\delta(x)$ as before via the chain rule, we obtain
\begin{equation}
    \nabla_x f_{b_i}(x) = \nabla_{b_i}f_{\theta}(x) w_i\nabla_xh_{i-1}(x).
\end{equation}
Again, after taking the square of the $L^2$ operator norm on both sides, and moving the terms around we get
\begin{equation}
    \|\nabla_x f_\theta(x)\|_2^2 \left(\frac{1}{\|w_i\|_2\|\nabla_xh_{i-1}(x)\|_2}\right)^2 \leq \|\nabla_{b_i}f_{\theta}(x)\|_2^2.
\end{equation}
\end{proof}

It remains to prove Theorem \ref{theorem:transfer}:
\begin{proof}[Proof of Theorem \ref{theorem:transfer}]
Recall, that $f_{\theta}(x)$ represents a deep neural network consisting of $l$ layers and parameterized by $\theta = (w_1, b_1, w_2, b_2, \dots, w_l, b_l)$. Furthermore, we use the notation $f_{w_i}$ (resp.~$f_{b_i}$) to denote $f_{\theta}$ as dependent only on $w_i$ (resp.~$b_i$); i.e., all other parameter values are considered constant. Let  $\|\cdot\|_F$ denote the Frobenius norm. For this model structure, by Pythagoras's theorem, it follows that
\begin{equation}\label{equation:decomposition}
\|\nabla_\theta f_\theta(x)\|_F^2 = 
\|\nabla_{w_1} f_{\theta}(x)\|_F^2 + \|\nabla_{b_1} f_{\theta}(x)\|_F^2 
+ \cdots + 
\|\nabla_{w_l} f_{\theta}(x)\|_F^2 + \|\nabla_{b_l} f_{\theta}(x)\|_F^2.
\end{equation}

The remainder of the proof relies on the following general property of matrix norms:  Given a matrix $A \in \mathbb{R}^{m\times n}$, let $\sigma_{\max}(A)$ represent the largest singular value of $A$. The Frobenius norm $\|\cdot\|_F$ and the $L^2$ operator norm $\|\cdot\|_2$ are related by the following inequalities
\begin{equation}\label{equation:norm_inequality}
    \|A\|_2^2 = \sigma_{\max}^2(A) \leq \|A\|_F^2 = \sum_{i=1}^{\min(m, n)}\sigma_k^2 \leq \min(m,n)\cdot\sigma^2_{\max}(A) = \min(m, n)\cdot\|A\|^2_2
\end{equation}
where $\sigma_i(A)$ are the singular values of the matrix $A$. Considering $\nabla_xf_{\theta}$ as a map from $\mathbb{R}^d$ to $\mathbb{R}^k$, by (\ref{equation:norm_inequality}) it follows that
\begin{equation}\label{equation:norm_relation}
    \|\nabla_x f_{\theta}(x)\|_2^2 
    \quad\geq\quad
    \dfrac{1}{\min(d,k)}\|\nabla_x f_{\theta}(x)\|_F^2.
\end{equation}

By Lemma \ref{lemma:grad_w} and Lemma \ref{lemma:grad_wb}, we have that for $i=1,2,\dots,l$
\begin{equation}\label{equation:decomp_single_term}
\|\nabla_{w_i} f_{\theta}(x) \|_2^2 + \|\nabla_{b_i} f_{\theta}(x) \|_2^2 
\quad\geq\quad
\|\nabla_x f_{\theta}(x)\|_2^2\left(\dfrac{1 + \|h_{i-1}(x)\|_2^2}{\|w_i\|_2^2\|\nabla_xh_{i-1}(x)\|_2^2} \right).
\end{equation}
Therefore, we can re-write Eqn.~(\ref{equation:decomposition}) using (\ref{equation:norm_inequality}), (\ref{equation:norm_relation}) and (\ref{equation:decomp_single_term}) to get
\begin{eqnarray*}
    \|\nabla_\theta f_\theta(x)\|_F^2 &=& \sum_{i=1}^l\left( \|\nabla_{w_i} f_{\theta}(x)\|_F^2 + \|\nabla_{b_i} f_{\theta}(x)\|_F^2 \right) \\
    &\geq& \sum_{i=1}^l \|\nabla_{w_i} f_{\theta}(x)\|_2^2 + \|\nabla_{b_i} f_{\theta}(x)\|_2^2  \\
    &\geq&  \sum_{i=1}^l \|\nabla_x f_{\theta}(x)\|_2^2  \left(\dfrac{1 + \|h_{i-1}(x)\|_2^2}{\|w_i\|_2^2\|\nabla_xh_{i-1}(x)\|_2^2} \right) \\
    &\geq&  \|\nabla_x f_{\theta}(x)\|_F^2 \sum_{i=1}^l\dfrac{1}{\min(d,k)} \left(\dfrac{1 + \|h_{i-1}(x)\|_2^2}{\sigma^2_{\max}(w_i)\sigma^2_{\max}(\nabla_xh_{i-1}(x))} \right).
\end{eqnarray*}
For $i=1,2,\dots,l$, define
\begin{equation}
    T_i(x, \theta) := \dfrac{1}{\sqrt{\min(d,k)}}\dfrac{\sqrt{1 + \|h_{i-1}(x)\|_2^2}}{\sigma_{\max}(w_i)\sigma_{\max}(\nabla_xh_{i-1}(x))}.
\end{equation}
We call $T_i(x, \theta)$ the {\em transfer function for layer $i$} in the network.

Then, rearranging terms in the final inequality above, we get
\begin{equation}
    \|\nabla_x f_{\theta}(x)\|_F^2 \leq \dfrac{\|\nabla_\theta f_\theta(x)\|_F^2}{T_1^2(x, \theta) + \cdots + T_l^2(x, \theta)}.
\end{equation}
This completes the proof.
\end{proof}

\section{Experiment details}\label{appendix:experiments}

\subsection{Figure 1: Motivating 1D example} 

We trained a ReLU MLP with 3 layers of 300 neurons each to regress 10 data points on an exact parabola $(x, x^2)\in \R^2$ where $x$ ranges over 10 equidistant points in the interval $[-1,1]$. We use full batch gradient descent with learning rate $0.02$ for $100000$ steps/epochs. We plotted the model function across the $[-1, 1]$ range and computed its geometric complexity over the dataset at step 0 (initialization), step 10, step 1000, and step 10000 (close to interpolation). The network was randomly initialized using the standard initialization (i.e. the weights were sampled from a truncated normal distribution with variance inversely proportional to the number of input units and the bias terms were set to zero). This model was trained five separate times using five different random seeds. Each line marks the mean of the five runs and the shaded region is 95\% confidence interval over these five seeds.

\subsection{Figure 2: Geometric complexity and initialization} \label{section:figure_2}

\paragraph{Left and Middle:} We initialized several ReLU MLP's $f_{\theta_0}: \mathbb{R}^d \to \mathbb{R}^k$ with large input and output: $d=224 \times 224 \times 3$ and $k=1000$, with 500 neurons per layer, and with a varying number of layers $l\in [1, 2, 4, 8, 16, 32, 64]$. 
We used the standard initialization scheme: we sample them from a truncated normal distribution with variance inversely proportional to the number of input units and the bias terms were set to zero.
We measured and plotted the following quantities
\begin{eqnarray}
f_{\textrm{mean}}(x) & = & \textrm{mean}\left\{f_{\theta_0}^i(P_1 + (P_2 - P_1)x),\quad i=1,\dots, 1000\right\},  \\
f_{\textrm{max}}(x) & = & \textrm{max}\left\{(|f_{\theta_0}^i(P_1 + (P_2 - P_1)x))| ,\quad i=1,\dots, 1000\right\},
\end{eqnarray}
with $x$ ranging over 50 equidistant points in the interval $[0,1]$. The points $P_1$  and $P_2$ where chosen to be the two diagonal points $(-1, \dots, -1)$ and $(1, \dots, 1)$, respectively, of the normalized data hyper-cube $[-1,1]^d$. Each line marks the mean of the 5 runs and the shaded region is 95\% confidence interval over these five seeds.

What we observe with the standard initialization scheme also persists with the Glorot initialization scheme, where the biases are set to zero, and the weight matrices parameters are sampled from the uniform distribution on $[-1,1]$ and scaled at each layer by
$
\sqrt{6/d_l + d_{l-1}}
$,
where $d_l$ is the number of units in layer $l$. We report this in Fig. \ref{fig:initialization_glorot} below.

\begin{figure}[h]
  \centering
  \includegraphics[width=0.7
  \linewidth]{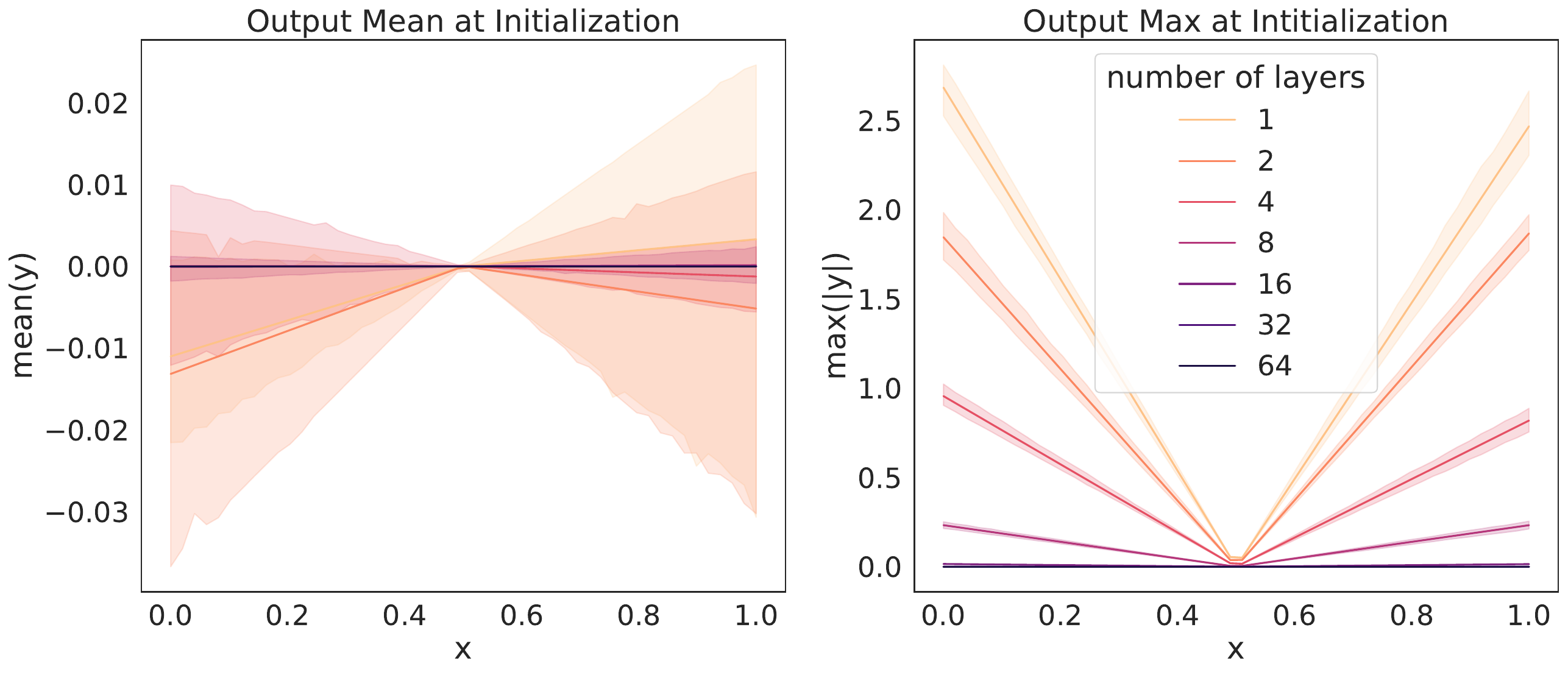}
  \caption{Additional plots to complement Fig. 2: Deeper neural Relu networks initialize closer to the zero function with the Glorot scheme on normalized data. We repeat the setup of Fig. 2 Left and Middle described in Section \ref{section:figure_2} but with the Glorot scheme instead of the standard scheme.}
  \label{fig:initialization_glorot}
\end{figure}

\paragraph{Right:} 
We measured the geometric complexity $\langle f_{\theta_0},\, D\rangle_G$ at initialization for ReLU MLP's $f_{\theta_0}: \mathbb{R}^d \to \mathbb{R}^k$ with large input and output: $d=224 \times 224 \times 3$ and $k=1000$, with 500 neurons per layer, and with a varying number of layers $l\in [1, 2, 4, 8, 16, 32, 64]$. The geometric complexity was computed over a dataset $D$ of 100 points sampled uniformly from the normalized data hyper-cube $[-1, 1]^d$. For each combination of activation and initialization in $\{\textrm{ReLU}, \textrm{sigmoid}\} \times \{\textrm{standard}, \textrm{Glorot}\}$ we repeated the experiment 5 times with different random seeds. The Glorot initialization scheme is the one described in the paragraph above. For the standard initialization we set the biases to zero, and initialized the weight matrices parameters from a normal distribution truncated to the range $[-2, 2]$ and rescaled using $1/\sqrt{d_{l-1}}$ at each layer. We plotted the mean geometric complexity with error bars representing the 95\% confidence interval over the 5 random seeds. The error bars are tiny in comparison of the plotted quantities and are therefore not visible on the plot. 

\subsection{Figure 3: Geometric complexity and explicit regularization}

\paragraph{Left:}
We trained a ResNet18 on CIFAR10 three times with different random seeds with a learning rate of 0.02, batch size of 512, for 10000 steps for each combination of regularization rate and regularization type in 
$
\{\textrm{L2}, \textrm{Spectral}, \textrm{Flatness}\} 
\times 
[0, 0.01, 0.025, 0.05, 0.075, 0.1].
$
We measured the geometric complexity at time of maximum test accuracy for each of these runs and plotted the mean  with 95\% confidence interval over the 3 random seeds. 
For the L2 regularization we added the sum of the parameter squares to the loss multiplied with the regularization rate. For the spectral regularization, we followed the procedure described in \cite{Yoshida2017SpectralNR} by adding to the loss the penalty $(\alpha/2)\sum_i \sigma_{\max}(W_i)^2$ where $W_i$ is either the layer weight matrix for a dense layer or, for a convolution layer, the matrix  of shape $b \times ak_wk_h$ obtained by reshaping the convolution layer with $a$ input channels, $b$ output channels, and a kernel of size $k_w \times k_h$. For the flatness regularization, we added to the batch loss $L_B$ at each step the norm square of the batch loss gradient $\|\nabla_\theta L_B(\theta)\|^2$ multiplied by the regularization rate.

\begin{figure}[h]
  \centering
  \includegraphics[width=0.4
  \linewidth]{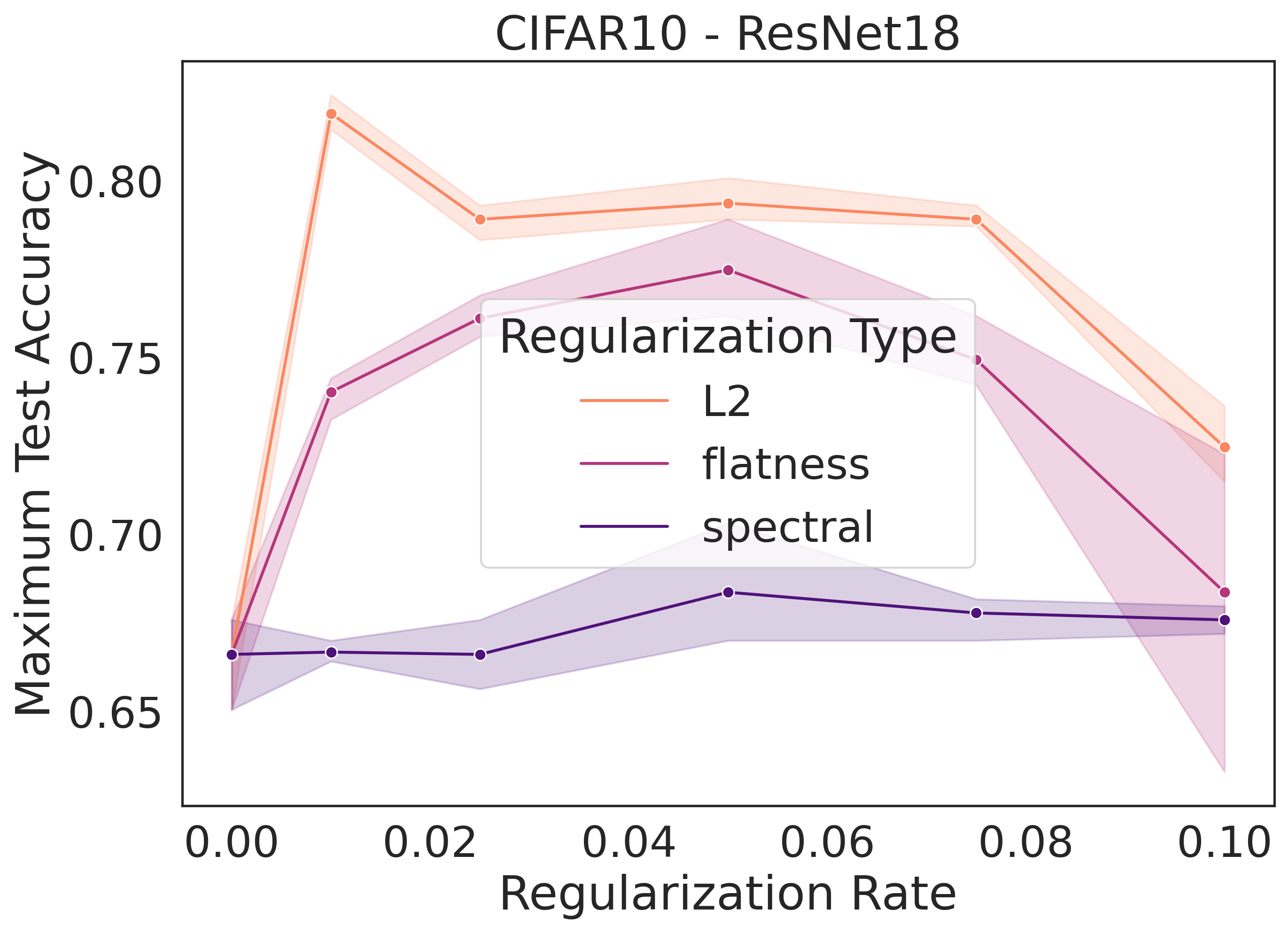}
   \caption{Additional plot to complement Fig. \ref{fig:explicit_regularization}: Maximum test accuracy recorded for different levels of explicit regularization.}
  \label{fig:explicit_regularization_test_accuracy}
\end{figure}

\begin{figure}[h]
  \centering
  \includegraphics[width=1
  \linewidth]{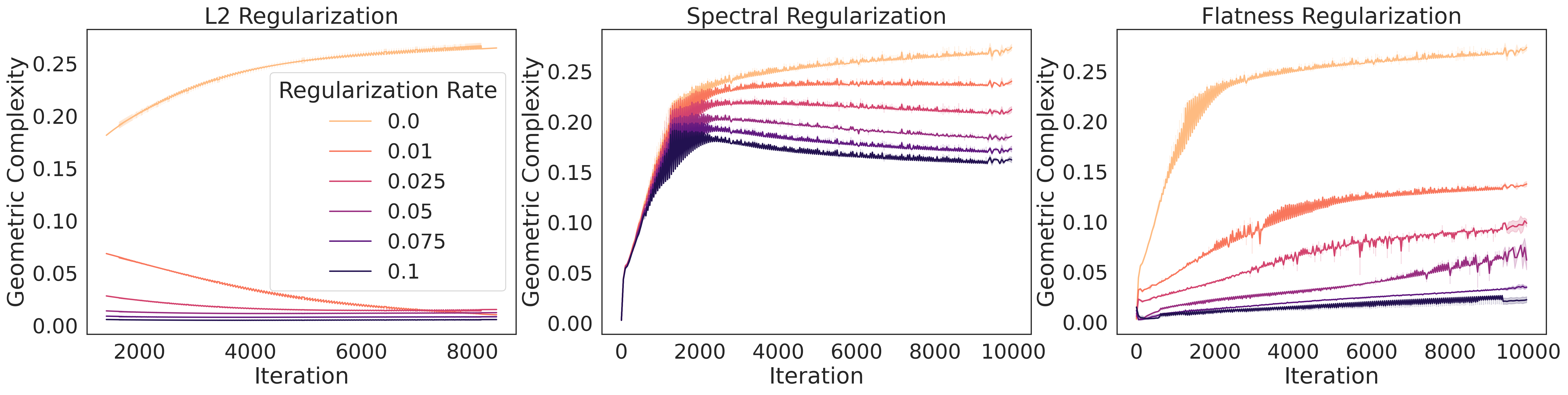}
  \caption{Additional plots to complement Fig. \ref{fig:explicit_regularization}: GC plotted against training iterations for different explicit regularization types and rates for ResNet18 trained on CIFAR10.}
  \label{fig:explicit_regularization_learning_curves}
\end{figure}

\begin{figure}[h]
  \centering
  \includegraphics[width=0.5
  \linewidth]{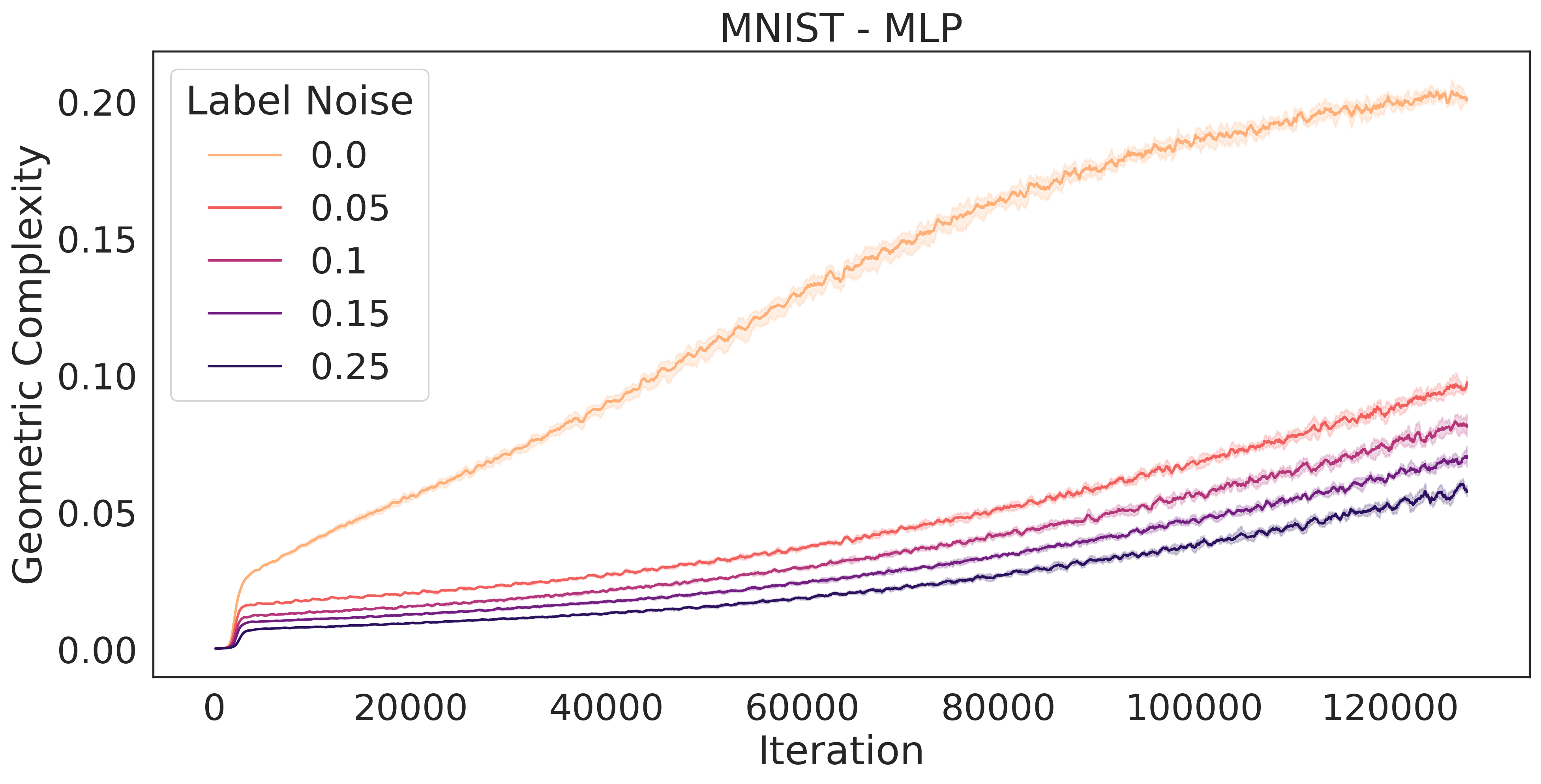}
  \includegraphics[width=0.5
  \linewidth]{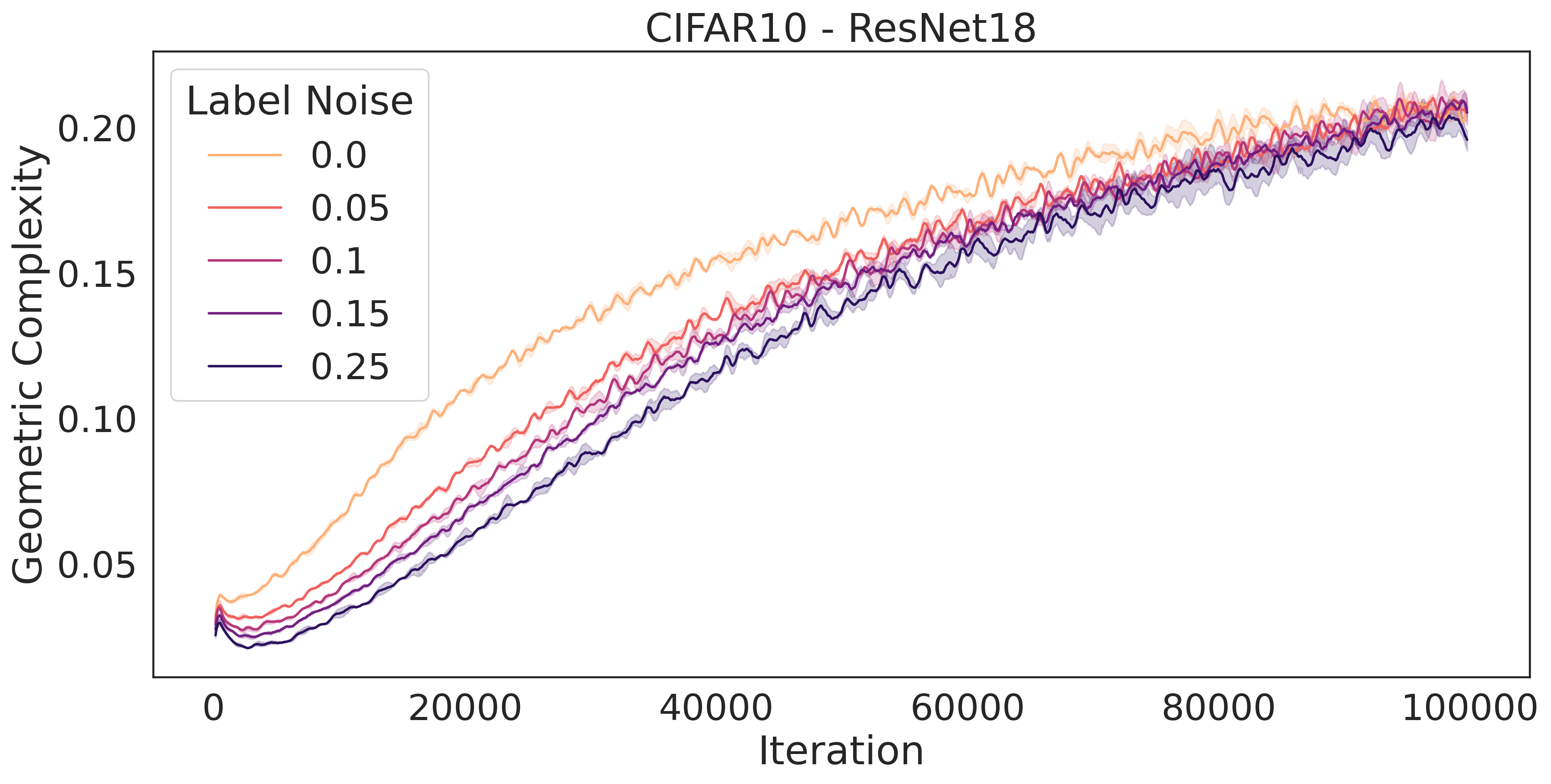}
  \caption{Additional plots to Fig. \ref{fig:explicit_regularization} experiments: GC plotted against training iterations for different label noise proportions.}
  \label{fig:label_noise_learning_curve_mnist}
\end{figure}

\paragraph{Middle:}
We trained a ResNet18 on CIFAR10 with learning rate 0.005 and batch size 16 for 100000 steps with varying proportion of label noise. For each label noise proportion $\alpha \in [0, 0.05, 0.1, 0.15, 0.25]$, we trained three times with different random seeds. We plotted the mean geometric complexity at time of maximum test accuracy as well as the 95\% confidence interval over the three runs. The label noise was created by mislabelling $\alpha$ \% of the true labels before training. 

\paragraph{Right:}
We trained a MLP with six layers of 214 neurons on MNIST with learning rate 0.005 and batch size 32 for 128000 steps with varying proportion of label noise. For each label noise proportion $\alpha \in [0, 0.05, 0.1, 0.15, 0.25]$, we trained the neural network 5 times with a different random seed. We plotted the mean geometric complexity at time of maximum accuracy as well as the 95\% confidence interval over the 5 runs. The label noise was created by mislabelling $\alpha$ \% of the true labels before training. 

\subsection{Figure 4: Geometric complexity and implicit regularization}

\paragraph{Top row (varying learning rates):}
We trained a ResNet18 on CIFAR10 with batch size 512, for 30000 steps with varying learning rates. For each learning rate in $[0.005, 0.01, 0.05, 0.1, 0.2]$, we trained the neural network three times with a different random seed. We plotted the learning curves for the test accuracy, the geometric complexity, and the training loss, where the solid lines represent the mean of these quantities over the three runs and the shaded area represents the 95\% confidence interval over the three runs. We applied a smoothing over 50 steps for each run, before computing the mean and the confidence interval of the three runs. 

\paragraph{Bottom row (varying batch sizes):}

We trained a ResNet18 on CIFAR10 with learning rate 0.2, for 100000 steps with varying batch sizes. For each batch size in $[8, 16, 32, 64, 128, 256, 512, 1024]$, we trained the neural network three times with a different random seed. We plotted the learning curves for the test accuracy, the geometric complexity, and the training loss, where the solid lines represent the mean of these quantities over the three runs and the shaded area represents the 95\% confidence interval over the three runs. We applied a smoothing over 50 steps for each run, before computing the mean and the confidence interval of the three run.

\subsection{Figure 5: Geometric complexity and double-descent}

We train a ResNet18 on CIFAR10 with learning rate 0.8, batch size 124, for 100000 steps, with 18 different network widths. For each network width in $[1, 2, 3, 4, 5, 6, 7, 8, 9, 10, 11, 12, 13, 14, 20, 30, 64, 100, 128]$ we train three times with a different random seed.

\paragraph{Left:} We measure the geometric complexity as well as the test loss at the end of training. We plot the mean and the 95\% confidence interval of both quantities over the three runs against the network width. The critical region in yellow identified experimentally in \cite{deep_double_descent} indicates the transition between the under-parameterized regime and the over-parameterized regime, where the second descent starts. 

\paragraph{Right:} We plot the test loss measured at the end of training against the geometric complexity at the end of training. The top plot shows every model width for every seed, while the bottom plot shows averages of these quantities over the three seeds. We then fit the data with a polynomial of degree six to sufficiently capture any high order relationship between the test loss and geometric complexity.

\paragraph{Definition of width:} We follow the description of ResNet width discussed in \cite{deep_double_descent}. The ResNet18 architecture we follow is that of \cite{he2016identity}.  The original ResNet18 architecture has four successive ResNet blocks each formed by two identical stacked sequences of a BatchNorm and a convolution layer. The number of filters for the convolution layers in each of the successive ResNet  block is $(k, 2k, 4k, 8k)$ with $k=64$ for the original RestNet18. Following \cite{deep_double_descent}, we take $k$ to be the width of the network, which we vary during our experiments.

\newpage

\section{Additional Experiments}\label{appendix:additional_experiments}

\subsection{Geometric complexity at initialization decreases with added layers on large domains}\label{appendix:initialization_on_large_domains}

We reproduce the experiments from Fig. 2 on a larger domain with the same conclusion: For both the standard and the Glorot initialization schemes ReLU networks initialize closer to the zero function and with lower geometric complexity as the number of layer increases.

\begin{figure}[h]
  \centering
  \includegraphics[width=0.4
  \linewidth]{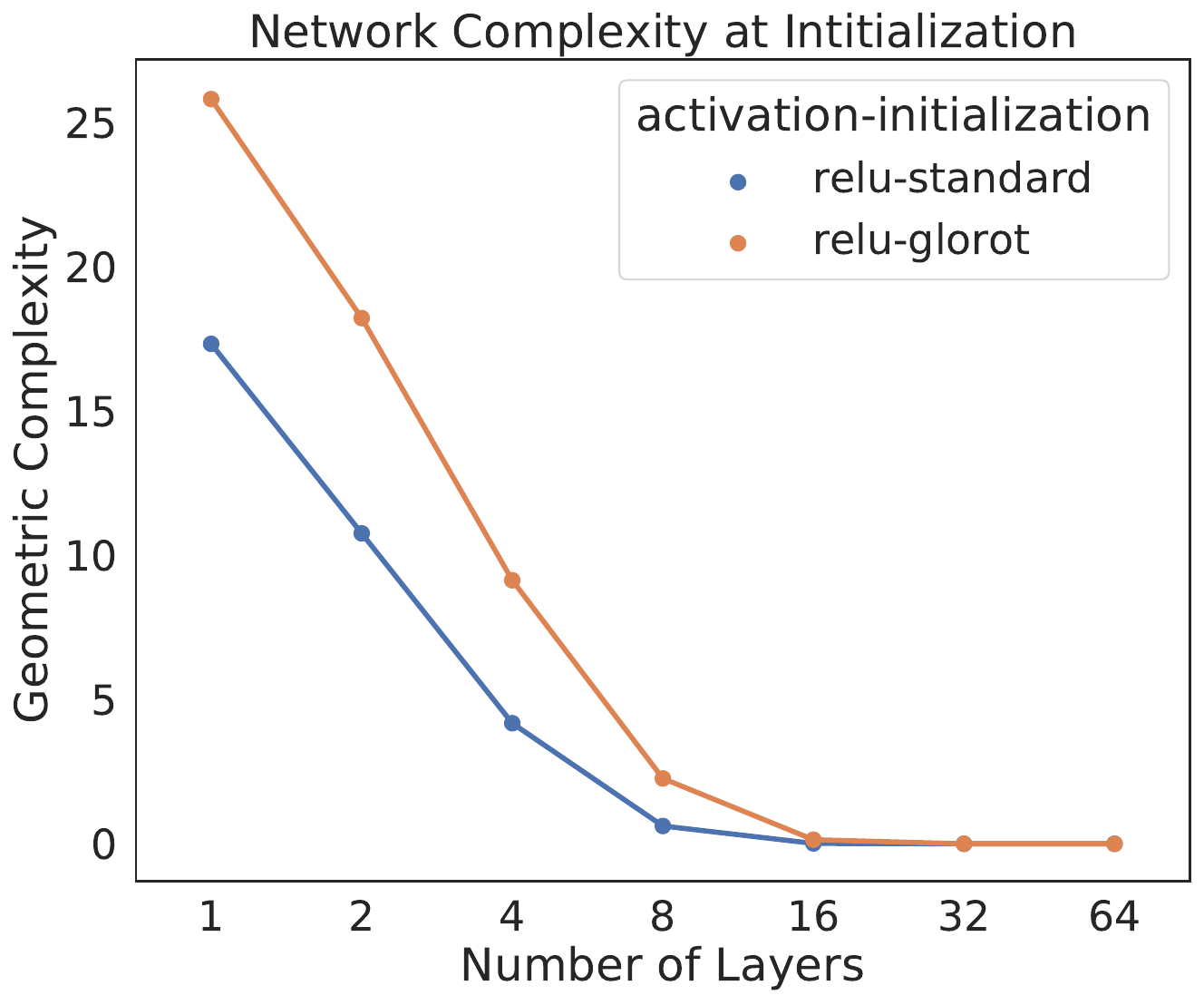}
  \caption{{\bf Geometric complexity at initialization decreases with the number of layers up to zero even on large domains:} We repeat the setup of Fig. 2 Right described in Section \ref{section:figure_2} but we sample the dataset from points in a large domain $[-1000, 1000]^d$ instead of the normalized hyper-cube $[-1, 1]^d$. We measure the geometric complexity for both the Glorot and the standard initialization schemes.}
  \label{fig:initialization_large}
\end{figure}

\begin{figure}[h]
  \centering
  \includegraphics[width=0.7
  \linewidth]{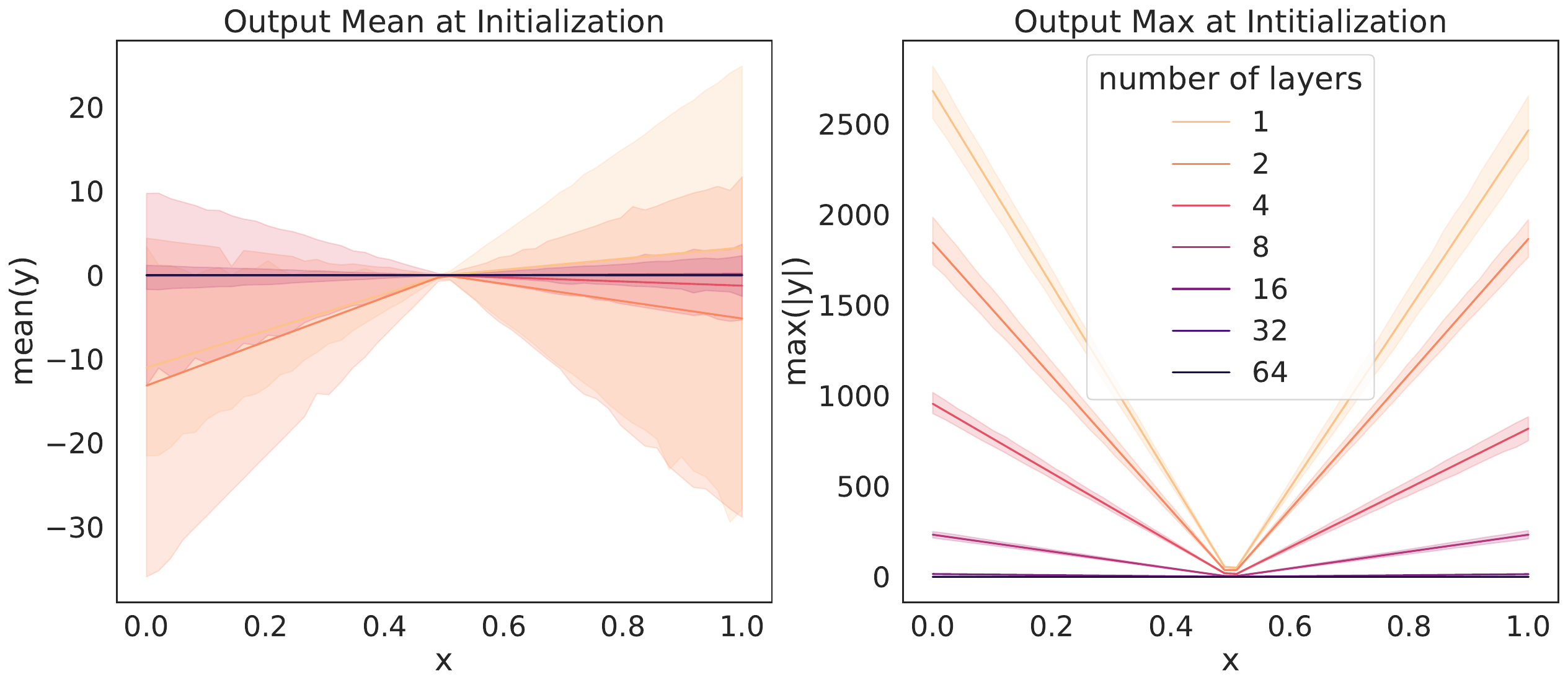}
  \caption{{\bf Deeper neural Relu networks initialize closer to the zero function with the Glorot scheme even on large domains:} We repeat the setup of Fig. 2 Left and Middle described in Section \ref{section:figure_2} but we evaluate the networks on a diagonal of the larger hyper-cube $[-1000, 1000]^d$ instead of the normalized hyper-cube $[-1, 1]^d$.}
  \label{fig:initialization_large_relu_glorot}
\end{figure}

\begin{figure}[h]
  \centering
  \includegraphics[width=0.7
  \linewidth]{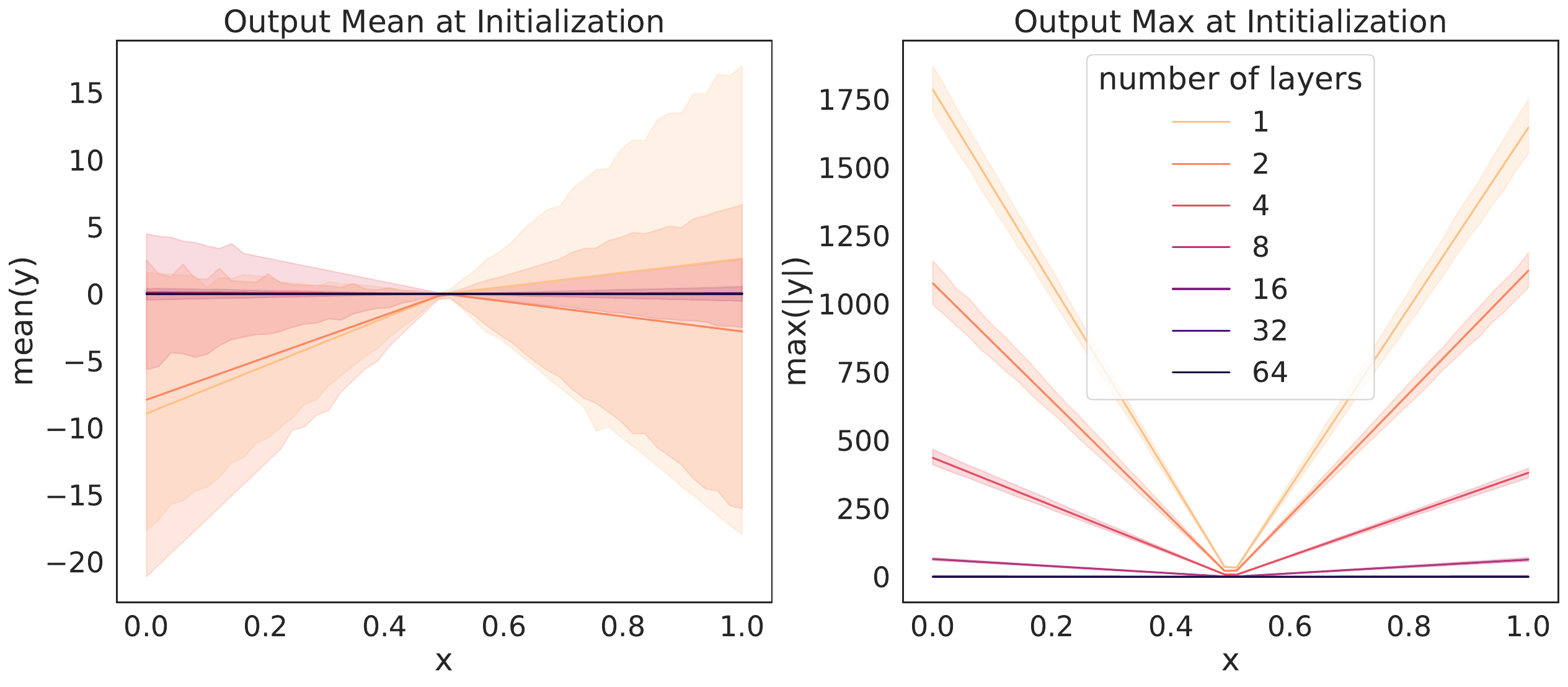}
  \caption{{\bf Deeper neural Relu networks initialize closer to the zero function with the standard scheme even on large domains:} We repeat the setup of Fig. 2 Left and Middle described in Section \ref{section:figure_2} but we evaluate the networks on a diagonal of the larger hyper-cube $[-1000, 1000]^d$ instead of the normalized hyper-cube $[-1, 1]^d$.}
  \label{fig:initialization_large_relu_glorot}
\end{figure}

\newpage 

\subsection{Geometric complexity decreases with implicit regularization for MNIST}

We replicate the implicit regularization experiments we conducted for CIFAR10 with ResNet18 in Fig. 4 for MLP's trained on MNIST. The conclusion is the same: higher learning rates and smaller batch sizes decrease geometric complexity through implicit gradient regularization, and are correlated with higher test accuracy.

\begin{figure}[h]
  \centering
  \includegraphics[width=1
  \linewidth]{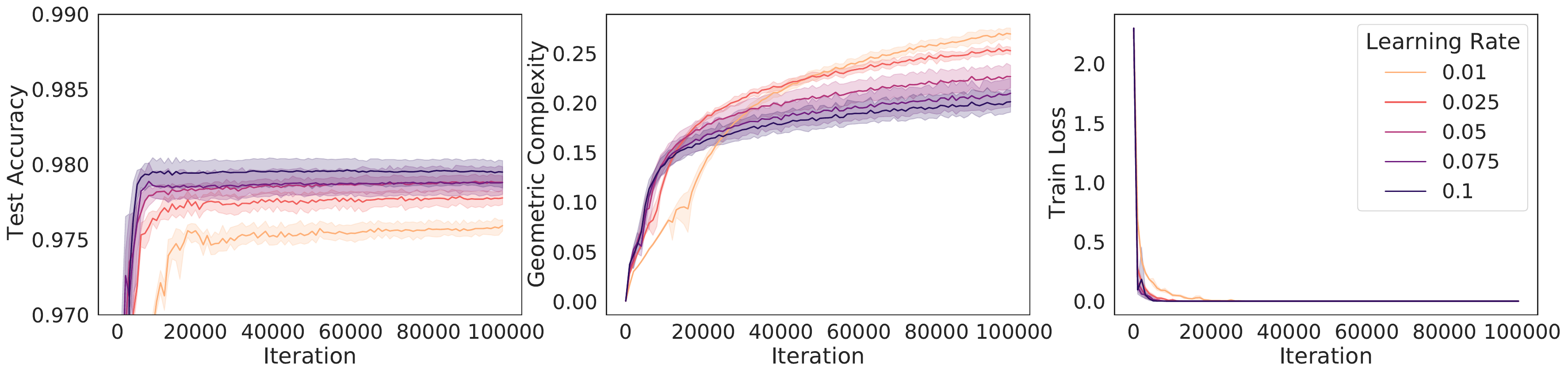}
  \caption{{\bf Geometric complexity decreases with higher learning rates on MNIST}: We trained  a selection of MLP's with 6 hidden layers with 500 neurons per layer on MNIST with batch size of 512, for 100000 steps and with varying batch sizes. For each learning rate in  $[0.01, 0.025, 0.05, 0.075, 0.1]$, we trained 5 different times with a different random seed. The MLP were initialized using the standard initialization scheme.}
  \label{fig:learning_rates_mnist}
\end{figure}

\begin{figure}[h]
  \centering
  \includegraphics[width=1
  \linewidth]{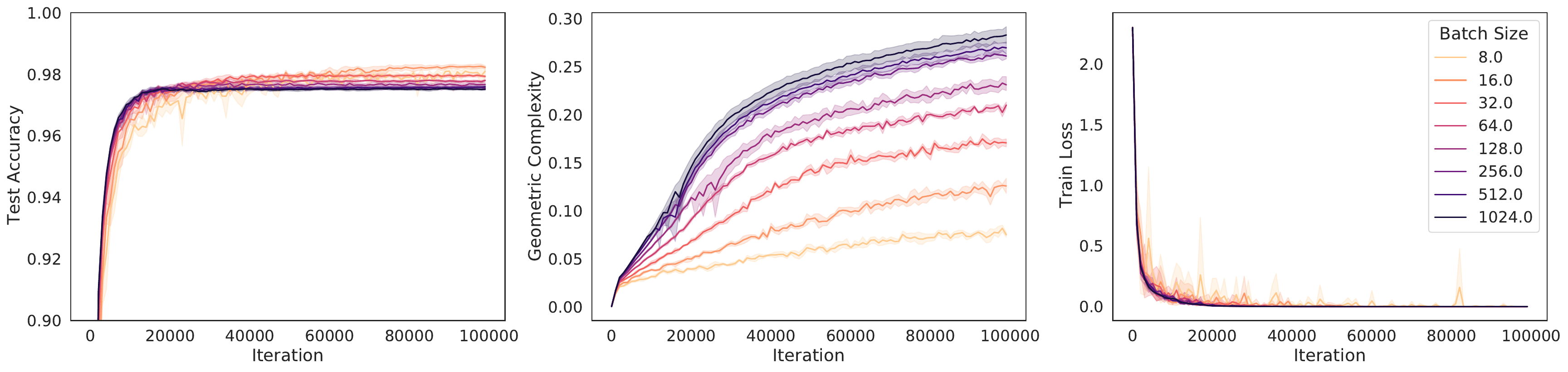}
  \caption{{\bf Geometric complexity decreases with smaller batch sizes on MNIST}: We trained  a selection of MLP's with 6 hidden layers with 500 neurons per layer on MNIST with learning rate of 0.02, for 100000 steps and with varying batch sizes. For each batch size in $[8, 16, 32, 64, 128, 256, 512, 1024]$, we trained 5 different times with a different random seed. The MLP were initialized using the standard initialization scheme.}
  \label{fig:batch_sizes_mnist}
\end{figure}

\newpage

\subsection{Geometric complexity decreases with explicit regularization for MNIST}

We replicate the explicit regularization experiments we conducted for CIFAR10 with ResNet18 in Fig. 3 for MLP's trained on MNIST with similar conclusions: higher regularization rates for L2, spectral, and flatness regularization decrease geometric complexity and are correlated with higher test accuracy.

\begin{figure}[h]
  \centering
  \includegraphics[width=1
  \linewidth]{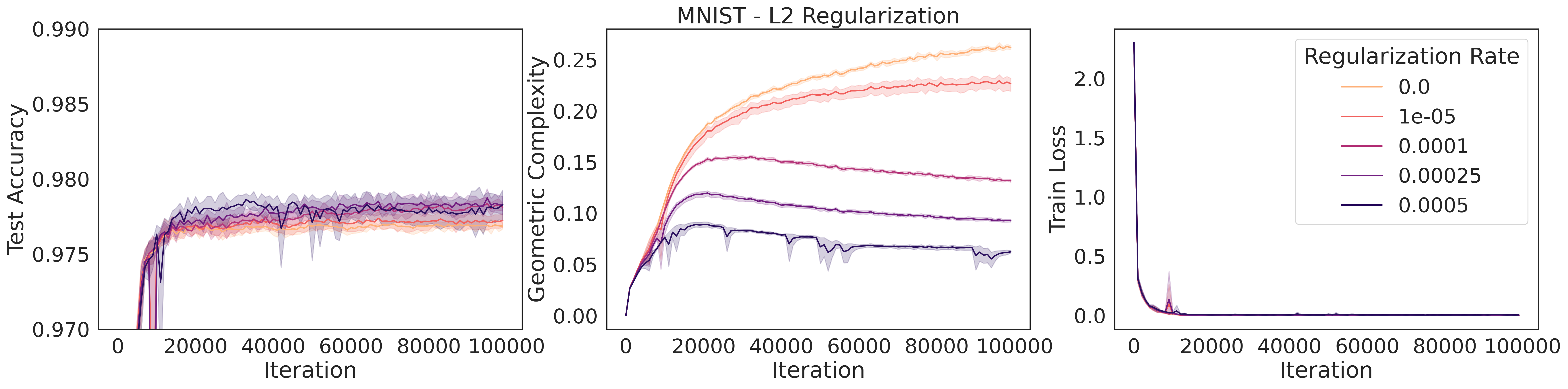}
  \caption{{\bf Geometric complexity decreases with L2 regularization on MNIST}: We trained  a selection of MLP's with 6 hidden layers with 500 neurons per layer on MNIST with learning rate of 0.02, batch size of 512, for 100000 steps. We regularized the loss by adding to it the L2 norm penalty $\alpha\sum_i \|W_i\|_F^2$ where $W_i$ are the layer weight matrices. For each regularization rate  $\alpha \in [0, 0.00001, 0.0001, 0.00025, 0.0005]$, we trained 5 different times with a different random seed. The MLP were initialized using the standard initialization scheme.}
  \label{fig:l2_regularization_mnist}
\end{figure}

\begin{figure}[h]
  \centering
  \includegraphics[width=1
  \linewidth]{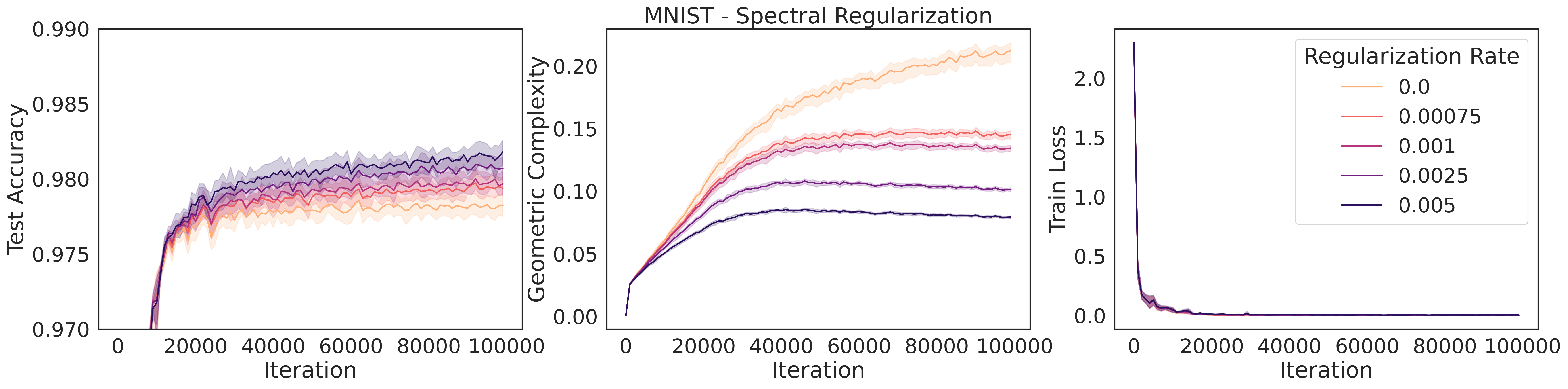}
  \caption{{\bf Geometric complexity decreases with spectral regularization on MNIST}: We trained  a selection of MLP's with 4 hidden layers with 200 neurons per layer on MNIST with learning rate of 0.02, batch size of 128, for 100000 steps. We regularized the loss by adding to it the spectral norm penalty $\alpha/2\sum_i \sigma_{\max}(W_i)^2$ where $W_i$ are the layer weight matrices as described in \cite{miyato2018spectral}. For each regularization rate  $\alpha \in [0, 0.00075, 0.001, 0.0025, 0.005]$, we trained 5 different times with a different random seed. The MLP were initialized using the standard initialization scheme.}
  \label{fig:spectral_regularization_mnist}
\end{figure}

\begin{figure}[h]
  \centering
  \includegraphics[width=1
  \linewidth]{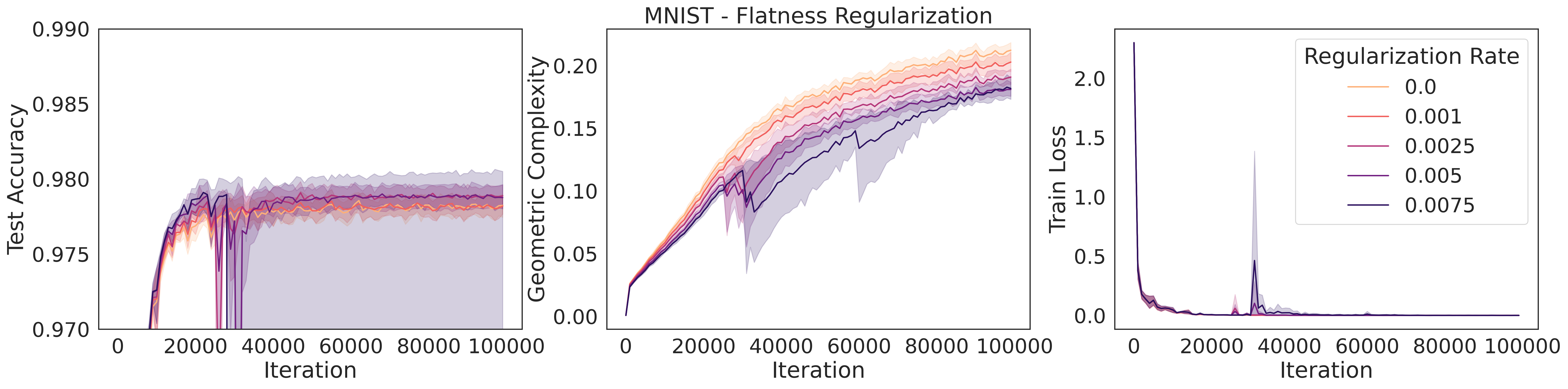}
  \caption{{\bf Geometric complexity decreases with flatness regularization on MNIST}: We trained  a selection of MLP's with 4 hidden layers with 200 neurons per layer on MNIST with learning rate of 0.02, batch size of 128, for 100000 steps. We regularized the loss by adding to it the gradient penalty $\alpha \|\nabla_\theta L_B(\theta)\|^2$ where $L_B$ is the batch loss. For each regularization rate  $\alpha \in [0, 0.001, 0.0025, 0.005, 0.0075]$, we trained 5 different times with a different random seed. The MLP were initialized using the standard initialization scheme.}
  \label{fig:flatness_regularization_mnist}
\end{figure}

\clearpage

\subsection{Explicit geometric complexity regularization for MNIST and CIFAR10}\label{appendix:gc_regularization_additional_experiments}

In this section, we explicitly regularize for the geometric complexity. This is a known form of regularization also called Jacobian regularization \cite{hoffman2020robust, Sokolic2017RobustLM, varga2018gradient, Yoshida2017SpectralNR}.
We first perform this regularization for a MLP trained on MNIST (Fig. \ref{fig:explicit_gc_regularization_mnist}) and then for a ResNet18 trained on CIFAR10 (Fig. \ref{fig:explicit_gc_regularization_cifar}) with the following conclusion: test accuracy increases with higher regularization strength while the geometric complexity decreases. 

\begin{figure}[h]

  \centering
  \includegraphics[width=1
  \linewidth]{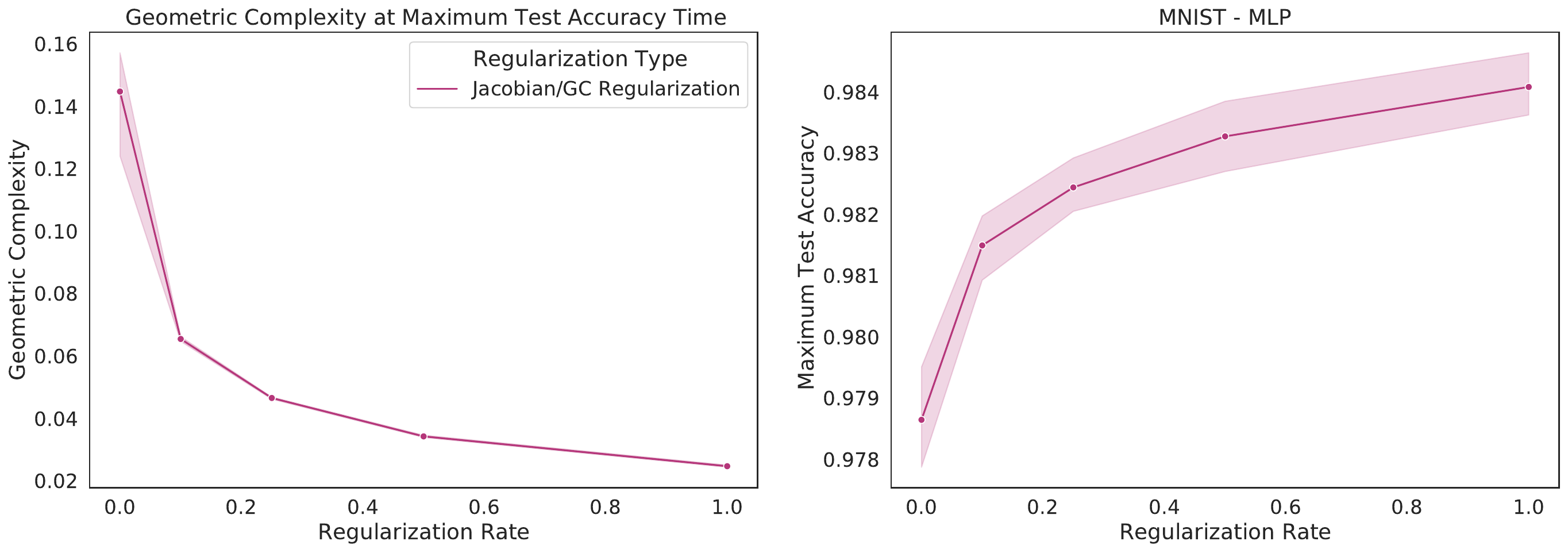}
  \caption{{\bf Test accuracy increases with explicit GC regularization on MNIST}: We trained  a selection of MLP's with 4 hidden layers with 200 neurons per layer on MNIST with learning rate of 0.02, batch size of 128, for 100000 steps. We regularized the loss by adding to it the gradient penalty $\alpha /B \sum_{x \in B}\|\nabla_x f_\theta(x)\|^2_F$ where $f_\theta(x)$ is the logit network. For each regularization rate  $\alpha \in [0, 0.1, 0.25, 0.5, 1]$, we trained 5 different times with a different random seed. The MLP were initialized using the standard initialization scheme.}
  \label{fig:explicit_gc_regularization_mnist}
\end{figure}

\begin{figure}[h]

  \centering
  \includegraphics[width=1
  \linewidth]{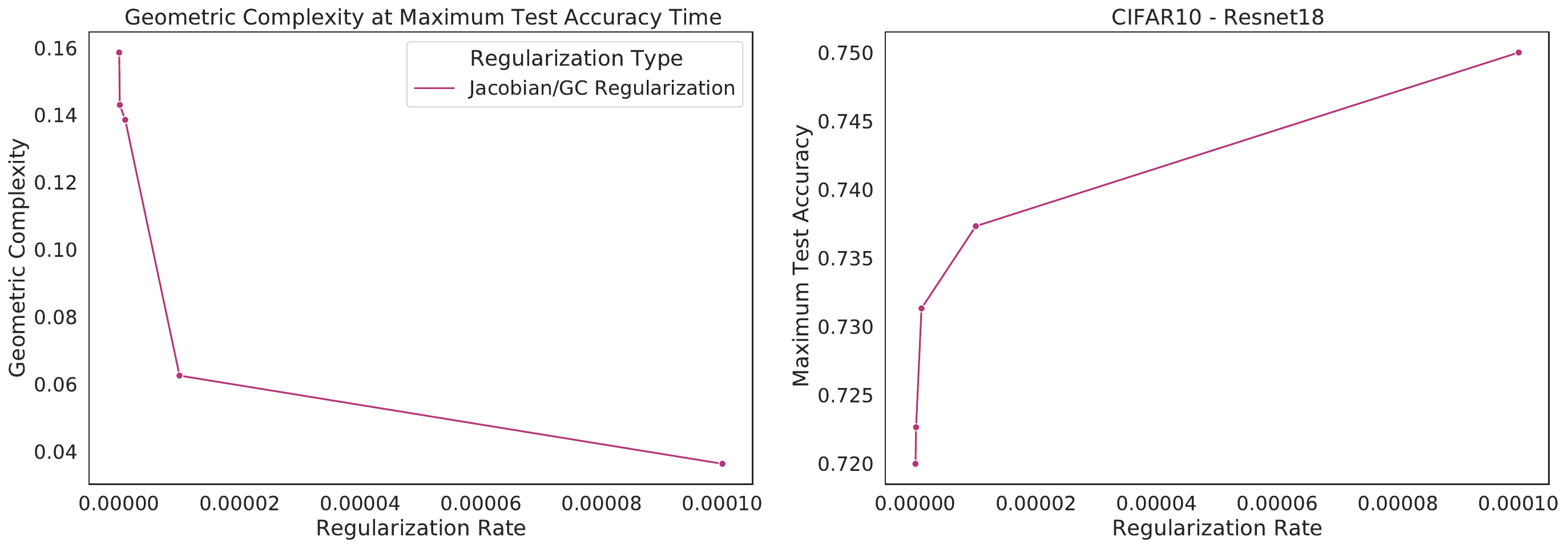}
  \caption{{\bf Test accuracy increases with explicit GC regularization on CIFAR10}: We trained  a selection of ResNet18 with learning rate of 0.02, batch size of 128, for 10000 steps. We regularized the loss by adding to it the gradient penalty $\alpha /B \sum_{x \in B}\|\nabla_x f_\theta(x)\|^2_F$ where $f_\theta(x)$ is the logit network. For each regularization rate  $\alpha \in [0, 0.0000001, 0.000001, 0.00001, 0.0001]$,  we trained only one time with a single random seed, and the training had to be stopped before reaching peak test accuracy because of the heavy computational time due to this regularization.}
  \label{fig:explicit_gc_regularization_cifar}
\end{figure}

\clearpage

\subsection{Separate L2, flatness, and spectral regularization experiments for CIFAR10}
\label{appendix:additional_experiments_for_explicit_regularization}

For the sake of space in Fig. 3 (right) in the main paper, we used the same regularization rate range for all types of explicit regularization we tried. %This is sub-optimal because each type of regularization works best in different regularization rate range. 
In this section, we perform the experiments on a targeted range for each regularization type, leading to clearer plots (Fig. \ref{fig:explicit_l2_regularization_cifar}, Fig. \ref{fig:flatness_regularization_cifar}, and Fig. \ref{fig:spectral_regularization_cifar}). 

\begin{figure}[h]

  \centering
  \includegraphics[width=0.8
  \linewidth]{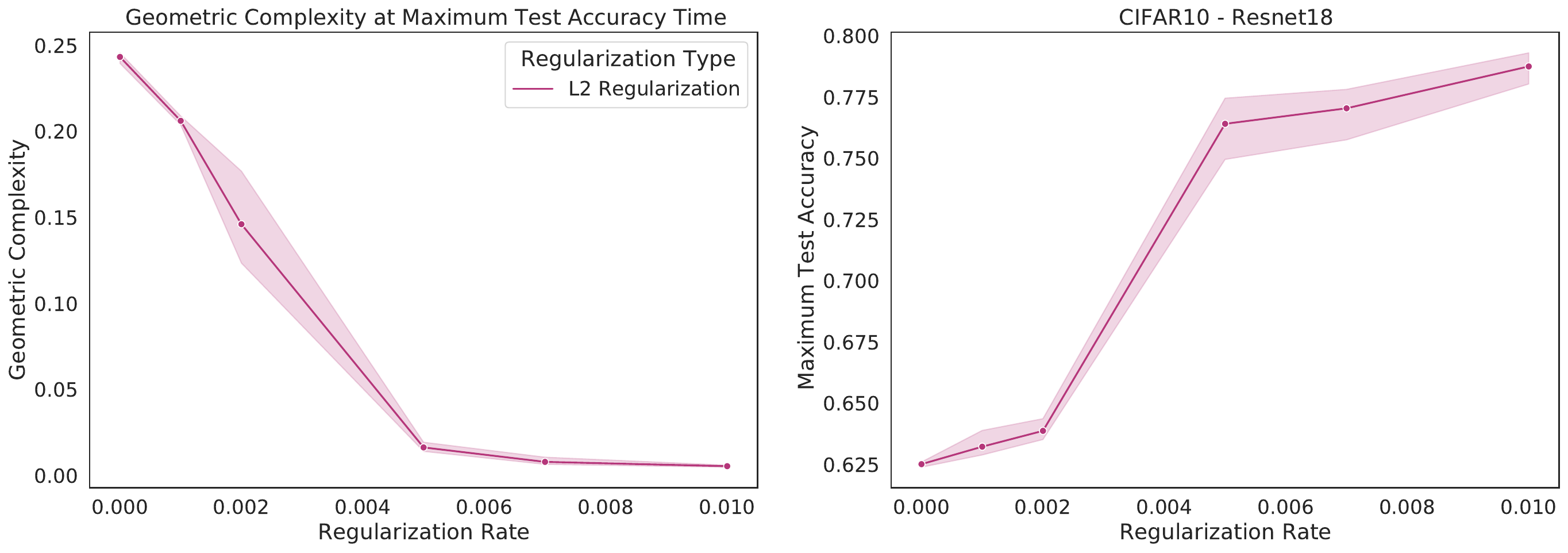}
  \caption{{\bf GC decreases with explicit L2 regularization on CIFAR10}: We trained  a selection of ResNet18 with learning rate of 0.02, batch size of 512, for 10000 steps. We regularized the loss by adding to it the standard L2 loss penalty. For each regularization rate  $\alpha \in [0, 0.001, 0.002, 0.005, 0.007, 0.01]$, we trained 3 different times with a different random seed. }
  \label{fig:explicit_l2_regularization_cifar}
\end{figure}

\begin{figure}[h]

  \centering
  \includegraphics[width=0.8
  \linewidth]{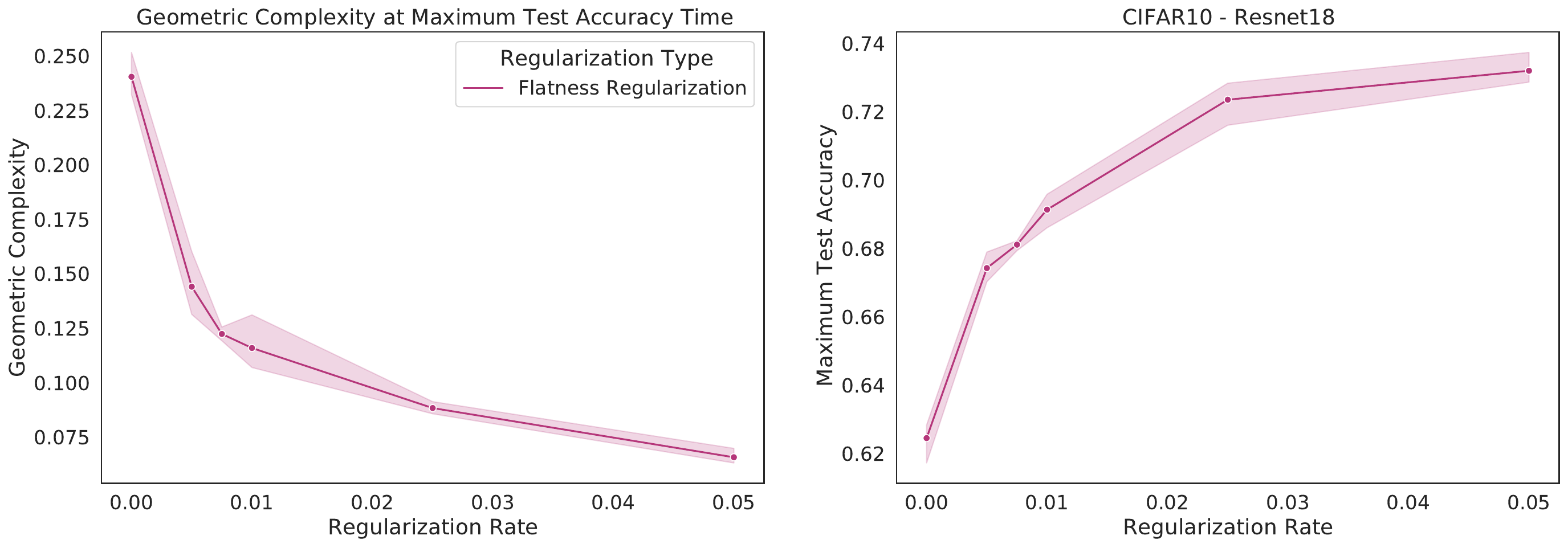}
  \caption{{\bf GC decreases with explicit flatness regularization on CIFAR10}: We trained  a selection of ResNet18 with learning rate of 0.02, batch size of 512, for 10000 steps. We regularized the loss by adding to it the gradient penalty $\alpha \|\nabla_\theta L_B(\theta)\|^2$ where $L_B$ is the batch loss. For each regularization rate  $\alpha \in [0, 0.005, 0.0075, 0.01, 0.025, 0.05]$, we trained 3 different times with a different random seed. }
  \label{fig:flatness_regularization_cifar}
\end{figure}

\begin{figure}[h]

  \centering
  \includegraphics[width=0.8
  \linewidth]{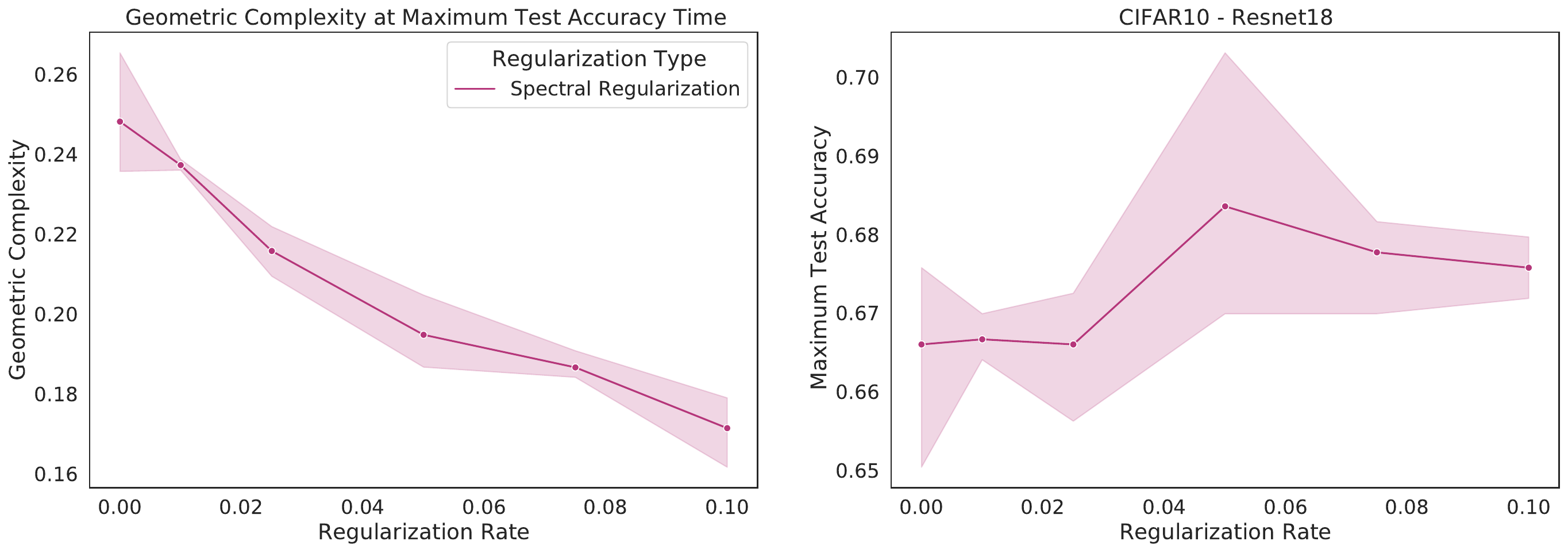}
  \caption{{\bf Geometric complexity decreases with spectral regularization on CIFAR10:} We trained  a selection of ResNet18 on CIFAR10 with learning rate of 0.02, batch size of 512, for 100000 steps. For each regularization rate  $\alpha \in [0, 0.01, 0.025, 0.05, 0.075, 0.1]$, we trained 3 different times with a different random seed.}
  \label{fig:spectral_regularization_cifar}
\end{figure}

\clearpage

\subsection{Geometric complexity in the presence of multiple tuning mechanisms}\label{appendix:real_life_gc_regularization}

For most of this paper, we studied the impact of tuning strategies, like the choice of initialization, hyper-parameters, or explicit regularization in isolation from other very common heuristics like learning rate schedules and data-augmentation. In this section, we reproduce the implicit regularization effect of the batch size and the learning rate on GC (c.f. Fig.~4 in the main paper) while using these standard tricks to achieve better performance. The resulting models achieve performance closer to SOTA for the ResNet18 architecture. 

Although the learning curves are messier and harder to interpret (Fig. \ref{figure:tuned_cifar_learning_curves}) because of the multiple mechanisms interacting in complex ways, we still observe that the general effect of the learning rate (Fig. \ref{fig:lr_gc_accuracy_cifar10}) and the batch size (Fig. \ref{fig:batch_size_gc_accuracy_cifar10}) on geometric complexity is preserved in this context. More importantly, we also note that the sweeps with higher test accuracy solutions tend also to come with a lower geometric complexity, even in this more complex setting (Fig. \ref{fig:lr_gc_accuracy_cifar10} right and Fig. \ref{fig:batch_size_gc_accuracy_cifar10} right). Namely, models with higher test accuracy have correspondingly lower GC. Specifically in terms of implicit regularization, as the learning rate increases, the geometric complexity decreases and the maximum test accuracy increases. Similarly, smaller batch size leads to lower geometric complexity as well as higher test accuracy. 

\begin{figure}[h]

  \centering
    \includegraphics[width=1\linewidth]{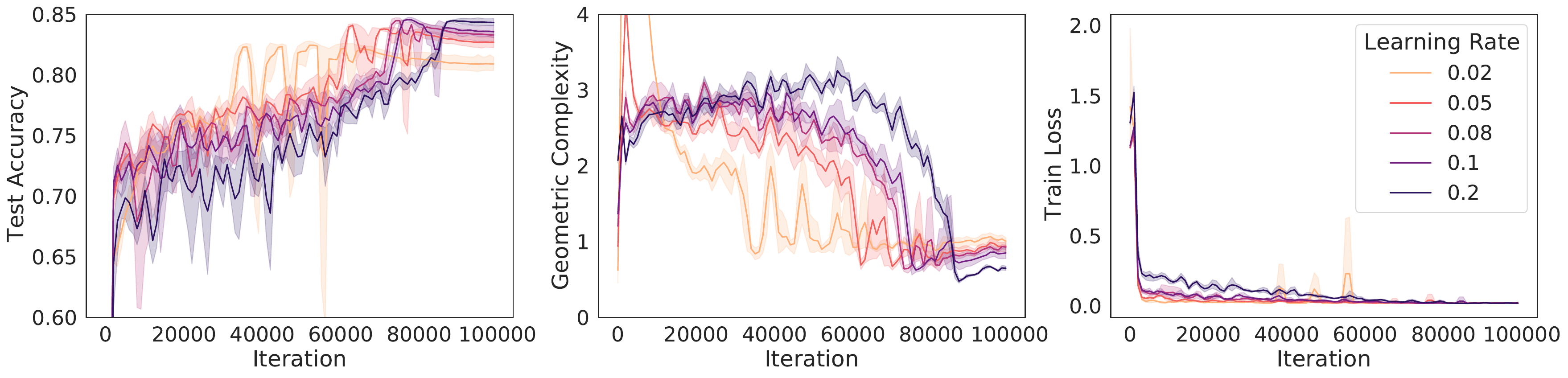}
    \includegraphics[width=1\linewidth]{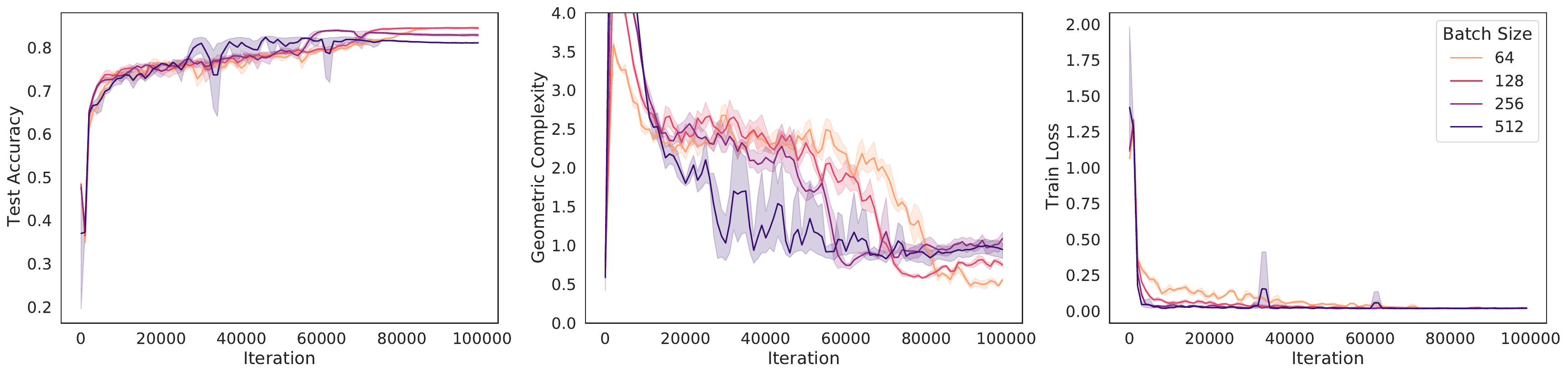}
  \caption{Impact of IGR when training ResNet18 on CIFAR10 with multiple tuning mechanisms including cosine learning rate scheduler, data augmentation and L2 regularization. Note that the GC is computed on batches of size 128 which leads to a lot of variance in the estimate. \textbf{Top row:} As IGR increases through higher learning rates, GC decreases. \textbf{Bottom row:} Similarly, lower batch size leads to decreased GC.}
  \label{figure:tuned_cifar_learning_curves}
\end{figure}

\begin{figure}[h]

  \centering
  \includegraphics[width=1
  \linewidth]{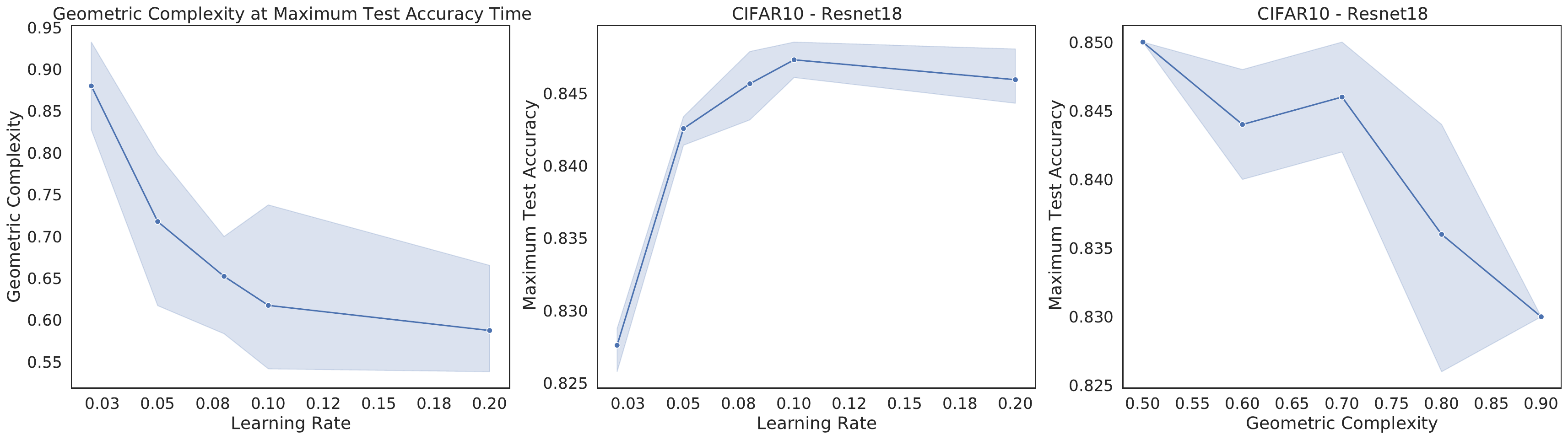}
  \caption{{\bf GC decreases as learning rate and model test accuracy increases on CIFAR10}: We trained a collection of ResNet18 models on CIFAR10 with varying initial learning rates $h \in [0.02, 0.05, 0.08, 0.1, 0.2]$ and cosine learning rate schedule. Each job was trained with SGD without momentum for 100000 steps, with batch size 128 and L2 regularized loss with regularization rate 0.005. We also included data augmentation in the form of random flip. Test accuracy is reported as best test accuracy during training. GC is computed during training on the training batches, which produces a large variance in the estimate when the batch size is small.}
  \label{fig:lr_gc_accuracy_cifar10}
\end{figure}

\begin{figure}[h]

  \centering
  \includegraphics[width=1
  \linewidth]{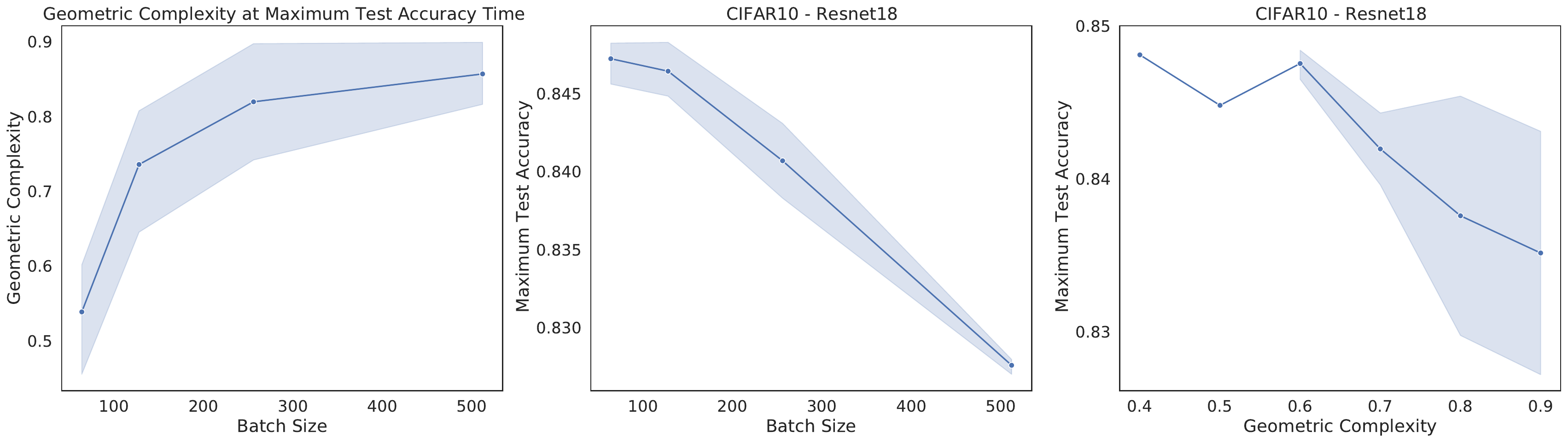}
  \caption{{\bf GC increases as batch size increases on CIFAR10}: We trained a collection of ResNet18 models on CIFAR10 with varying batch sizes of 64, 128, 256, and 512. Each job was trained with SGD without momentum for 100000 steps, with cosine learning rate scheduler initialized at 0.02 and L2 regularized loss with regularization rate 0.005. We also included data augmentation in the form of random flip. Test accuracy is reported as best test accuracy during training. GC is computed during training on the training batches, which produces a large variance in the estimate when the batch size is small.}
  \label{fig:batch_size_gc_accuracy_cifar10}
\end{figure}

\clearpage

\subsection{Geometric complexity in the presence of momentum}\label{appendix:additional_experiments_momentum}

We replicate the implicit and explicit regularization experiments using SGD \emph{with momentum}, which is widely used in practise. The conclusion remains the same as for vanilla SGD: More regularization (implicit through batch size or learning rate or explicit through flatness, spectral, and L2 penalties) produces solutions with higher test accuracy and lower geometric complexity.

\begin{figure}[h]

  \centering
  \includegraphics[width=1
  \linewidth]{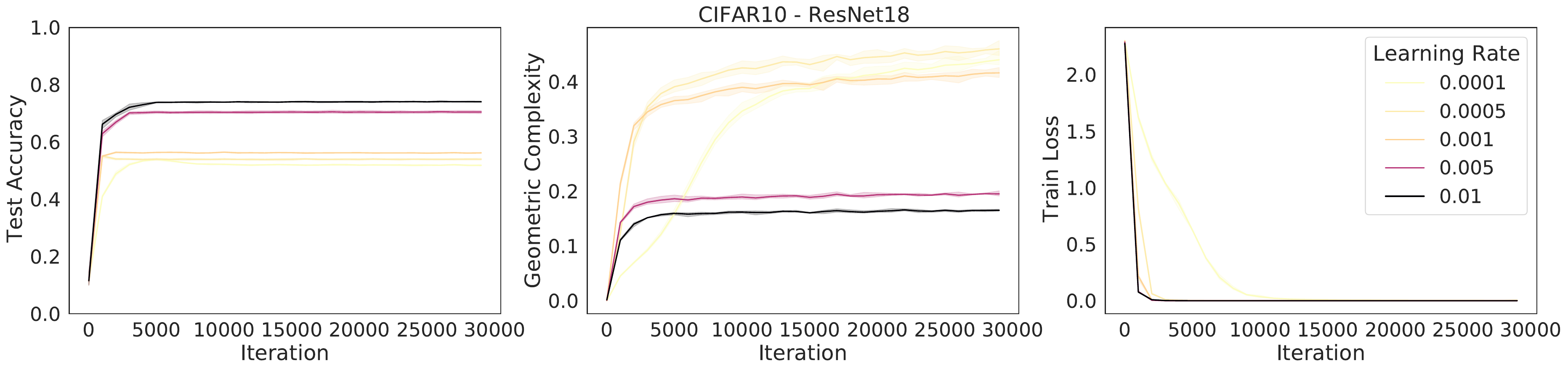}
  \caption{{\bf Geometric complexity decreases with higher learning rates on CIFAR10 trained with momentum}: We trained  a selection of ResNet18 with  batch size of 512 for 30000 steps using SGD with a momentum of 0.9. For each learning rate $\alpha \in [0.0001, 0.0005, 0.001, 0.005, 0.01]$, we trained 3 different times with a different random seed.}
  \label{fig:momentum_flatness_regularization_mnist}
\end{figure}

\begin{figure}[h]
  \centering
  \includegraphics[width=1
  \linewidth]{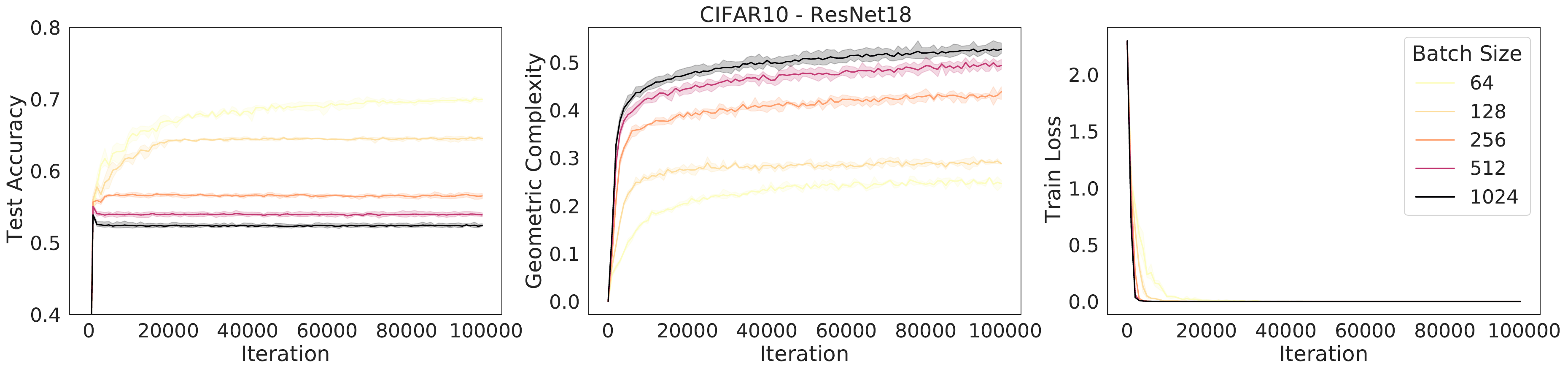}
  \caption{{\bf Geometric complexity decreases with lower batch sizes on on CIFAR10 trained with momentum}: We trained  a selection of ResNet18 with  learning rate of 0.0005 for 100000 steps using SGD with a momentum of 0.9. For each batch size in  $\alpha \in [64, 128, 256, 512, 1024]$, we trained 3 different times with a different random seed.}
  \label{fig:momentum_flatness_regularization_mnist}
\end{figure}

\begin{figure}[h]
  \centering
  \includegraphics[width=1
  \linewidth]{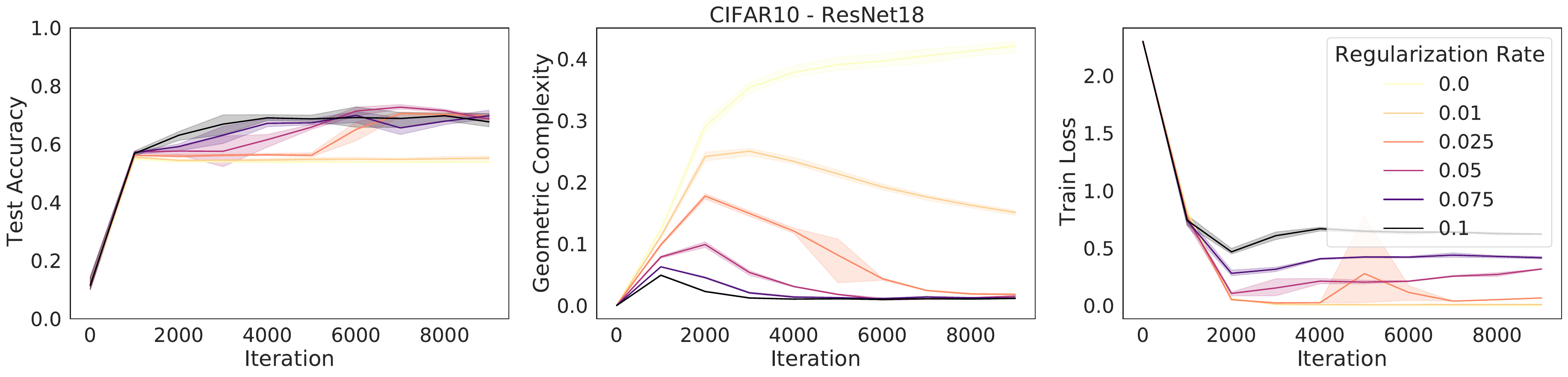}
  \caption{{\bf Geometric complexity decreases with increased L2 regularization on CIFAR10 trained with momentum}: We trained  a selection of ResNet18 with  learning rate of 0.0005 with batch size of 512 for 10000 steps using SGD with a momentum of 0.9. For each regularization rate in  $\alpha \in [0, 0.01, 0.025, 0.05, 0.075, 0.1]$, we trained 3 different times with a different random seed.}
  \label{fig:momentum_flatness_regularization_mnist}
\end{figure}

\begin{figure}[h]
  \centering
  \includegraphics[width=1
  \linewidth]{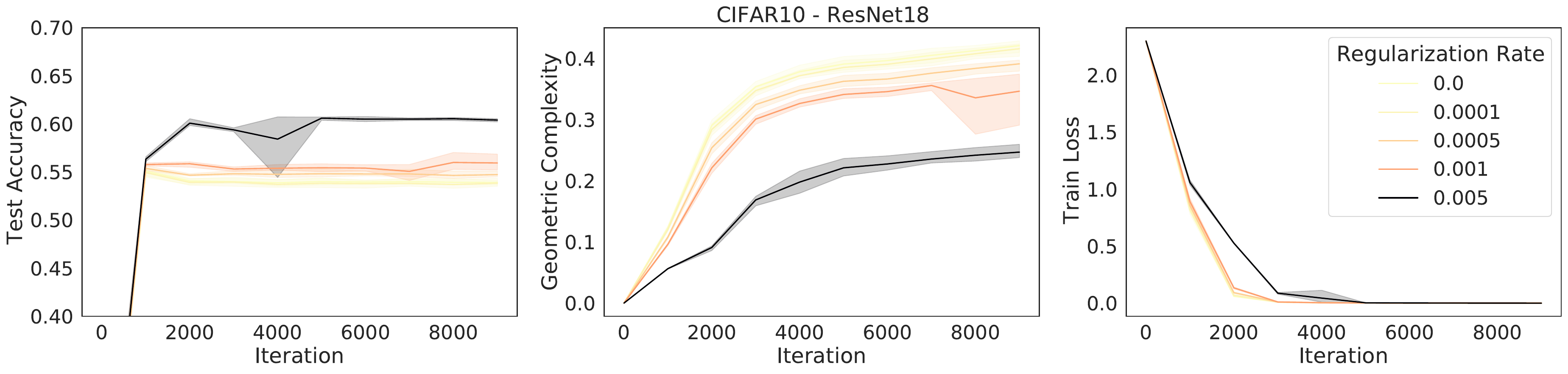}
  \caption{{\bf Geometric complexity decreases with increased flatness regularization on CIFAR10 trained with momentum}: We trained  a selection of ResNet18 with  learning rate of 0.0005 with batch size of 512 for 10000 steps using SGD with a momentum of 0.9.  We regularized the loss by adding to it the gradient penalty $\alpha \|\nabla_\theta L_B(\theta)\|^2$ where $L_B$ is the batch loss. For each regularization rate in  $\alpha \in [0, 0.0001, 0.0005, 0.001, 0.005]$, we trained 3 different times with a different random seed.}
  \label{fig:momentum_flatness_regularization_mnist}
\end{figure}

\begin{figure}[h]
  \centering
  \includegraphics[width=1
  \linewidth]{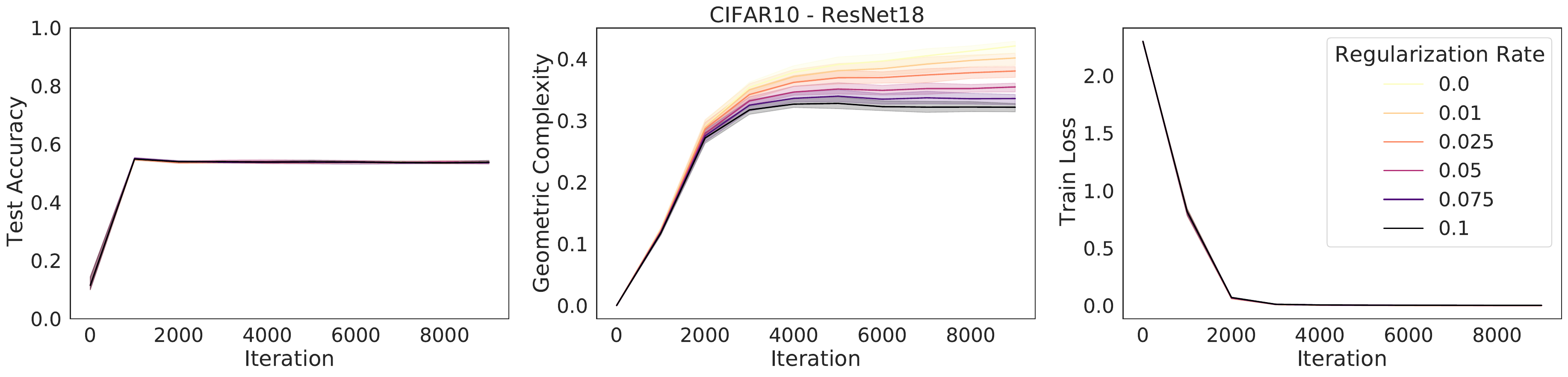}
  \caption{{\bf Geometric complexity decreases with increased spectral regularization on CIFAR10 trained with momentum}: We trained  a selection of ResNet18 with  learning rate of 0.0005 with batch size of 512 for 10000 steps using SGD with a momentum of 0.9.  We regularized the loss by adding to it the spectral norm penalty $\alpha/2\sum_i \sigma_{\max}(W_i)^2$ where $W_i$ are the layer weight matrices as described in \cite{miyato2018spectral}. For each regularization rate in  $\alpha \in [0, 0.01, 0.025, 0.05, 0.075, 0.1]$, we trained 3 different times with a different random seed.}
  \label{fig:momentum_flatness_regularization_mnist}
\end{figure}

\clearpage

%%%%%%%%%%%%%%%%%%%%%%%%%%%%%%%%%%%%%%%%%%%%%%%%%%%%%%%%%%%%%%%%%%%%%%%%%%%
\subsection{Geometric complexity in the presence of Adam}
\label{appendix:additional_experiments_adam}
%%%%%%%%%%%%%%%%%%%%%%%%%%%%%%%%%%%%%%%%%%%%%%%%%%%%%%%%%%%%%%%%%%%%%%%%%%%

We replicate the implicit and explicit regularization experiment using Adam, which is widely used in practice. In this case the conclusions are less clear than with vanilla SGD or SGD with momentum. While higher learning rates, and higher explicit flatness, L2, and spectral regularization still put a regularizing pressure on the geometric complexity for most of the training, the effect of batch size on geometric complexity is not clear. This may be that the local built-in re-scaling of the gradient sizes in Adam affects the pressure on the geometric complexity in complex ways when the batch size changes. 

\begin{figure}[h]
  \centering
  \includegraphics[width=1
  \linewidth]{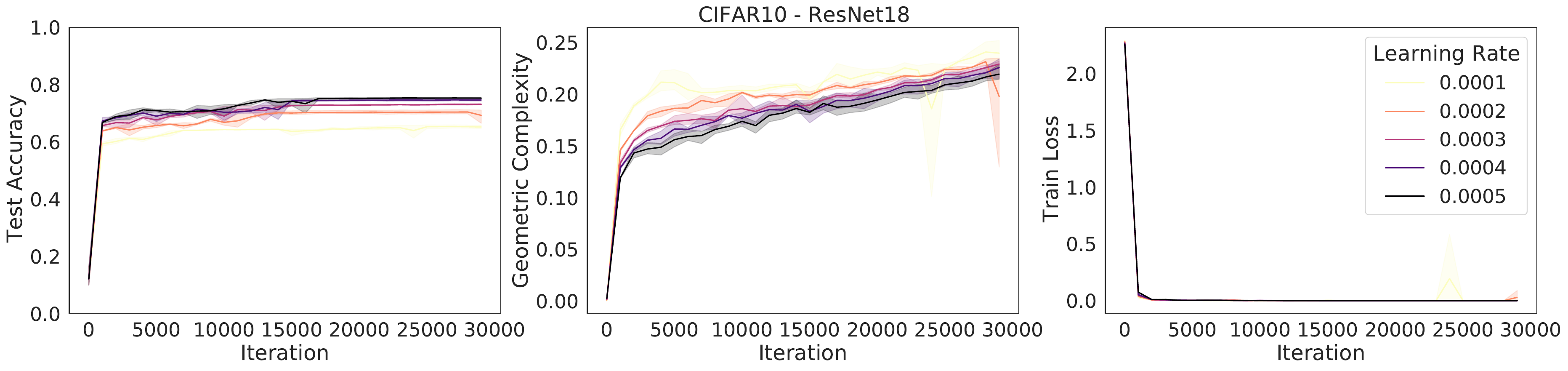}
  \caption{{\bf Geometric complexity decreases with higher learning rates on CIFAR10 trained using Adam with $b1=0.9$, $b2=0.999$}: We trained  a selection of ResNet18 with  batch size of 512 for 30000 steps using Adam with a momentum of 0.9. For each learning rate $\alpha \in [0.0001, 0.0002, 0.0003, 0.0004, 0.0005]$, we trained 3 different times with a different random seed.}
  \label{fig:adam_flatness_regularization_mnist}
\end{figure}

\begin{figure}[h]
  \centering
  \includegraphics[width=1
  \linewidth]{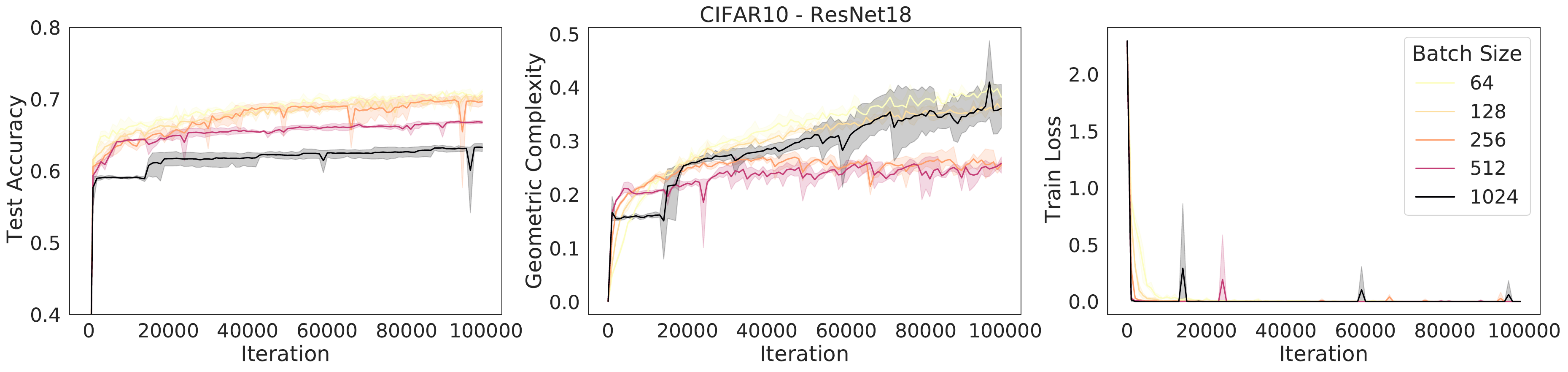}
  \caption{{\bf The relation between geometric complexity and batch size is ambiguous on CIFAR10 trained with Adam}: We trained  a selection of ResNet18 with  learning rate of 0.0001 for 100000 steps using using Adam with $b1=0.9$, $b2=0.999$. For each batch size in  $\alpha \in [64, 128, 256, 512, 1024]$, we trained 3 different times with a different random seed.}
  \label{fig:adam_flatness_regularization_mnist}
\end{figure}

\begin{figure}[h]
  \centering
  \includegraphics[width=1
  \linewidth]{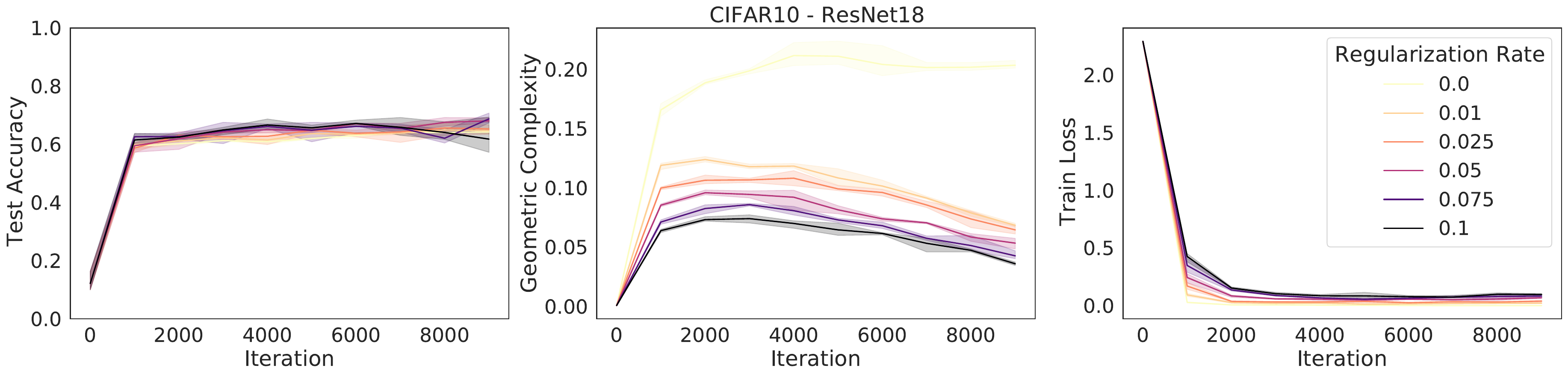}
  \caption{{\bf Geometric complexity decreases with increased L2 regularization on CIFAR10 trained with Adam}: We trained  a selection of ResNet18 with  learning rate of 0.0001 with batch size of 512 for 10000 steps using using Adam with $b1=0.9$, $b2=0.999$. For each regularization rate in  $\alpha \in [0, 0.01, 0.025, 0.05, 0.075, 0.1]$, we trained 3 different times with a different random seed.}
  \label{fig:adam_flatness_regularization_mnist}
\end{figure}

\begin{figure}[h]
  \centering
  \includegraphics[width=1
  \linewidth]{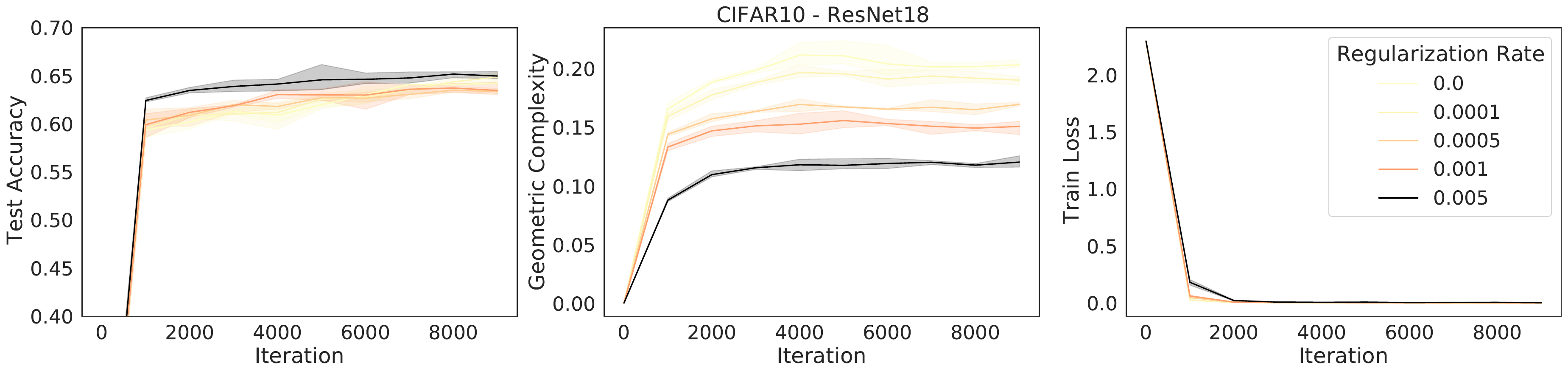}
  \caption{{\bf Geometric complexity decreases with increased flatness regularization on CIFAR10 trained with Adam}: We trained  a selection of ResNet18 with  learning rate of 0.0001 with batch size of 512 for 10000 steps using using Adam with $b1=0.9$, $b2=0.999$.  We regularized the loss by adding to it the gradient penalty $\alpha \|\nabla_\theta L_B(\theta)\|^2$ where $L_B$ is the batch loss. For each regularization rate in  $\alpha \in [0, 0.0001, 0.0005, 0.001, 0.005]$, we trained 3 different times with a different random seed.}
  \label{fig:adam_flatness_regularization_mnist}
\end{figure}

\begin{figure}[h]
  \centering
  \includegraphics[width=1
  \linewidth]{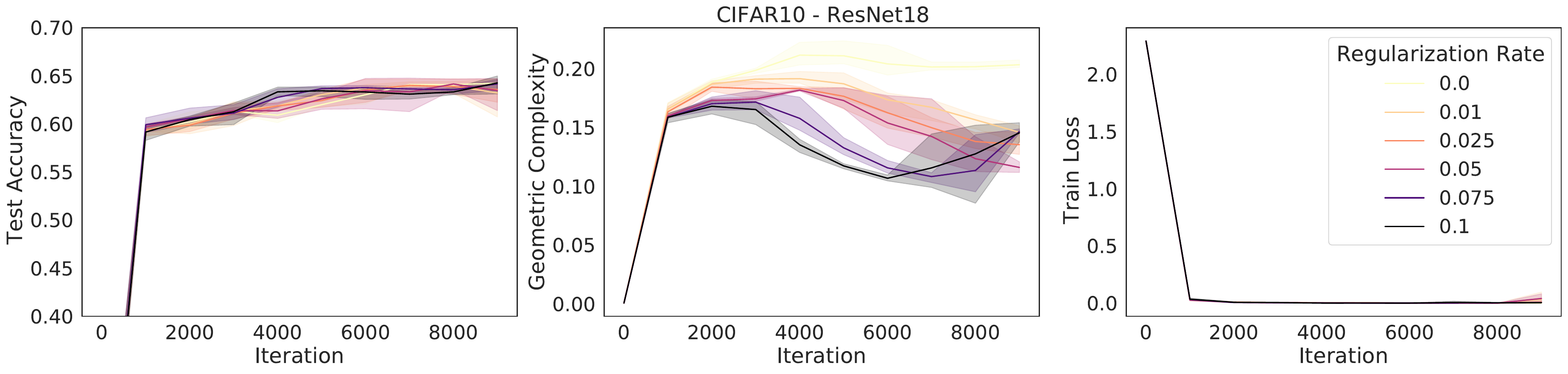}
  \caption{{\bf Geometric complexity decreases with increased spectral regularization on CIFAR10 trained with Adam}: We trained a selection of ResNet18 with learning rate of 0.0001 with batch size of 512 for 10000 steps using Adam with $b1=0.9$, $b2=0.999$. We regularized the loss by adding to it the spectral norm penalty $\alpha/2\sum_i \sigma_{\max}(W_i)^2$ where $W_i$ are the layer weight matrices as described in \cite{miyato2018spectral}. For each regularization rate in  $\alpha \in [0, 0.01, 0.025, 0.05, 0.075, 0.1]$, we trained 3 different times with a different random seed.}
  \label{fig:momentum_flatness_regularization_mnist}
\end{figure}

\newpage

\section{Comparison of the Geometric Complexity to other complexity measures}\label{appendix:comparison_survey}

One of the primary challenges in deep learning is to better understand mechanisms or techniques that correlate well with (or can imply a bound on) the generalization error for large classes of models. The standard approach of splitting the data into a train, validation and test set has become the de facto way to achieve such a bound. With this goal in mind, a number of complexity measures have been proposed in the literature with varying degrees of theoretical justification and/or empirical success. In this section we compare our geometric complexity measure with other, more familiar complexity measures such as the Rademacher complexity, VC dimension and sharpness-based measures.

\subsection{Rademacher Complexity}\label{appendix:rademacher_comparison}

Perhaps the most historically popular and widely known complexity measure is the Rademacher complexity \cite{bartlett2002model, bartlett2002rademacher, koltchinskii2001rademacher, koltchinskii2000rademacher}. Loosely speaking the Rademacher complexity measures the degree to which a class of functions $\mathcal{H}$ can fit random noise. The idea behind this complexity measure is that a more complex function space is able to generate more complex representation vectors and thus, on average, produce learned functions that are better able to correlate with random noise than a less complex function space. 

To make this definition more precise and frame it in the context of machine learning (see also \cite{mohri2018foundations}), given an input feature space $X$ and a target space $Y$, let $\mathcal{G}$ denote a family of loss functions \mbox{$L: \mathcal{Z} = X \times Y \to \mathbb{R}$} associated with a function class $\mathcal{H}$. Notationally,
$$
\mathcal{G} = \{g : (x, y) \mapsto L(h(x), y) : h \in \mathcal{H}\}.
$$
We define the \textit{empirical Rademacher complexity} as follows:

\begin{definition}[Empirical Rademacher Complexity] With $\mathcal{G}$ as above, let $S = \{z_1, z_2, \dots, z_m\}$ be a fixed sample of size $m$ of elements of $\mathcal{Z} = X \times Y$. The {\em empirical Rademacher complexity of $\mathcal{G}$} with respect to the sample $S$ is defined as: 
$$
\widehat{\textfrak{R}}_S(\mathcal{G}) = \mathbb{E}_{\bm{\sigma}} \left[\sup_{g\in \mathcal{G}} \dfrac{1}{m} \sum_{i=1}^m \sigma_ig(z_i) \right],
$$
where $\bm{\sigma} = (\sigma_1, \sigma_2, \dots, \sigma_m)^\intercal$, with the $\sigma_i$'s being independent uniform random variables which take values in $\{-1, +1\}$. These random variables $\sigma_i$ are called {\em Rademacher variables}.
\end{definition}

If we let $g_S$ denote the vector of values taken by function $g$ over the sample $S$, then the Rademacher complexity, in essence, measures the expected value of the supremum of the correlation of $g_S$ with a vector of random noise $\bm{\sigma}$; i.e., the empirical Rademacher complexity measures on average how well the function class $\mathcal{G}$ correlates with random noise on the set $S$. More complex families $\mathcal{G}$ can generate more vectors $g_S$ and thus better correlate with random noise on average, see \cite{mohri2018foundations} for more details.

Note that the empirical Rademacher complexity depends on the sample $S$. The Rademacher complexity is then an average of this empirical measure over the distribution from which all samples are drawn:

\begin{definition}[Rademacher Complexity] Let $\mathcal{D}$ denote the distribution from which all samples $S$ are drawn. For any integer $m \geq 1$, the {\em Rademacher complexity of $\mathcal{G}$} is the expectation of the empirical Rademacher complexity over all samples of size $m$ drawn according to $\mathcal{D}$: 
$$
\mathfrak{R}_m(\mathcal{G}) = \mathbb{E}_{S\sim\mathcal{D}^m} [\widehat{\mathfrak{R}}_S(\mathcal{G}) ].
$$

\end{definition}
The Rademacher complexity is distribution dependent and defined for any class of real-valued functions. However, computing it can be intractable for modern day machine learning models. Similar to the empirical Rademacher complexity, the geometric complexity is also computed over a sample of points, in this case the training dataset, and is well-defined for any class of differentiable functions. In contrast, the Rademacher complexity (and the VC dimension which we discuss below) measures the complexity for an entire hypothesis space, while the geometric complexity focuses only on  single functions. Furthermore, since the Geometric Complexity relies only on first derivatives of the learned model function making it much easier to compute. 
\subsection{VC dimension}\label{appendix:vcdimension_comparison}
The Vapnik–Chervonenkis (VC) dimension \cite{blumer1989learnability, chervonenkis1971theory, vapnik1974method} is another common approach to measuring the complexity of a class of functions $\mathcal{H}$ and is often easier to compute than the Rademacher Complexity, see \cite{mohri2018foundations} for further discussion on explicit bounds which compare the Rademacher complexity with the VC dimension. 

The \mbox{VC dimension} is a purely combinatorial notion and defined using the concept of a \textit{shattering} of a set of points. A set of points is said to be shattered by $\mathcal{H}$ if, no matter how we assign a binary label to each point, there exists a member of $\mathcal{H}$ that can perfectly separate the points; i.e., the growth function for $\mathcal{H}$ is $2^m$. The \mbox{VC dimension} of a class $\mathcal{H}$ is the size of the largest set that can be shattered by $\mathcal{H}$.

More formally, we have

\begin{definition}[VC dimension] Let $\mathcal{H}$ denote a class of functions on $X$ taking values in $\{-1, +1\}$. Define the growth function $\Pi_{\mathcal{H}}: \mathbb{N} \to \mathbb{N}$ as
$$
\Pi_{\mathcal{H}}(m) = \max_{\{x_1, \dots, x_m\}\subset X}\left| \{(h(x_1), \dots, h(x_m)) : h\in \mathcal{H}\}\right|.
$$
If $\Pi_{\mathcal{H}} = 2^m$ we say $\mathcal{H}$ shatters the set $\{x_1,\dots,x_m\}$. The {\em VC dimension of $\mathcal{H}$} is the size of the largest shattered set, i.e. 
$$
\textrm{VCdim}(\mathcal{H}) = \max\{m : \Pi_{\mathcal{H}}(m) = 2^m\}
$$
If there is no largest $m$, we define $\textrm{VCdim}(\mathcal{H}) = \infty$.
\end{definition}

The VC dimension is appealing partly because it can be upper bounded for many classes of functions (see for example, \cite{bartlett2019nearly}). Similar to the Rademacher complexity, the VC dimension is measured on the entire hypothesis space. The Geometric Complexity, in contrast, is instead measured for given function within the hypothesis space allowing for more direct comparison between elements within the class $\mathcal{H}$. Computing the VC dimension for a given function set $\mathcal{H}$ may not be always convenient since, by definition, it requires computing the growth function $\Pi_{\mathcal{H}}(m)$ for all subsets of order $m \geq 1$; whereas, the Geometric Complexity relies only on first derivatives and is much easier to precisely compute. 

\subsection{Sharpness and Hessian related measures}\label{appendix:sharpness_comparison}
Another broad category of generalization measures concerns the concept of ``sharpness'' of the local minima; for example, see \cite{hochreiter1997flat, keskar2016large,  mcallester1999pac}. Such complexity measures aim to quantify the sensitivity of the loss to perturbations in model parameters. Here a flat minimizer is a point in parameter space where the loss varies only slightly in a relatively large neighborhood of the point. Conversely, the variation of the loss function is less controlled in a neighborhood around a sharp minimizer. The idea is that for sharp minimizers the training function is more sensitive to perturbations in the model parameters and thus negatively impacts the model's ability to generalize; see also \cite{achille2018emergence} which argues that flat solutions have low information content. 

The sharpness of a minimizer can be
characterized by the magnitude of the eigenvalues of the Hessian of the loss function. However, since the Hessian requires two derivatives this can be computationally costly in most deep learning use cases. To overcome this drawback,  \cite{keskar2016large} suggest a metric that explores the change in values of the loss function $f$ within small neighborhoods of points. More precisely, let $\mathcal{C}_{\epsilon}$ denote a box around an optimal point in the domain of $f$ and let $A\in \mathbb{R}^{n\times p}$ be a matrix whose columns are randomly generated. The constraint $\mathcal{C}_{\epsilon}$ is then defined as:
$$
\mathcal{C}_{\epsilon} = \{z \in \mathbb{R}^p : -\epsilon(|(A^+x)_i|) \leq z_i \leq \epsilon(|(A^+x)_i| + 1) \quad \forall i \in \{1,2,\dots,p\} \}
$$
where $A^+$ denotes the pseudo-inverse of $A$ and $\epsilon$ controls the size of the box. Keskar et al.~\cite{keskar2016large} then define ``sharpness'' by

\begin{definition}[Sharpness]
Given $x \in \mathbb{R}^n$, $\epsilon >0$ and $A\in \mathbb{R}^{n\times p}$, the \emph{$(\mathcal{C}_{\epsilon})$-sharpness of $f$ at $x$ is defined as}
$$
\phi_{x, f}(\epsilon, A) = \dfrac{(\max_{y\in \mathcal{C}_{\epsilon}} f(x + Ay)) - f(x)}{1 + f(x)} \times 100
$$
\end{definition}

Another related complexity measure is the \emph{effective dimension} which is computed using the spectral decomposition of the loss Hessian ~\cite{maddox2020rethinking}. Since the effective dimension relies on the Hessian, it causes flat regions in the loss surface to also be regions of low complexity w.r.t. this measure. 

Effective dimensionality \cite{mackay1991bayesian} was originally proposed to measure the dimensionality of the parameter space determined by the data and is computed using the eigenspectrum of the Hessian of the training loss. 

\begin{definition}[Effective dimensionality of a symmetric matrix] The {\em effective dimensionality} of a symmeteric matrix $A \in \mathbb{R}^{k \times k}$ is defined as 
$$
N_{\text{eff}}(A, z) = \sum_{i=1}^k \dfrac{\lambda_i}{\lambda_i + z}
$$
where $\lambda_i$ are the eigenvalues of $A$ and $z >0$ is a regularization constant.
\end{definition}

When used in the context of measuring the effective dimension of a neural network $f(x; \theta)$ with inputs $x$ and parameters $\theta \in \mathbb{R}^k$, we take $A$ to be the Hessian of the loss function; i.e., the $k \times k$ matrix of second derivatives of the loss $\mathcal{L}$ over the data distribution $\mathcal{D}$ defined as $\text{Hess}_{\theta} = -\nabla^2 \mathcal{L}(\theta, \mathcal{D})$. Furthermore, the computation of the effective dimension involves both double derivatives (to compute the Hessian) but also evaluation of the eigenvalues of the resulting matrix. This can introduce a prohibitive cost in computation for many deep learning models.

The Geometric Complexity differs from these complexity measures in a meaningful way. Sharpness and effective dimension are ultimately concerned with the behavior and  derivatives of the loss function with respect to the parameter space. The Geometric Complexity, however, is measured using derivatives of the learned model function with respect to the model inputs. Furthermore, the Geometric Complexity is computed using only a single derivative, making it computationally tractable to measure and track.

That being said, these sharpness measures and the Geometric Complexity are also quite related. For example, as explained by the Transfer Theorem in Section \ref{section:implicit_regularization}, for neural networks these flat regions are also the regions of low loss gradient and thus of low GC. Furthermore, similar to the GC, the effective dimension can also capture the double descent phenomena and, in ~\cite{maddox2020rethinking}, the authors argue the effective dimension provides an efficient mechanism for model selection.
\end{document}